\newtheorem{theorem}{Theorem}
\newtheorem*{theorem*}{Theorem}
\newtheorem{result}[theorem]{Result}
\newtheorem{corollary}[theorem]{Corollary}
\newtheorem{lemma}{Lemma}[section]
\newtheorem{proposition}[lemma]{Proposition}
\theoremstyle{definition}
\newtheorem{assumption}[lemma]{Assumption}
\newtheorem{definition}[lemma]{Definition}
\crefname{assumption}{Assumption}{Assumptions}
\newcommand{\MP}{\mathrm{MP}}
\DeclareMathOperator{\ST}{ST}
\DeclareMathOperator{\dirac}{\updelta}
\begin{document}

\title{Precise Dynamics of Diagonal Linear Networks:\\A Unifying Analysis by Dynamical Mean-Field Theory}

\author[1,2]{Sota Nishiyama\thanks{\texttt{snishiyama@g.ecc.u-tokyo.ac.jp}}}
\author[1,2,3]{Masaaki Imaizumi}

\affil[1]{The University of Tokyo}
\affil[2]{RIKEN Center for Advanced Intelligence Project}
\affil[3]{Kyoto University}

\maketitle

\begin{abstract}
    Diagonal linear networks (DLNs) are a tractable model that captures several nontrivial behaviors in neural network training, such as initialization-dependent solutions and incremental learning. These phenomena are typically studied in isolation, leaving the overall dynamics insufficiently understood. In this work, we present a unified analysis of various phenomena in the gradient flow dynamics of DLNs. Using Dynamical Mean-Field Theory (DMFT), we derive a low-dimensional effective process that captures the asymptotic gradient flow dynamics in high dimensions. Analyzing this effective process yields new insights into DLN dynamics, including loss convergence rates and their trade-off with generalization, and systematically reproduces many of the previously observed phenomena. These findings deepen our understanding of DLNs and demonstrate the effectiveness of the DMFT approach in analyzing high-dimensional learning dynamics of neural networks.
\end{abstract}

\tableofcontents

\section{INTRODUCTION}

The training dynamics of neural networks have attracted significant attention in deep learning theory. It has been suggested that the dynamics induced by training algorithms strongly influence the generalization performance of neural networks. This effect is captured in the idea of \emph{implicit bias} \citep{neyshabur2015search}, in which the algorithm selects a certain solution among many induced by nonconvexity of the loss and overparameterization of networks. Accordingly, many recent works have studied the interplay between models and optimizers, aiming to characterize the resulting implicit biases  \citep{neyshabur2017implicit,soudry2018implicit,arora2019implicit,bartlett2021deep}.
Moreover, understanding the convergence speed and timescales of the training dynamics contributes to efficient training of high-performance models in practice, especially in the context of modern large-scale neural networks in which the training is stopped at a compute-optimal point \citep{kaplan2020scaling}.

For a refined understanding of the dynamics, \emph{diagonal linear networks} (DLNs) have emerged as a tractable theoretical model that captures several nontrivial behaviors of learning algorithms, making them a valuable tool for studying neural network dynamics. Recent studies have uncovered various phenomena, such as the dependence of solutions on algorithmic parameters \citep{woodworth2020kernel,nacson2022implicit,pesme2021implicit,even2023sgd} and incremental learning dynamics \citep{berthier2023incremental,pesme2023saddletosaddle}.

One of the challenges in the study of DLNs is that relationships among these phenomena remain unclear. This is because existing analyses often rely on case-specific techniques. In addition, there are aspects of the dynamics that have not yet been investigated, such as convergence speed to long-term behaviors, and these unexplored elements hinder a comprehensive understanding of the overall dynamics.
Specifically, we raise the following questions:
\begin{enumerate}
    \item Which dynamical regimes and timescales arise under different initializations, and how does performance evolve in each?
    \item What solution do trained DLNs converge to, and at what rate?
\end{enumerate}

\paragraph{Contributions.}
In this work, we develop a unified framework to describe DLN dynamics and conduct a comprehensive analysis of diverse phenomena. Specifically, by leveraging Dynamical Mean-Field Theory (DMFT), which provides a precise characterization in high-dimensional limits, we derive a system of equations that characterizes the gradient flow training dynamics of DLNs in sparse regression.
By analyzing the derived equations, we elucidate the long-time behavior and timescale structure of the learning process, thereby deriving insights into the dynamics and implicit bias of DLNs.

Our main contributions are summarized as follows.
\begin{itemize}
    \item We identify distinct dynamical regimes which depend on training time and initialization scales. For large initialization, we observe a sharp transition from memorizing solutions (fit all data but generalize poorly) to generalizing solutions; for small initialization, an early \emph{search} plateau and \emph{incremental learning} that follows it.
    \item We characterize the fixed point of the gradient flow and its dependence on initialization, showing that \emph{a smaller initialization leads to better generalization}, close to minimum $\ell_1$ norm solutions. This provides an alternative derivation of the result from \citet{woodworth2020kernel}.
    \item We derive convergence rates of losses in time and show that \emph{a smaller initialization leads to slower convergence}.
    \item Together with the fixed point and convergence rate result, we establish \emph{a trade-off between optimization speed and generalization performance}.
\end{itemize}
Overall, our findings deepen the theoretical understanding of DLNs and highlight the utility of DMFT as a powerful tool for probing the dynamics of high-dimensional, nonlinear learning systems.

\begin{figure*}[t]
    \begin{center}
        \begin{subfigure}{0.45\textwidth}
            \includegraphics[width=\textwidth]{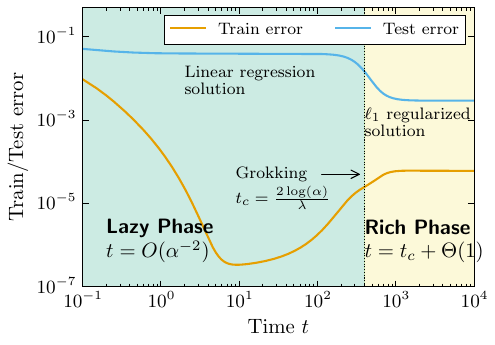}
            \caption{Large initialization ($\alpha \gg 1$).}
        \end{subfigure}
        \begin{subfigure}{0.45\textwidth}
            \includegraphics[width=\textwidth]{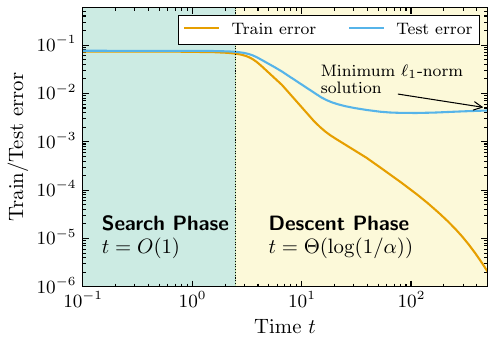}
            \caption{Small initialization ($\alpha \ll 1$).}
        \end{subfigure}
        \caption{Schematic illustrations of the timescale structures of gradient flow dynamics in DLNs.}
        \label{fig:timescales}
    \end{center}
\end{figure*}

\subsection{Related Work}
\label{sec:relatedwork}

\paragraph{Diagonal Linear Networks.}
DLNs were studied in \citet{gunasekar2018implicit} as a simple model that captures the rich implicit bias of neural networks. \citet{vaskevicius2019implicit} showed that DLNs trained with gradient descent and small initialization can implicitly perform sparse recovery. \citet{woodworth2020kernel} studied the implicit bias of gradient flow training for DLNs and uncovered a transition between the \emph{kernel regime} (large initialization) and the \emph{rich regime} (small initialization), showing that smaller initialization leads to a sparser, richer bias. \citet{moroshko2020implicit} studied similar phenomena in classification settings.
DLNs have since become a testbed to gain insight into the implicit bias of various optimization algorithms and their hyperparameter choices, including the relative scale of layers \citep{azulay2021implicit}, gradient noise in stochastic gradient descent (SGD) \citep{haochen2021shape,pesme2021implicit,even2023sgd}, step size \citep{nacson2022implicit}, early stopping time \citep{li2021implicit}, and other optimizers \citep{papazov2024leveraging,clara2025training}.

Beyond implicit bias, several works investigated the training dynamics of DLNs. \citet{berthier2023incremental} and \citet{pesme2021implicit} identified an \emph{incremental learning} or \emph{saddle-to-saddle dynamics} in DLNs with small initialization, where the parameter coordinates are sequentially activated to learn the true solution.

\paragraph{Dynamical Mean-Field Theory.}
Dynamical mean-field theory (DMFT) is a technique to reduce high-dimensional random dynamics into a low-dimensional effective process characterized by a system of integro-differential equations. Originally developed in statistical physics to analyze the Langevin dynamics of spin glasses \citep{sompolinsky1981dynamic,sompolinsky1982relaxational,crisanti1993spherical,cugliandolo1993analytical}, DMFT has been applied to a wide range of problems involving many degrees of freedom with random interactions; see \citet{cugliandolo2024recent} for a recent survey. Over the last decade, DMFT has been applied to several high-dimensional optimization and estimation problems \citep{agoritsas2018outofequilibrium,saraomannelli2020marvels,mignacco2020dynamical,bordelon2022selfconsistent,montanari2025dynamical}, with rigorous derivations established in certain settings \citep{celentano2021highdimensional,gerbelot2024rigorous,fan2025dynamical}.

The most closely related to ours is that of \citet{montanari2025dynamical}, who applied DMFT to wide two-layer networks and uncovered a timescale separation for generalization and overfitting.
Our work differs from theirs in several aspects.
Regarding the model, while two-layer DLNs analyzed in this work can be interpreted as a special case of a general two-layer neural network, they consider a narrow (compared to the input dimension) two-layer neural network with fully-connected first layer, while the DLN we consider has a diagonal first layer with width equal to the input dimension. We also consider a weight decay term not considered in their work.
Regarding the analytical focus, we analyze timescale structures of gradient flow training in \cref{sec:timescale} in a similar spirit to \citet{montanari2025dynamical}; however, the result is qualitatively different.
In addition, we go beyond the timescale analysis and analyze long-time behaviors of the dynamics in \cref{sec:longtime}.

\section{PRELIMINARIES}

\subsection{Notation}

For vectors $\bx = (x_1,...,x_d)^\transpose,\by = (y_1,...,y_d)^\transpose \in \reals^d$, $\bx \odot \by$ denotes entry-wise multiplication, i.e., $\bx \odot \by = (x_1 y_1, \dots, x_d y_d)^\transpose \in \reals^d$. For $\bx\in\reals^d$ and $L\in\naturals$, $\bx^L$ denotes entry-wise power. $\idmat_d \in \reals^{d\times d}$ denotes the $d \times d$ identity matrix. $\bone_d \in \reals^{d}$ denotes the all-ones vector $\bone_d = (1,\dots,1)^\transpose$. $\GP(0,Q)$ denotes a centered Gaussian process with covariance kernel $Q$. We denote by $\ST$ the soft thresholding function $\ST(x;\tau) \coloneqq \sign(x) \max\{\abs{x} - \tau,0\}$.

\subsection{Setup}
\label{sec:setup}

\paragraph{Data Model.}
We consider $n$ i.i.d.\ samples $(\bx_\mu,y_\mu) \in \reals^d \times \reals$ indexed by $\mu=1,\dots,n$. The input vectors $\bx_\mu$ are sampled independently from the isotropic Gaussian distribution $\normal(0,\idmat_d/d)$. Let $\bX \in \reals^{n \times d}$ be a data matrix whose $\mu$-th row is $\bx_\mu$.
The labels $y_\mu$ follow a linear model $y_\mu = \bw^{*\transpose}\bx_\mu + \xi_\mu$, where $\xi_\mu \sim \normal(0,\sigma^2)$ is some independent noise with mean $0$ and variance $\sigma^2$, and $\bw^* \in \reals^d$ is a target vector. The empirical distribution of the entries of $\bw^*$ converges to $P_*$ as $d \to \infty$. We define a label vector $\by \coloneqq (y_1,\dots,y_n)^\transpose \in\reals^n$ and a scale term $\rho^2 \coloneqq \norm{\bw^*}_2^2 / d$.

\paragraph{Diagonal Linear Network.}
We consider a two-layer diagonal linear network:
\begin{align}
f(\bx;\bu,\bv) = \bw^\transpose \bx \,, \quad \bw = \frac{1}{2}(\bu^2-\bv^2) \,, \label{eq:dln}
\end{align}
for $\bu,\bv \in \reals^d$ and $\bu^2,\bv^2$ denote entry-wise squares. This can be considered as a two-layer linear neural network with diagonal first layers $\diag(\bu),\diag(\bv)$ and second layers $\bu,\bv$. Although it represents only linear functions in $\bx$, the nonlinear reparameterization of $\bw$ induces nontrivial dynamics.

\paragraph{Training Algorithm.}
We train the DLN by minimizing the following regularized quadratic loss:
\begin{align}
L(\bu,\bv) = \frac{1}{2 n} \norm{\by - \bX \bw}_2^2 + \frac{\lambda}{2 d}(\norm{\bu}_2^2 + \norm{\bv}_2^2), \label{eq:loss}
\end{align}
where $\lambda \geq 0$ is a regularization parameter.
We consider full-batch gradient flow (continuous-time gradient descent) for time $t \geq 0$ to minimize the loss:
\begin{align}
\diff{}{t} (\bu(t), \bv(t)) & = -\frac{d}{2} (\nabla_{\bu},\nabla_{\bv}) L(\bu(t),\bv(t)) \,, \label{eq:df_dln}
\end{align}
with initial values $\bu(0)=\bv(0)=\alpha \bone_d$ for $\alpha > 0$. We denote the loss at time $t$ by $L(t) \coloneqq L(\bu(t),\bv(t))$.

\paragraph{Proportional Asymptotics.}
We analyze the proportional asymptotic regime where $n,d\to\infty$ with $n/d\to \delta \in (0,\infty)$.
Analyses in this regime have yielded significant insights into high-dimensional learning systems through exact predictions of asymptotic performance via tools from statistical physics and random matrix theory \citep{zdeborova2016statistical,mei2022generalization}.

\subsection{Overview of Main Findings}
\label{sec:overview}

\begin{figure*}[t]
    \begin{center}
        \begin{subfigure}[t]{0.32\textwidth}
            \includegraphics[width=\textwidth]{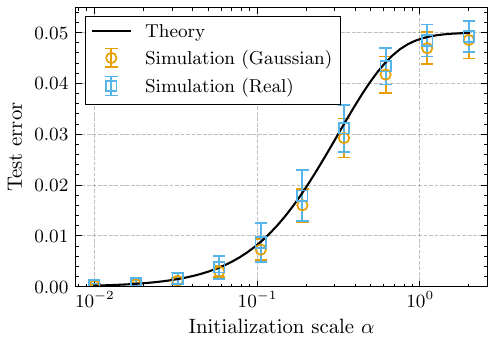}
            \caption{Test errors at fixed points.}
            \label{fig:fixedpoint}
        \end{subfigure}
        \begin{subfigure}[t]{0.32\textwidth}
            \includegraphics[width=\textwidth]{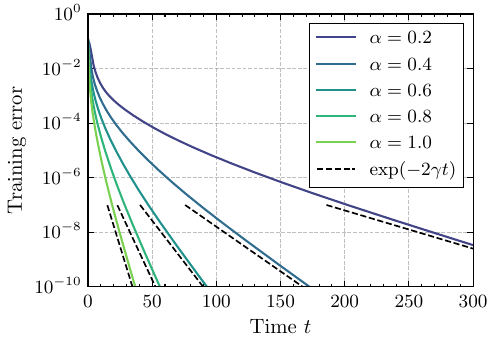}
            \caption{Convergence rates of training errors for Gaussian data.}
            \label{fig:convergencerate}
        \end{subfigure}
        \begin{subfigure}[t]{0.32\textwidth}
            \includegraphics[width=\textwidth]{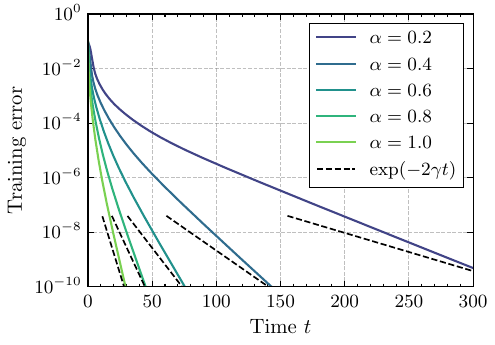}
            \caption{Convergence rates of training errors for real data.}
            \label{fig:convergencerate_real}
        \end{subfigure}
        \caption{Long-time behaviors of DLNs for $\lambda = 0$ and $\delta = 0.5$. (a): Smaller initialization $\alpha$ leads to better generalization at the fixed point (\cref{res:fixedpoint} Case (iii)). Simulations are run $10$ times on independent data, and the error bars indicate one standard deviation. (b), (c): Smaller initialization $\alpha$ leads to slower convergence (\cref{res:convergence_rates}), thus showing a trade-off with the generalization performance. The slopes of the dashed lines represent theoretical predictions for the convergence rate. Experimental details are discussed in \cref{sec:numerical}.}
        \label{fig:longtime}
    \end{center}
\end{figure*}

First, we analyze timescale structures of the dynamics to show that they exhibit \emph{qualitatively different behaviors depending on the initialization scale $\alpha$}, as depicted in \cref{fig:timescales}. We discuss the details in \cref{sec:timescale}.

\begin{description}
    \item[Large Initialization ($\alpha \gg 1$).] DLNs initially behave as approximately linear models (\emph{lazy regime}). In this phase, the loss rapidly decreases, but the model generalizes poorly. When the model is regularized ($\lambda > 0$), it then transitions to a sparse, generalizing solution (\emph{rich regime}) around the time $t \approx 2 \log(\alpha)/\lambda$ (\emph{grokking}).

    \item[Small Initialization ($\alpha \ll 1$).] We observe an early plateau with negligible change in the loss (\emph{search phase}). The loss then decreases on a timescale of $\Theta(\log(1/\alpha))$ (\emph{descent phase}) via \emph{incremental learning}, in which the model learns target coordinates one by one.
\end{description}

Second, we analyze long-time behaviors of the dynamics to identify their fixed points (long-time limit) and convergence rates (speed of convergence in time to the fixed point). Here, we focus on the unregularized ($\lambda = 0$) and overparameterized ($\delta < 1$) case, as it exhibits the most distinctive behaviors, as illustrated in \cref{fig:longtime}. Details are discussed in \cref{sec:longtime}.

\paragraph{Smaller Initialization Improves Generalization.}
We show that the fixed point of the gradient flow matches the solution of a minimum norm interpolation problem, i.e., minimizes a certain norm of $\bw$ while fitting all data. The norm depends on the initialization scale $\alpha$, with a smaller $\alpha$ enforcing a stronger bias towards sparse solutions. This implies that \emph{smaller initialization leads to better generalization} in sparse regression settings (see \cref{fig:fixedpoint}).

\paragraph{Trade-off Between Generalization and Convergence.}
We show that the loss converges exponentially as $L(t) \sim \napier^{-2\gamma t}$, with the exponent $\gamma$ monotonically increasing with initialization $\alpha$. Thus, \emph{smaller initialization leads to slower convergence} (see \cref{fig:convergencerate}) and, combined with the fixed point characterization, this reveals a \emph{trade-off between the generalization performance and the convergence speed}.

\section{DMFT ANALYSIS}

We apply Dynamical Mean-Field Theory (DMFT) to the gradient flow \eqref{eq:df_dln} with randomness coming from samples $\bx_1,\dots,\bx_n$. DMFT is a technique from statistical physics that provides a low-dimensional effective description by averaging out microscopic fluctuations, thereby capturing the macroscopic behavior of high-dimensional systems in a tractable manner.
In particular, the DMFT equation consists of stochastic process that characterize the high-dimensional dynamics of the model parameters and deterministic functions called \emph{correlation} and \emph{response} functions which encode the evolution of the macroscopic properties of the system.

In our setting, the DMFT formalism for the gradient flow \eqref{eq:df_dln} yields the following system, involving correlation and response functions $C_w,C_f,R_w,R_f\colon \reals_{\geq 0}^2 \to \reals$ and stochastic processes $w,g\colon \reals_{\geq 0} \to \reals$:
\begin{subequations} \label{eq:dmft_dln}
\begin{align}
    C_w(t,t')       & = \E[(w(t) - w^*) (w(t') - w^*)] \,, \label{eq:dmft_Cw} \\
    R_w(t,t')       & = -\E\ab[\diffp{w(t)}{z(t')}] \,, \label{eq:dmft_Rw} \\
    C_f(t,t')      & = C_w(t,t') + \sigma^2 \notag \\
    & \qquad -\int_0^{t'} R_f(t',s) (C_w(t,s) + \sigma^2) \de s \notag \\
    & \qquad - \int_0^t R_w(t,s) C_f(t',s) \de s \,, \label{eq:dmft_Cf} \\
    R_f(t,t')      & = R_w(t,t') - \int_{t'}^t R_w(t,s) R_f(s,t') \de s \,, \label{eq:dmft_Rf} \\
    g(t)           & = \frac{z(t)}{\delta} + w(t) - w^* \notag \\
    & \qquad - \int_0^t R_f(t,s) (w(s) - w^*) \de s \,, \label{eq:dmft_g} \\
    \diff{}{t}w(t) & = - \sqrt{w(t)^2 + \alpha^4 \napier^{-2\lambda t}} g(t) - \lambda w(t) \,,\label{eq:dmft_w}
\end{align}
\end{subequations}
where $z \sim \GP(0,\delta C_f)$ and $w^* \sim P_*$.

As $d \to \infty$, the empirical distribution of entries of $\bw(t)$ converges to the law of the process $w(t)$. In this way, the high-dimensional dynamics \eqref{eq:df_dln} is reduced to a scalar-valued stochastic process \eqref{eq:dmft_w}, which is more tractable and amenable to theoretical analysis.

Macroscopic quantities, such as training and test errors, can be asymptotically computed from the solution $(C_w,C_f,R_w,R_f)$ of the DMFT equation \eqref{eq:dmft_dln}:
\begin{gather}
    E_\mathrm{train}(t) \coloneqq \frac{1}{n} \sum_{\mu=1}^n (y_\mu-\bw(t)^\transpose \bx_\mu)^2 \to C_f(t,t) \,, \\
    E_\mathrm{test}(t) \coloneqq \E_{\bx,y}[(y-\bw(t)^\transpose \bx)^2] \to C_w(t,t) + \sigma^2 \,.
\end{gather}

We provide a heuristic derivation of the DMFT equation \eqref{eq:dmft_dln} based on statistical physics in \cref{app:heuristic_dmft} and present a rigorous justification in \cref{sec:rigorous}. It is also validated against numerical simulations in \cref{sec:numerical}.

\section{LEARNING TIMESCALES}
\label{sec:timescale}

\begin{figure*}
    \begin{center}
        \begin{subfigure}{0.32\textwidth}
            \includegraphics[width=\textwidth]{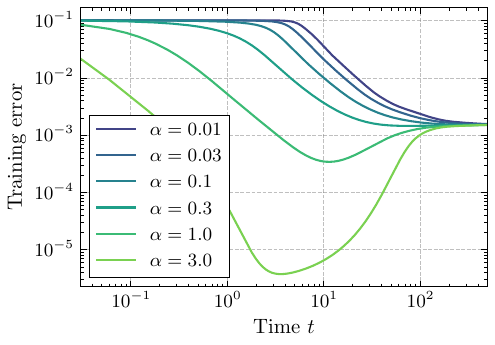}
            \caption{Training error dynamics.}
        \end{subfigure}
        \begin{subfigure}{0.32\textwidth}
            \includegraphics[width=\textwidth]{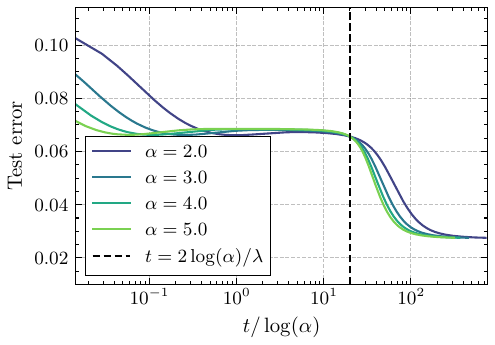}
            \caption{Grokking timescales for large $\alpha$.}
            \label{fig:timescale_grok}
        \end{subfigure}
        \begin{subfigure}{0.32\textwidth}
            \includegraphics[width=\textwidth]{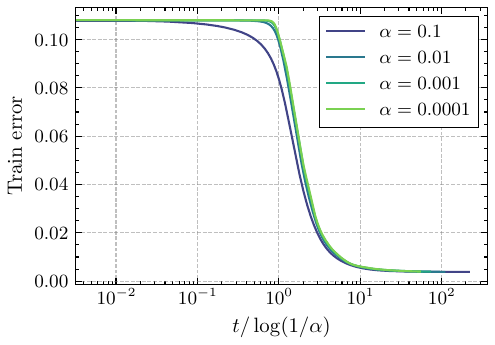}
            \caption{Descent timescales for small $\alpha$.}
            \label{fig:timescale_descent}
        \end{subfigure}
        \caption{(a): Training error dynamics for various initialization scales $\alpha$. Plots are simulations of DLNs with $d = 200$. We observe qualitatively different dynamics depicted in \cref{fig:timescales}. The monotonicity of the training error changes at around $\alpha \approx 0.3$. (b): Test error dynamics for large $\alpha$. Once the time is rescaled by $\log(\alpha)$, the transition times to the second dynamical regime collapse, showing that it is the correct scaling for the transition time. (c): Training error dynamics for small $\alpha$. Once the time is rescaled by $\log(1/\alpha)$, descent phases start and proceed on the same timescales.}
        \label{fig:timescales_detail}
    \end{center}
\end{figure*}

In this section, we analyze the timescale structure of the DMFT equation \eqref{eq:dmft_dln} for the gradient flow \eqref{eq:df_dln}. We identify qualitatively distinct behaviors unfolding across different timescales, depending on the initialization scale $\alpha$, as depicted in \cref{fig:timescales,fig:timescales_detail}.

To illustrate our technique, we analyze the simplified case of the infinite-data limit ($\delta \to \infty$), where the DMFT equation \eqref{eq:dmft_dln} reduces to the following scalar ordinary differential equation (ODE).
\begin{align}
    \diff{}{t}w(t) = - \sqrt{w(t)^2 + \alpha^4 \napier^{-2\lambda t}} (w(t) - w^*) - \lambda w(t) \,. \label{eq:dmft_inf_delta}
\end{align}
The full analysis for general $\delta$ appears in \cref{app:timescale}.

\subsection{Technique: Singular Perturbation Theory}

We analyze the dynamics \eqref{eq:dmft_inf_delta} in $\alpha \to \infty$ and $\alpha \to 0$ limits using \emph{singular perturbation theory} \citep{bender1999advanced}. It is a useful technique in the study of dynamical systems, which allows us to separate the behaviors of dynamical systems into different timescales. Following \citet{montanari2025dynamical}, we proceed heuristically: posit an ansatz on a given timescale and check consistency with the DMFT equation. We also validate against numerical simulations. Although widely used, its rigorous treatment is challenging and is beyond the scope of this paper.

\subsection{Large Initialization Limit $\alpha \to \infty$}

In this case, the dynamics exhibit two distinct dynamical regimes: \emph{lazy phase} for $t = O(\alpha^{-2})$ and \emph{rich phase} for $t = 2\log(\alpha)/\lambda + \Theta(1)$. In each phase, we first analyze the dynamics in the $\delta \to \infty$ limit, and then discuss the behavior for general $\delta$. Note that the discussion for general $\delta$ is based on the analysis in \cref{app:timescale}.

\paragraph{Lazy Phase: $t = O(\alpha^{-2})$.}
In this timescale, the factor $\sqrt{w(t)^2 + \alpha^4 \napier^{-2\lambda t}}$ in \cref{eq:dmft_inf_delta} can be approximated by $\alpha^2$ (This is because $w(t)^2 \ll \alpha^4$ as $w(0) = 0$ and $\napier^{-2\lambda t} \approx 1$ as $\lambda t \ll 1$ for small $t$). Thus, we have the following approximate ODE (with random $w^* \sim P_*$):
\begin{align}
    \diff{}{t}w(t)  & = -\alpha^2 (w(t) - w^*) \,,
\end{align}
with an explicit solution $w(t) = w^* (1 - \napier^{-\alpha^2 t})$, showing that $w(t)$ converges exponentially to the target $w^*$.

Thus, DLNs essentially behave as unregularized linear models on this timescale. This phenomenon corresponds to the \emph{lazy training} in which models with large weights behave as linearized models around their initializations \citep{jacot2018neural,chizat2019lazy}. In \cref{app:timescale_largeinit}, we show for general $\delta$ that DLNs behave as linear models and converge to linear regression solutions in time $O(\alpha^{-2})$.

\paragraph{Rich Phase: $t = 2\log(\alpha)/\lambda + \Theta(1)$.}
When $\lambda > 0$, as $t$ grows, the $\napier^{-2\lambda t}$ factor in \cref{eq:dmft_inf_delta} becomes small and eventually breaks the approximation $\sqrt{w(t)^2 + \alpha^4 \napier^{-2\lambda t}} \approx \alpha^2$. This occurs when the two terms inside the square root become of the same order, which occurs at time $t \approx t_c \coloneqq 2 \log(\alpha) / \lambda$.

After the approximation breaks down, the dynamics transitions to the next dynamical regime governed by a different equation. Shifting the time as $\tau = t - t_c$ and dropping the $\alpha^2 \napier^{-2\lambda t} = \napier^{-2\lambda \tau}$ term for $\tau \gg 1$, the dynamics \eqref{eq:dmft_inf_delta} is expressed as
\begin{align}
    \diff{}{\tau}w(\tau) & \approx - \abs{w(\tau)} (w(\tau) - w^*) - \lambda w(\tau) \,. \label{eq:rich_ode}
\end{align}
This is a logistic equation that can be solved explicitly. Setting $\Delta \coloneqq \abs{w^*} - \lambda$, the solution is given by $w(\tau) = \sign(w^*) \Delta / (1 + C \napier^{-\Delta \tau})$, where $C$ is a constant. The fixed point as $\tau \to \infty$ can be expressed using the soft-thresholding function as $w(\infty) = \max\{\Delta,0\} = \ST(w^*;\lambda)$, reminiscent of $\ell_1$ regularization. Furthermore, the convergence rate is $\abs{w(\tau) - w(\infty)} \sim \napier^{-\abs{\Delta} \tau}$, showing slower convergence for paths with target $\abs{w^*}$ closer to the threshold $\lambda$.

In summary, the dynamics transitions at time $t \approx 2\log(\alpha)/\lambda$ to the second dynamical regime, the \emph{rich} phase, which exhibits nonlinear dynamics and sparsity bias in contrast to the lazy phase. The transition timescale is validated against simulations in \cref{fig:timescale_grok}.

\emph{Connection to grokking:}
The transition to the rich phase is sharp in the sense that the dynamics after the transition converge in a time of $\Theta(1)$, faster than the transition time of $\Theta(\log \alpha)$. This sharp transition is related to \emph{grokking} \citep{power2022grokking}, a phenomenon where a model quickly transitions to a generalizing solution long after it interpolates data with poor generalization. Indeed, in the overparameterized case ($\delta < 1$), DLNs first interpolate the data with bad generalization in the lazy phase (due to the $\ell_2$ implicit bias induced by the linear dynamics \citep{bartlett2021deep}, which favors dense solutions) and then transition to a sparse, generalizing solution in the rich phase. This sudden transition is caused by different implicit biases in the lazy and rich phases, an explanation for grokking given by \citet{lyu2024dichotomy} and \citet{kumar2024grokking}.

\subsection{Small Initialization Limit $\alpha \to 0$}

As with the large initialization case, the dynamics exhibit two dynamical regimes: \emph{search phase} for $t=O(1)$ and \emph{descent phase} for $t=\Theta(\log(1/\alpha))$. These names are adopted from \citet{arous2021online}, which established analogous two-stage dynamics for online SGD learning in high-dimensional inference.

\paragraph{Search Phase: $t=O(1)$.}
We introduce the rescaled parameter $W(t) \coloneqq w(t)/\alpha^2$. Assuming that $\abs{w(t)} \ll \abs{w^*}$, the dynamics \eqref{eq:dmft_inf_delta} is approximated as
\begin{align}
    \diff{}{t}W(t) & = w^* \sqrt{W(t)^2 + \napier^{-2\lambda t}} - \lambda W(t) \,,
\end{align}
with an explicit solution
\begin{align}
    W(t) & = \frac{\sign(w^*)}{2} (1 - \napier^{-2\abs{w^*} t}) \napier^{(\abs{w^*} - \lambda)t} \,.
\end{align}
The behavior of this solution depends on the relative scales of $w^*$ and $\lambda$. Again, let $\Delta \coloneqq \abs{w^*} - \lambda$. For large $t$, the solution behaves as $\abs{W(t)} \approx (1/2) \napier^{\Delta t}$. When $\Delta < 0$, $W(t)$ converges to zero; when $\Delta > 0$, $\abs{W(t)}$ grows exponentially.

Since changes in $w(t)$ are small (of $O(\alpha^2)$), the loss does not change appreciably, and hence we observe a plateau at the beginning of training.

In this dynamical regime, the algorithm searches and identifies entries of $\bw$ to be activated and suppresses others. Specifically, entries with $\abs{w^*}$ smaller than the threshold $\lambda$ are suppressed, and those above the threshold grow. Similar dynamics hold for general $\delta$, but with a different definition of $\Delta$; see \cref{app:timescale_smallinit}.

\paragraph{Descent Phase: $t=\Theta(\log(1/\alpha))$.}
Paths with $\Delta > 0$ exhibit a transition to the second dynamical regime. This occurs when $\abs{w(t)}$ and $\abs{w^*}$ become of the same order. Equating $\abs{w(t)} = \abs{\alpha^2 W(t)} \approx (\alpha^2/2) \napier^{\Delta t}$ to $\abs{w^*}$, we obtain $t \approx t_c \coloneqq 2 \log(1/\alpha) / \Delta$ as the transition time.

Setting $\tau \coloneqq t-t_c$, the dynamics after the transition are given as follows.
\begin{align}
    \diff{}{\tau}w(\tau) & = - \abs{w(\tau)} (w(\tau) - w^*) - \lambda w(\tau) \,. \label{eq:rich_ode_2}
\end{align}
This is the same equation as \cref{eq:rich_ode} and thus behaves similarly.

An important difference from the rich phase for a large initialization is that the shifted time $\tau$ is defined differently. In the large initialization case, the transition time $t_c = 2\log(\alpha)/\lambda$ is common for all paths, and the dynamics \eqref{eq:rich_ode} proceed on the same timescale for all paths. In contrast, in the small initialization case, the transition time $t_c = 2\log(1/\alpha)/\Delta$ is different for each path. Since the dynamics \eqref{eq:rich_ode_2} proceed in time $O(1)$, which is much faster than the transition timescale of $\Theta(\log(1/\alpha))$, activated paths (paths that have transitioned to the descent phase) converge quickly to their fixed points. Thus, training in this descent phase proceeds via \emph{incremental learning}, successive activations of entries of $\bw(t)$ \citep{berthier2023incremental,pesme2023saddletosaddle}. The timescale of $\Theta(\log(1/\alpha))$ in this regime is checked against numerical simulations in \cref{fig:timescale_descent}.

\section{LONG-TIME BEHAVIOR}
\label{sec:longtime}

We analyze long-time behaviors of the DMFT equation \eqref{eq:dmft_dln} and establish a trade-off between generalization performance and optimization speed. Our results are summarized in \cref{tab:longtime}.

\begin{table}[h]
\caption{Summary of long-time behaviors under different regularization $\lambda$ and the aspect ratio $\delta$. When $\lambda = 0$ and $\delta < 1$, decreasing $\alpha$ results in a trade-off: the generalization performance of the fixed point improves (lower test error), but convergence to the fixed point slows down.}
\label{tab:longtime}
\begin{tabular}{cc|cc}
\toprule
$\lambda$                         & $\delta$ & \textbf{\makecell{Fixed\\Point}}                                & \textbf{\makecell{Convergence\\Rate}}  \\ \midrule
$>0$                              & Any      & $\ell_1$-regularized                                            & Sub-Exponential                        \\ \midrule
\multirowcell{2}{\makecell{$=0$}} & $>1$     & Ridgeless                                                       & \multirowcell{2}{\makecell{Exponential\\(\textit{slower} in $\alpha \searrow$)}}  \\ \cmidrule{2-3}
                                  & $<1$     & \makecell{Minimum Norm\\(\textit{better} in $\alpha \searrow$)} &   \\
\bottomrule
\end{tabular}
\end{table}

\subsection{Fixed Points and the Benefit of Small Initialization}

We analyze the solution obtained by the gradient flow \eqref{eq:df_dln} through a fixed point analysis of the DMFT equation \eqref{eq:dmft_dln}. In the following result, we show that the solution can be characterized as a minimizer of a certain estimation problem. Note that our result is stated as a \emph{result} and not as a \emph{theorem}, due to the non-rigorous derivation of the DMFT equation \eqref{eq:dmft_dln} and derivation of the fixed point.

\begin{result}[Fixed point of gradient flow] \label{res:fixedpoint}
    Let $\bw(\infty) \in \reals^d$ be the fixed point of the gradient flow. Let $\hat\bw \in \reals^d$ be the solution of a minimization problem as follows.
    \begin{description}
        \item[Case (i): $\lambda > 0$.] $\ell_1$-regularized linear regression:
        \begin{align}
            \hat\bw = \argmin_{\bw \in \reals^d} \frac{1}{2n} \norm{\by - \bX\bw}_2^2 + \frac{\lambda}{d} \norm{\bw}_1 \,.
        \end{align}

        \item[Case (ii): $\lambda = 0$, $\delta > 1$.] Ridgeless linear regression:
        \begin{align}
            \hat \bw = \argmin_{\bw \in \reals^d} \frac{1}{2n} \norm{\by - \bX\bw}_2^2 \,.
        \end{align}

        \item[Case (iii): $\lambda = 0$, $\delta < 1$.] Minimum norm interpolation:
        \begin{align}
            \hat \bw = \argmin_{\bw \in \reals^d} J_\alpha(\bw) \; \text{subject to} \; \by = \bX \bw \,,
        \end{align}
        with a norm $J_\alpha(\bw) = \alpha^2 \sum_{i=1}^d J(w_i/\alpha^2)$ with $J(x) = x\sinh^{-1}(x) - \sqrt{1 + x^2} + 1$.
    \end{description}
    As $d \to \infty$, the joint empirical distributions of the entries of $(\bw(\infty), \bw^*)$ and of $(\hat\bw,\bw^*)$ approach the same limiting distribution characterized by the fixed point of the DMFT equation \eqref{eq:dmft_dln} shown in \cref{app:fixed_point}.
\end{result}

Case (i) is intuitive: $\ell_2$ regularization on $(\bu,\bv)$ translates into $\ell_1$ regularization on $\bw = (\bu^2 - \bv^2)/2$. Case (ii) is also natural, since in the underparameterized case ($\delta > 1$), there exists a unique minimizer of the loss \eqref{eq:loss} almost surely as $d \to \infty$ with the minimizer given by the ridgeless solution.

In Case (iii), there are multiple minimizers of the loss because of overparameterization ($\delta < 1$), and the implicit bias of the algorithm plays a role in selecting a solution among them. \Cref{res:fixedpoint} indicates that the gradient flow selects the solution that minimizes a norm $J_\alpha$ dependent on the initialization $\alpha$. Properties of this norm are discussed in detail in \citet{woodworth2020kernel}. As $\alpha \to \infty$, $J_\alpha$ approximately behaves as the $\ell_2$ norm, resulting in the same implicit bias as linear models. As $\alpha \to 0$, $J_\alpha$ is approximately proportional to the $\ell_1$ norm, which exhibits a stronger bias toward sparse solutions. Thus, in the case of a sparse target, smaller initialization yields better final performance, as illustrated in \cref{fig:fixedpoint}.

Our result for Case (iii) is derived under a more restricted setting than \citet[Theorem 1]{woodworth2020kernel}, which holds for any dimension $d$ and any data distributions, yet provides several advantages. First, our result allows for a precise prediction of performances in high dimensions as a solution to a system of equations. Second, our alternative derivation based on DMFT enhances our toolkit for studying implicit biases and has the potential to tackle problems that their method does not apply to.

\paragraph{Sketch of Derivation.}
Our derivation of \cref{res:fixedpoint} proceeds as follows; see \cref{app:fixed_point} for details.
\begin{enumerate}
    \item We obtain a system of equations that the fixed point of the DMFT equation \eqref{eq:dmft_dln} satisfies to derive the limiting distribution of the entries of $\bw(\infty)$.
    \item We obtain a characterization of the solutions $\hat \bw$ of the minimization problems given in \cref{res:fixedpoint} in the high-dimensional limit using \emph{approximate message passing} (AMP) \citep{donoho2009messagepassing,feng2022unifying}, and show that these two characterizations match in each case.
\end{enumerate}

\subsection{Convergence Rates and Their Trade-off with Generalization}

Next, we analyze convergence rates of the loss $L(t)$.

\begin{result}[Average-case convergence rate of gradient flow] \label{res:convergence_rates}
    The paths $w(t)$ converge exponentially with different rates for each path.

\begin{description}
    \item[Regularized ($\lambda > 0$).] There are paths with arbitrarily slow rates, and the convergence of the loss $L(t)$ is subexponential.
    \item[Unregularized ($\lambda = 0$).] The loss $L(t)$ converges exponentially as $\abs{L(t) - L(\infty)} = \exp(-2 \gamma t + o(1))$, where the exponent $\gamma > 0$ depends on $\alpha$, $\delta$, $\sigma^2$, and $P_*$ and can be computed as a solution of a nonlinear equation \eqref{eq:convergence_rate_eqn}. Furthermore, $\gamma$ is monotonically increasing with respect to $\alpha$.
\end{description}
\end{result}

To the best of our knowledge, this provides the first theoretical characterization of the average-case convergence rate of gradient flow for DLNs in high dimensions and its monotonicity with the initialization scale. \cref{res:convergence_rates} indicates that the convergence is slower for a smaller initialization $\alpha$, as shown in \cref{fig:convergencerate}. Together with \cref{res:fixedpoint} Case (iii), it establishes a \emph{trade-off between generalization performance and the convergence speed}.
Note that the trade-off is only meaningful in overparameterized settings where multiple solutions exist.
Although previous works already discuss that small initialization implies initialization near a saddle point of the loss and hence leads to slow escape from the initial saddle \citep{woodworth2020kernel}, our result is concerned with the long-time behavior and shows that the slow dynamics persists in the entire dynamics with a quantitative characterization of the convergence rate.

We note that a similar phenomenon is observed in a different setting by \citet{pesme2021implicit}, who studied SGD dynamics of DLNs and found that slower training leads to sparser solutions. This suggests a general principle in DLNs: \emph{better solutions are harder to find}.

With non-zero regularization $\lambda > 0$, the convergence is subexponential, and we do not show a monotonicity result with respect to the initialization scale $\alpha$.
However, for small but nonzero regularization $\lambda > 0$, the transient dynamics still resemble the unregularized case $\lambda = 0$ for a significant period of time.
Since regularization affects the dynamics only after time $t \sim \lambda^{-1}$, for $t \ll \lambda^{-1}$, the dynamics behave similar to the unregularized model, and hence the smaller initialization still leads to a slower dynamics (until time $t \sim \lambda^{-1}$).

\paragraph{Sketch of Derivation.}
We derive \cref{res:convergence_rates} as follows; see \cref{app:convergence_rate} for details.
\begin{enumerate}
    \item We linearize the DMFT equation \eqref{eq:dmft_dln} around the fixed point.
    \item We employ the \emph{Laplace transform} to analyze the linearized dynamics and find singularities of the Laplace transforms to derive the convergence rate.
\end{enumerate}

\section{RIGOROUS THEORY}
\label{sec:rigorous}

While the DMFT equation \eqref{eq:dmft_dln} is derived heuristically, we can rigorously justify it for a closely related model: \emph{truncated diagonal linear networks}. We define truncated (two-layer) DLNs as follows:
\begin{align}
 f(\bx;\bu,\bv) = \bw^\transpose \bx \,, \; \bw = \frac{1}{2}(\eta_M(\bu^2)-\eta_M(\bv^2)) \,,
\end{align}
where $\bu,\bv\in\reals^d$ and $\eta_M\colon \reals \to \reals$ is a smooth Lipschitz function with $\eta_M(x) = x$ for $\abs{x} \leq M$ and $\eta_M(x) = 0$ for $\abs{x} \geq M+1$, for $M > 0$. This entry-wise truncation ensures that $f_M$ and its gradients are Lipschitz continuous in $\bu$ and $\bv$, which makes the model more amenable to rigorous treatment. When the entries of $\bu$ and $\bv$ remain within $[-M,M]$, $\eta_M$ acts as the identity; hence, for large enough $M$, the truncated DLN closely approximates the original model \eqref{eq:dln}.

We consider gradient flow training of truncated DLNs as described in \cref{sec:setup}. To characterize its behavior, we extend the theory of \citet{celentano2021highdimensional}, which rigorously establishes DMFT characterization of gradient flow for a class of models that includes generalized linear models and narrow two-layer neural networks, but not truncated DLNs. We show that the empirical distribution of the entries of $\bw(t)$ for truncated DLNs is asymptotically equivalent to the distribution of the unique solution of a DMFT equation.
\begin{theorem*}[Informal version of \cref{cor:dmft_dln}]
    Assume that the entries $\bX=(x_{ij})_{i\in[n],j\in[d]}$ are independent and satisfy $\E x_{ij}=0,\E x_{ij}^2=1/d,\norm{x_{ij}}_{\psi_2}\leq C/\sqrt{d}$, where $\norm{\cdot}_{\psi_2}$ is the sub-Gaussian norm and $C>0$ is a constant. For any $T > 0$, there exists a unique solution $(w(t))_{t=0}^T$ of the DMFT equation \eqref{eq:dmft_truncated_dln}, and we have
    \begin{align}
        \frac{1}{d}\sum_{i=1}^d \dirac_{(w_i(t))_{t=0}^T,w^*_i} \xrightarrow{W_2} \sfP((w(t))_{t=0}^T,w^*) \,,
    \end{align}
    almost surely as $n,d \to \infty$, where $\xrightarrow{W_2}$ denotes convergence in the Wasserstein-$2$ distance and $\sfP((w(t))_{t=0}^T,w^*)$ denotes the joint law of the process $(w(t))_{t=0}^T$ and the random variable $w^*$.
\end{theorem*}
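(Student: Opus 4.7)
My strategy is to embed the gradient flow of the truncated DLN into an extension of the rigorous DMFT framework of \citet{celentano2021high}, and then conclude existence, uniqueness, and Wasserstein-$2$ convergence on any finite time window $[0,T]$. The role of the truncation $\eta_M$ is precisely to make this embedding possible: it renders the parametrization $(\bu,\bv) \mapsto \bw = (\eta_M(\bu^2) - \eta_M(\bv^2))/2$ and the drift of the gradient flow globally Lipschitz and uniformly bounded on $\reals^{2d}$, which is the standard regularity hypothesis of the abstract state-evolution theorem and rules out finite-time blow-up.

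First, I would rewrite the gradient flow on $(\bu,\bv)$ as coupled, coordinate-wise dynamics driven by the residual $\by - \bX\bw$, with entry-wise nonlinearities of the form $u \mapsto \eta_M'(u^2)\, u$ and analogously for $v$. Under the sub-Gaussian tail condition on entries of $\bX$ together with the bounded-Lipschitz property inherited from $\eta_M$, the hypotheses of an extended state-evolution theorem hold uniformly on $[0,T]$. This yields a joint DMFT characterization for $(u(t),v(t))_{t\in[0,T]}$ in terms of a Gaussian driving process $z(t)$ and memory/response kernels satisfying a coupled integro-differential system, together with Wasserstein-$2$ convergence of the joint empirical distribution of $(u_i(t), v_i(t), w^*_i)$.

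Second, I would reduce the two-variable DMFT to the scalar equation \eqref{eq:dmft_dln} used throughout the paper. Under the symmetric initialization $u(0)=v(0)=\alpha$ and $\ell_2$ regularization, trajectories of both the finite-$d$ system and the effective process satisfy the conservation law $u(t) v(t) = \alpha^2 \napier^{-\lambda t}$; combined with $w = (u^2 - v^2)/2$ this yields $(u^2 + v^2)/2 = \sqrt{w^2 + \alpha^4 \napier^{-2\lambda t}}$, which is exactly the coefficient appearing in \eqref{eq:dmft_w}. Projecting the joint system through $(u,v) \mapsto w$ then produces the scalar DMFT equation for $w(t)$. Existence and uniqueness on $[0,T]$ follow from a Picard iteration on a space of continuous processes with suitable integrability, using the Lipschitz estimates (uniform on $[0,T]$ thanks to the truncation) and a Grönwall argument; Wasserstein-$2$ convergence for $\bw(t)$ is then transported from that of $(\bu(t),\bv(t))$ through the continuous, Lipschitz map $(u,v) \mapsto (\eta_M(u^2) - \eta_M(v^2))/2$.

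The main obstacle is the extension step itself: \citet{celentano2021high} cover generalized linear models and narrow two-layer networks, but the truncated DLN has a two-layer diagonal structure with a product reparametrization that falls outside their setting. In practice this means revisiting their decoupling, tightness, and martingale arguments so as to accommodate the entry-wise product nonlinearities $\bu \odot \eta_M'(\bu^2)$ and $\bv \odot \eta_M'(\bv^2)$ while preserving the Gaussianity of the limiting driving field $z(t)$. Once this extension is carried out on a bounded time interval, the remaining steps—the conservation law, the scalar reduction, Picard uniqueness, and continuous-mapping Wasserstein transport—are essentially standard.
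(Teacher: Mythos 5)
Your plan follows essentially the same route as the paper: extend the state-evolution/DMFT framework of \citet{celentano2021high} to a generalized flow that accommodates the row-wise reparametrization $(\bu,\bv)\mapsto\bw$ and the product-type drift, prove existence and uniqueness on $[0,T]$ by a contraction argument, establish the Wasserstein-$2$ convergence for the joint empirical law of $(u_i,v_i,w^*_i)$ (the paper does this by discretizing, mapping to an AMP iteration, and invoking a universality theorem), and then transport the result to $\bw(t)$ through the Lipschitz map $(u,v)\mapsto(\eta_M(u^2)-\eta_M(v^2))/2$. One caveat: the conservation law $u(t)v(t)=\alpha^2\napier^{-\lambda t}$ you invoke for the scalar reduction fails for the \emph{truncated} model whenever $\eta_M'(u^2)\neq\eta_M'(v^2)$, so that reduction is only heuristic (the paper likewise performs it only after dropping the truncation); but this step is not needed for the stated theorem, which concerns the truncated DMFT system \eqref{eq:dmft_truncated_dln} directly.
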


For full statements and proofs, see \cref{app:rigorous}.

Note that the distribution of entries of $\bX$ is not restricted to the Gaussian distribution, and our result is \emph{universal} with respect to the input distribution.

\section{NUMERICAL EXPERIMENTS}
\label{sec:numerical}

The code to reproduce the numerical experiments is available at \url{https://github.com/sotanishy/dmft-dln}.

\paragraph{Simulations with Gaussian Data.}
To validate our theoretical results, we have conducted numerical simulations with Gaussian data, with $\bx_\mu$ sampled from the Gaussian distribution $\normal(0,\idmat_d/d)$ with $d=500$. We take the target distribution $P_*$ to be a Bernoulli distribution with $\prob\{w^*=1\}=0.1$. Gradient flow is discretized with step size $\eta=0.1$. To obtain the fixed point, we run the dynamics up to $t=1000$. Results in \cref{fig:longtime,fig:timescales_detail} show remarkable agreement with theoretical predictions.

\paragraph{Simulations with Real-World Data.}
To check the universality of our results, we have tested on real-world data. We use a gene expression dataset \citep{ellrott2013tcga,fiorini2016gene} and take a random subset of $n=100$ samples and $d = 200$ features ($\delta=0.5$). Results for fixed points are shown in \cref{fig:fixedpoint} and show excellent agreement with the theoretical prediction. Results on convergence rates are shown in \cref{fig:convergencerate_real}. Although they deviate from the theoretical prediction (see discussions in \cref{app:numerical}), qualitative aspects, in particular the monotonicity of the convergence rates with respect to initialization, are well captured by our theory.

Further experiments and discussions are available in \cref{app:numerical}.

\section{DISCUSSION}

It would be interesting to explore whether the trade-off between generalization and optimization, specifically the idea that \emph{`better solutions are harder to find'}, which we discussed in \cref{sec:longtime}, can be established and extended to general neural networks.

In addition, extending our DMFT analysis to other architectures (such as deep linear networks, nonlinear networks, and transformers) and algorithms (such as SGD) would be a promising future direction. DMFT can handle a wide range of complex architectures in a common formalism \citep{bordelon2022selfconsistent,bordelon2024infinite,bordelon2025deep,montanari2025dynamical}, and transferring the insights developed in this work to these architectures would be a fruitful avenue. Furthermore, prior studies suggest that different optimizers induce distinct implicit biases compared to gradient flow (see \cref{sec:relatedwork}), and a deeper theoretical understanding is of great interest.

\subsubsection*{Acknowledgements}
Sota Nishiyama was supported by WINGS-FMSP at the University of Tokyo. Masaaki Imaizumi was supported by JSPS KAKENHI (24K02904), JST FOREST (JPMJFR216I), and JST BOOST (JPMJBY24A9).

\bibliography{dmft-dln}

\newpage
\appendix
\renewcommand{\thesection}{\Alph{section}}
\numberwithin{equation}{section}

\section{HEURISTIC DERIVATION OF THE DMFT EQUATION}
\label{app:heuristic_dmft}

\subsection{Derivation of the DMFT Equation Using Path Integrals}

We heuristically derive the DMFT equation \eqref{eq:dmft_dln} using the \emph{path integral} approach in statistical physics. Specifically, we base our derivation on the Martin--Siggia--Rose--De Dominicis--Janssen (MSRDJ) formalism \citep{martin1973statistical}. The derivation proceeds by expressing the dynamics in a path integral form and using the saddle-point method in the $d\to\infty$ limit to obtain self-consistent equations. Similar computations can be found, for example, in \citet{agoritsas2018outofequilibrium,saraomannelli2020marvels,mignacco2020dynamical,bordelon2022selfconsistent,montanari2025dynamical}.

The gradient flow dynamics \eqref{eq:df_dln} for our setup is
\begin{align}
    \diff{}{t} \bu(t) & = -\frac{1}{2} \ab(\bu(t) \odot \frac{1}{\delta} \bX^\transpose (\bX(\bw(t) - \bw^*) - \bxi) + \lambda \bu(t)) \,,   \\
    \diff{}{t} \bv(t) & = -\frac{1}{2} \ab(- \bv(t) \odot \frac{1}{\delta} \bX^\transpose (\bX(\bw(t) - \bw^*) - \bxi) + \lambda \bv(t)) \,,
\end{align}
where $\bxi\coloneqq(\xi_1,\dots,\xi_n)^\transpose \in\reals^n$ is a noise vector.

Defining fields $\bff(t) \in \reals^n$ and $\bg(t) \in \reals^d$ for $t \geq 0$ as
\begin{align}
    \bff(t) \coloneqq \bX (\bw(t) - \bw^*) \,, \quad \bg(t) \coloneqq \frac{1}{\delta} \bX^\transpose (\bff(t) - \bxi) \,,
\end{align}
the dynamics can be expressed as
\begin{align}
    \diff{}{t} \bu(t) = - \frac{1}{2} (\bu(t) \odot\bg(t) + \lambda \bu(t)) \,, \quad \diff{}{t} \bv(t) = - \frac{1}{2} (-\bv(t) \odot \bg(t) + \lambda \bv(t)) \,. \label{eq:gf_fields}
\end{align}

We define a dynamical partition function $Z$ as
\begin{align}
    Z & \coloneqq \int \De \bff \De \bg \De \bu \De \bv \De \bw \, \dirac(\bff(t) - \bX (\bw(t) - \bw^*)) \dirac(\delta \bg(t) - \bX^\transpose (\bff(t) - \bxi)) \,,
\end{align}
where $\dirac$ is the Dirac delta function (an upright $\dirac$ is used to distinguish it from $\delta$), and the path measures $\De\bff,\De\bg,\De\bu,\De\bv,\De\bw$ are implicitly defined with constraints \eqref{eq:gf_fields} and $\bw(t) = (\bu(t)^2-\bv(t)^2)/2$.

Using the Fourier representation of the delta function $\dirac(x) = \int_{-\im\infty}^{\im\infty} \de \hat x/ (2\pi \im) \, \napier^{\hat x x}$ (here and in the following, $\hat{\cdot}$ denotes conjugate variables), we compute the dataset-averaged partition function $\E Z$ as
\begin{align}
    \E Z & \propto \E \int \De \bff \De\hat \bff \De \bg \De\hat \bg \De\bu \De\bv \De\bw \, \exp\ab(\int \de t \, \hat \bff(t)^\transpose (\bff(t) - \bX(\bw(t) - \bw^*)) + \int \de t \, \hat \bg(t)^\transpose (\delta \bg(t) - \bX^\transpose (\bff(t) - \bxi))) \notag \\
         & = \E_{\bxi} \int \De \bff \De\hat \bff \De \bg \De\hat \bg \De\bu \De\bv \De\bw \, \exp\ab(\int \de t \, \hat \bff(t)^\transpose \bff(t) + \delta \int \de t \, \hat \bg(t)^\transpose \bg(t)) \E_{\bX} \napier^A \,, \label{eq:EZ}
\end{align}
where $\E_{\bX},\E_{\bxi}$ denote expectations over $\bX$, $\bxi$, respectively, and
\begin{align}
    A & \coloneqq - \int \de t \, \hat \bff(t)^\transpose \bX (\bw(t) - \bw^*) - \int \de t \, (\bff(t) - \bxi)^\transpose \bX \hat \bg(t) \,.
\end{align}

We calculate the expectation $\E_{\bX} \napier^A$ using a Gaussian integration over $\bX$ as follows.
\begin{align}
    \log \E_{\bX} \napier^A & = \log \int \de \bX \, \exp\ab(-\frac{d}{2} \tr(\bX^\transpose \bX) - \int \de t \, \hat \bff(t)^\transpose \bX (\bw(t) - \bw^*) - \int \de t \, (\bff(t) - \bxi)^\transpose \bX \hat \bg(t)) + \const \notag                                         \\
                            & = \frac{1}{2d} \int \de t \de t' \, \hat \bff(t)^\transpose \hat \bff(t') (\bw(t) - \bw^*)^\transpose (\bw(t') - \bw^*) + \frac{1}{2 d} \int \de t \de t' \, \hat \bg(t)^\transpose \hat \bg(t') (\bff(t) - \bxi)^\transpose (\bff(t') - \bxi) \notag \\
                            & \qquad + \frac{1}{d} \int \de t \de t' \, (\bff(t) - \bxi)^\transpose \hat \bff(t') \hat \bg(t)^\transpose (\bw(t') - \bw^*) + \const \,.
\end{align}
We introduce order parameters as follows.
\begin{alignat}{2}
    C_w(t,t') & \coloneqq \frac{1}{d} (\bw(t) - \bw^*)^\transpose (\bw(t') - \bw^*) \,,        & \quad R_w(t,t') & \coloneqq \frac{1}{d} (\bw(t) - \bw^*)^\transpose \hat \bg(t') \,, \\
    C_f(t,t') & \coloneqq \frac{1}{\delta d} (\bff(t) - \bxi)^\transpose (\bff(t') - \bxi) \,, & \quad
    R_f(t,t') & \coloneqq \frac{1}{\delta d} (\bff(t) -  \bxi)^\transpose \hat \bff(t') \,.
\end{alignat}
Using these order parameters, we have
\begin{align}
    \log \E_{\bX} \napier^A & = \frac{1}{2} \int \de t \de t' \, C_w(t,t') \hat \bff(t)^\transpose \hat \bff(t') + \frac{\delta}{2} \int \de t \de t' \, C_f(t,t') \hat \bg(t)^\transpose \hat \bg(t') + \delta d \int \de t \de t' \, R_f(t,t')R_w(t',t) + \const \,.
\end{align}
Inserting the definitions of the order parameters into \cref{eq:EZ} using the delta function as $\dirac(d C_w(t,t') - (\bw(t) - \bw^*)^\transpose (\bw(t') - \bw^*))$ and using the Fourier representation of the delta function, we get
\begin{align}
    \E Z \propto \int \De C_w \De\hat C_w \De C_f \De\hat C_f \De R_w \De\hat R_w \De R_f \De\hat R_f \, \napier^{d\Phi} \,, \label{eq:EZ_averaged}
\end{align}
where the action $\Phi$ is defined as
\begin{align}
    \Phi   & \coloneqq - \int \de t \de t' \, \ab(\hat C_w(t,t') C_w(t,t') + \delta \hat C_f(t,t') C_f(t,t') + \hat R_w(t,t') R_w(t,t') + \delta \hat R_f(t,t') R_f(t,t') - \delta R_f(t,t') R_w(t',t)) \notag \\
           & \qquad + \log Z_w + \delta \log Z_f \,,                                                                                                                                                           \\
    Z_w    & \coloneqq \int \De g \De \hat g \De u \De v \De w \, \napier^{\Phi_w} \,,                                                                                                                         \\
    Z_f    & \coloneqq \int \De f \De \hat f \, \napier^{\Phi_f} \,,                                                                                                                                           \\
    \Phi_w & \coloneqq \delta \int \de t \, \hat g(t) g(t) + \frac{\delta}{2} \int \de t \de t' \, C_f(t,t') \hat g(t) \hat g(t') \notag                                                                       \\
           & \qquad + \int \de t \de t' \, \ab(\hat C_w(t,t') (w(t) - w^*) (w(t') - w^*) + \hat R_w(t,t') (w(t) - w^*) \hat g(t')) \,,                                                                         \\
    \Phi_f & \coloneqq \int \de t \, \hat f(t) f(t) + \frac{1}{2} \int \de t \de t' \, C_w(t,t') \hat f(t) \hat f(t') \notag                                                                                   \\
           & \qquad + \int \de t \de t' \, \ab(\hat C_f(t,t') (f(t)-\xi)(f(t')-\xi) + \hat R_f(t,t') (f(t) - \xi) \hat f(t')) \,.
\end{align}
Here, the paths $f,g,u,v,w$ are now one-dimensional, and the dimensionality of the problem has been effectively reduced from $d$ to one.

In the $d \to \infty$ limit, we evaluate the integral \eqref{eq:EZ_averaged} using the saddle-point method. In the following, $\E$ denotes an expectation over measures $\napier^{\Phi_w} / Z_w$ and $\napier^{\Phi_f} / Z_f$. Taking derivatives of the action $\Phi$ with respect to the order parameters and setting them to zero, we get
\begin{alignat}{2}
    \diffp{\Phi}{\hat C_w(t,t')} & = -C_w(t,t') + \E[(w(t) - w^*) (w(t') - w^*)] = 0 \,,              & \quad \diffp{\Phi}{\hat R_w(t,t')} & = -R_w(t,t') + \E[(w(t) - w^*) \hat g(t')] = 0 \,,              \\
    \diffp{\Phi}{\hat C_f(t,t')} & = -\delta C_f(t,t') + \delta \E[(f(t) - \xi)(f(t') - \xi)] = 0 \,, & \quad \diffp{\Phi}{\hat R_f(t,t')} & = -\delta R_f(t,t') + \delta \E[(f(t) - \xi)\hat f(t')] = 0 \,,
\end{alignat}
and
\begin{alignat}{2}
    \diffp{\Phi}{C_w(t,t')} & = -\hat C_w(t,t') + \frac{\delta}{2} \E[ \hat f(t) \hat f(t')] = 0 \,,        & \quad \diffp{\Phi}{R_w(t,t')} & = -\hat R_w(t,t') + \delta R_f(t',t) = 0 \,,        \\
    \diffp{\Phi}{C_f(t,t')} & = -\delta \hat C_f(t,t') + \frac{\delta}{2} \E[ \hat g(t) \hat g(t')] = 0 \,, & \quad \diffp{\Phi}{R_f(t,t')} & = -\delta \hat R_f(t,t') + \delta R_w(t',t) = 0 \,.
\end{alignat}
We obtain
\begin{align}
    \hat R_w(t,t') = \delta R_f(t',t) \,, \quad \hat R_f(t,t') = R_w(t',t) \,.
\end{align}
Moreover, we can show the causality of the response functions, i.e.,
\begin{align}
    R_w(t',t) = R_f(t',t) = 0
\end{align}
for $t' > t$.

To obtain effective dynamics, we rewrite the effective path measures $\napier^{\Phi_w}$ and $\napier^{\Phi_f}$. We have
\begin{align}
    \int \De \hat g \, \napier^{\Phi_w} & = \int \De \hat g \, \exp\ab(\delta \int \de t \, \hat g(t) g(t) + \frac{\delta}{2} \int \de t \de t' \, C_f(t,t') \hat g(t) \hat g(t') + \delta \int \de t \de t' \, R_f(t,t') \hat g(t) (w(t') - w^*)) \notag \\
                                        & \qquad \times \exp\ab(\int \de t \de t' \, \hat C_w(t,t') (w(t) - w^*) (w(t') - w^*)) \notag                                                                                                                    \\
                                        & \propto \E_{z_g \sim \GP(0,\delta C_f)} \int \De \hat g \, \exp\ab(\delta \int \de t \, \hat g(t) \ab(g(t) - \frac{z_g(t)}{\delta} + \int \de t' \, R_f(t,t') (w(t') - w^*))) \notag                            \\
                                        & \qquad \times \exp\ab(\int \de t \de t' \, \hat C_w(t,t') (w(t) - w^*) (w(t') - w^*)) \notag                                                                                                                    \\
                                        & \propto \E_{z_g \sim \GP(0,\delta C_f)} \dirac\ab(g(t) - \frac{z_g(t)}{\delta} + \int_0^t \de t' \, R_f(t,t') (w(t') - w^*)) \notag                                                                             \\
                                        & \qquad \times \exp\ab(\int \de t \de t' \, \hat C_w(t,t') (w(t) - w^*) (w(t') - w^*)) \,,
\end{align}
where in the second line we used the \emph{Hubbard--Stratonovich transformation}
\begin{align}
    \exp\ab(\frac{1}{2} \int \de t \de t' A(t,t') x(t) x(t')) & \propto \int \De z \,  \exp\ab(-\frac{1}{2} \int \de t \de t' A^{-1}(t,t') z(t) z(t') - \int \de t \, x(t) z(t)) \\
                                                              & \propto \E_{z \sim \GP(0,A)} \exp\ab(- \int \de t \, x(t) z(t)) \,.
\end{align}
This result indicates that the effective path $g(t)$ satisfies the following equation.
\begin{align}
    g(t) = \frac{z_g(t)}{\delta} - \int_0^t \de t' \, R_f(t,t') (w(t') - w^*) \,. \label{eq:g_noGamma}
\end{align}

Similarly, for $\napier^{\Phi_f}$, we get
\begin{align}
    \int \De \hat f \, \napier^{\Phi_f} & \propto \E_{z_f \sim \GP(0,C_w)} \dirac\ab(f(t) - z_f(t) + \int_0^t \de t' \, R_w(t,t') (f(t') - \xi)) \notag \\
                                        & \qquad \times \exp\ab(\int \de t \de t' \, \hat C_f(t,t') (f(t)-\xi)(f(t')-\xi)) \,,
\end{align}
and thus the effective path $f(t)$ satisfies
\begin{align}
    f(t) = z_f(t) - \int_0^t \de t' \, R_w(t,t') (f(t') - \xi) \,.
\end{align}

Finally, we compute $R_w(t,t')$. We insert a source term into our effective action as
\begin{align}
    Z_w[J] \coloneqq \int \De g \De \hat g \De u \De v \De w \, \napier^{\Phi_w[J]} \,, \quad \Phi_w[J] \coloneqq \Phi_w + \int_0^t \de t \, J(t) \hat g(t) \,,
\end{align}
and compute $R_w(t,t')$ as
\begin{align}
    R_w(t,t') = \E[(w(t) - w^*) \hat g(t')] = \lim_{J \to 0} \diffp{}{J(t')} \E_J\ab[w(t) - w^*] \,,
\end{align}
where $\E_J$ denotes an expectation over the measure $\napier^{\Phi_w[J]}/Z_w[J]$. Then, the effective path corresponding to this measure is
\begin{align}
    g(t) = \frac{z_g(t) - J(t)}{\delta} - \int_0^t \de t' \, R_f(t,t') (w(t') - w^*) \,.
\end{align}
Therefore, we have
\begin{align}
    R_w(t,t') = \lim_{J \to 0} \E_J\ab[\diffp{(w(t) - w^*)}{J(t')}] = -\E\ab[\diffp{w(t)}{z_g(t')}] \,.
\end{align}

Similarly, we have
\begin{align}
    R_f(t,t') = \E[(f(t) - \xi)\hat f(t')] = -\E\ab[\diffp{f(t)}{z_f(t')}] \,.
\end{align}
$R_f(t,t')$ consists of a continuous bulk $R_f(t,t')$ for $t > t'$ and a delta spike at $t = t'$ with value $-1$.
Separating these two contributions, \cref{eq:g_noGamma} is written as
\begin{align}
    g(t) = \frac{z_g(t)}{\delta} + w(t) - w^* - \int_0^t \de t' \, R_f(t,t') (w(t') - w^*) \,,
\end{align}
where we abused the notation and used $R_f(t,t')$ for only the bulk contribution.

Collecting the above expressions, we obtain the following.
\begin{subequations}
    \begin{alignat}{2}
        C_w(t,t') & = \E[(w(t) - w^*) (w(t') - w^*)] \,, & \quad C_f(t,t') & = \E[(f(t) - \xi)(f(t') - \xi)]  \,, \\
        R_w(t,t') & = -\E\ab[\diffp{w(t)}{z_g(t')}] \,,  & \quad R_f(t,t') & = -\E\ab[\diffp{f(t)}{z_f(t')}] \,,
    \end{alignat}
    \vspace{-\baselineskip}
    \begin{align}
        f(t) & = z_f(t) - \int_0^t R_w(t,s)(f(s) - \xi) \de s \,,                              &  & z_f \sim \GP\ab(0,C_w) \,, \label{eq:f}     \\
        g(t) & = \frac{z_g(t)}{\delta} + w(t) - w^* - \int_0^t R_f(t,s) (w(s) - w^*) \de s \,, &  & z_g \sim \GP(0,\delta C_f) \,, \label{eq:g}
    \end{align}
    \vspace{-\baselineskip}
    \begin{gather}
        \diff{}{t}u(t) = - \frac{1}{2} u(t) (g(t) + \lambda) \,, \quad \diff{}{t}v(t) = - \frac{1}{2} v(t) (-g(t) + \lambda) \,, \quad w(t) = \frac{1}{2}(u(t)^2 - v(t)^2) \,.
    \end{gather} \label{eq:dmft_dln_heuristic}
\end{subequations}

\subsection{Simplifying the DMFT Equation}
\label{app:simplify_dmft}

The stochastic process $f(t)$ can be eliminated. Differentiating both sides of \cref{eq:f} by $z_f(t')$ ($t > t'$) and averaging, we get
\begin{align}
    R_f(t,t') & = R_w(t,t') - \int_{t'}^t R_w(t,s) R_f(s,t') \,\de s \,. \label{eq:Rf_simple}
\end{align}
Multiplying both sides of \cref{eq:f} by $(f(t')-\xi)$ and averaging, we get
\begin{align}
    C_f(t,t') & = \E[(z_f(t)-\xi)(f(t')-\xi)] - \int_0^t R_w(t,s) \E[(f(s)-\xi)(f(t') - \xi)] \de s \,.
\end{align}
By Stein's lemma (Gaussian integration-by-parts formula), we have
\begin{align}
    \E[z_f(t)(f(t')-\xi)] & = \int_0^{t'} \Cov(z_f(t), z_f(s)) \E\ab[\diffp{(f(t')-\xi)}{z_f(s)}] \de s \notag \\
                          & = C_w(t,t') - \int_0^{t'} C_w(t,s) R_f(t',s) \de s \,.
\end{align}
By \cref{eq:f}, we have
\begin{align}
    \E[\xi f(t)] & = \sigma^2 \int_0^t R_w(t,s) \de s - \int_0^t R_w(t,s) \E[\xi f(s)] \de s \,,
\end{align}
where we used $\E[\xi^2] = \sigma^2$. Comparing this with \cref{eq:Rf_simple}, we have
\begin{align}
    \E[\xi f(t)] & = \sigma^2 \int_0^t R_f(t,s) \de s \,,
\end{align}
and thus
\begin{align}
    C_f(t,t') & = C_w(t,t') + \sigma^2 - \int_0^{t'} R_f(t',s) (C_w(t,s) + \sigma^2) \de s - \int_0^t R_w(t,s) C_f(s,t') \de s \,.
\end{align}

We can eliminate $u(t)$ and $v(t)$ and express the dynamics in terms of $w(t)$.
The product $u(t)v(t)$ obeys a solvable dynamics:
\begin{align}
    \diff{}{t}(u(t)v(t)) & = v(t) \diff{}{t}u(t) + u(t) \diff{}{t}v(t)                                        \notag \\
                         & = - \frac{1}{2} u(t)v(t) (g(t) + \lambda) - \frac{1}{2} u(t)v(t) (-g(t) + \lambda) \notag \\
                         & = - \lambda u(t)v(t) \,,
\end{align}
from which we get
\begin{align}
    u(t)v(t) & = u(0)v(0) \napier^{-\lambda t} = \alpha^2 \napier^{-\lambda t} \,.
\end{align}
The dynamics of $w(t)$ is thus
\begin{align}
    \diff{}{t}w(t) & = \frac{1}{2} \diff{}{t}(u(t)^2 - v(t)^2) \notag                                             \\
                   & = u(t) \diff{}{t}u(t) - v(t) \diff{}{t}v(t) \notag                                           \\
                   & = - \frac{1}{2} u(t)^2 (g(t) + \lambda) + \frac{1}{2} v(t)^2 (-g(t) + \lambda) \notag        \\
                   & = - \frac{1}{2} (u(t)^2 + v(t)^2) g(t) - \lambda w(t) \notag                                 \\
                   & = - \sqrt{w(t)^2 + \alpha^4 \napier^{-2\lambda t}} g(t) - \lambda w(t) \,. \label{eq:dwdt_1}
\end{align}
In the last line, we used that
\begin{align}
    u(t)^2 + v(t)^2 & = \sqrt{(u(t)^2 - v(t)^2)^2 + 4u(t)^2v(t)^2} = 2\sqrt{w(t)^2 + \alpha^4 \napier^{-2\lambda t}} \,.
\end{align}

Combining these results, we obtain the simplified DMFT equation \eqref{eq:dmft_dln} for DLNs.

\section{DERIVATION OF THE LEARNING TIMESCALES}
\label{app:timescale}

In this section, we analyze timescale structures of the gradient flow dynamics in $\alpha \to \infty$ and $\alpha \to 0$ limits for general $\delta$. We follow the approach outlined in \cref{sec:timescale}, utilizing singular perturbation theory.

\subsection{Large Initialization: $\alpha \gg 1$}
\label{app:timescale_largeinit}

We show that the dynamics for large $\alpha$ (simulation shown in \cref{fig:timescale_large}) consists of two phases: \emph{lazy phase} and \emph{rich phase}.

\begin{figure}
    \begin{center}
        \includegraphics[width=\textwidth]{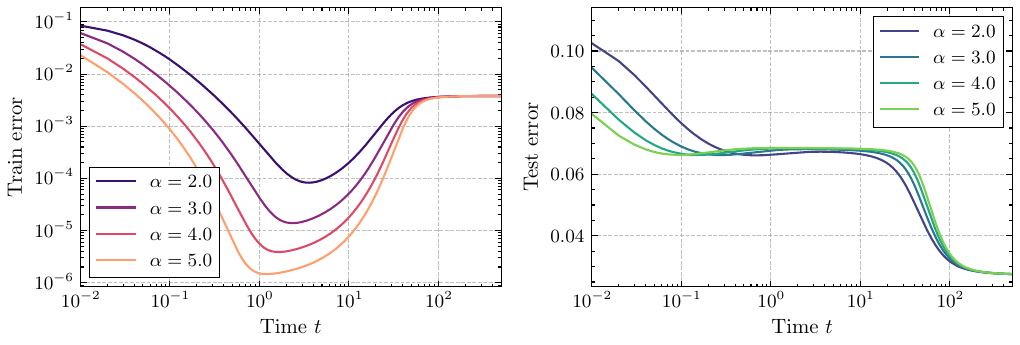}
        \caption{Training and test error dynamics for large $\alpha$ simulated with $d=200$.}
        \label{fig:timescale_large}
    \end{center}
\end{figure}

\paragraph{Lazy Phase: $t=O(\alpha^{-2})$.}

\newcommand{\lazy}{\mathrm{lazy}}

In this short timescale, the factor $\sqrt{w(t)^2 + \alpha^4 \napier^{-2\lambda t}}$ in \cref{eq:dmft_dln} is approximated as $\alpha^2 + o(1)$. This motivates a time rescaling $\bar t \coloneqq \alpha^2 t$. We then make the following ansatz for the DMFT solution on this timescale.
\begin{subequations} \label{eq:lazy_ansatz}
    \begin{gather}
        C_w(\bar t/\alpha^2,\bar t'/\alpha^2) = C_w^{\lazy}(\bar t,\bar t') + o(1) \,, \quad C_f(\bar t/\alpha^2,\bar t'/\alpha^2) = C_f^{\lazy}(\bar t,\bar t') + o(1) \,,               \\
        R_w(\bar t/\alpha^2,\bar t'/\alpha^2) = \alpha^2 R_w^{\lazy}(\bar t,\bar t') + o(\alpha^2) \,, \quad R_f(\bar t/\alpha^2,\bar t'/\alpha^2) = \alpha^2 R_f^{\lazy}(\bar t,\bar t') + o(\alpha^2) \,, \\
        \quad C_w^\lazy(\bar t,\bar t') = \E[(w^\lazy(\bar t) - w^*) (w^\lazy(\bar t') - w^*)] \,, \quad R_w^\lazy(\bar t,\bar t') = -\E\ab[\diffp{w^\lazy(\bar t)}{z^\lazy(\bar t')}] \,, \\
        w(\bar t/\alpha^2) = w^\lazy(\bar t) + o(1) \,, \quad z^\lazy \sim \GP(0,\delta C_f^\lazy) \,,
    \end{gather}
\end{subequations}
where $C_f^\lazy,R_f^\lazy$ are functions independent of $\alpha$ and $w^\lazy$ is a stochastic process independent of $\alpha$.

Up to the leading order, the dynamics of $w(t)$ are written as follows.
\begin{align}
    \diff{}{\bar t}w(\bar t/\alpha^2) & = - g(\bar t/\alpha^2) + O(\alpha^{-2}) \notag                                                                                                                   \\
                                      & = - \frac{z(\bar t/\alpha^2)}{\delta} - (w(\bar t/\alpha^2) - w^*) + \int_0^{\bar t} R_f^{\lazy}(\bar t,\bar s) (w(\bar s/\alpha^2) - w^*) \de \bar s + o(1) \,.
\end{align}
Thus, $w^\lazy(\bar t)$ satisfies a linear integro-differential equation
\begin{align}
    \diff{}{\bar t}w^\lazy(\bar t) = - \frac{z^\lazy(\bar t)}{\delta} - (w^\lazy(\bar t) - w^*) + \int_0^{\bar t} R_f^{\lazy}(\bar t,\bar s) (w^\lazy(\bar s) - w^*) \de \bar s \,, \quad z^\lazy \sim \GP(0,\delta C_f^\lazy) \,.
\end{align}

The stochastic process $w^\lazy(\bar t)$ can be eliminated along the same lines as in \cref{app:simplify_dmft} using the linearity of its dynamics to yield a closed system for correlation and response functions.
\begin{subequations}
    \begin{align}
        \diffp{}{\bar t}C_w^{\lazy}(\bar t,\bar t')  & = - C_w^{\lazy}(\bar t,\bar t') + \int_0^{\bar t'} R_w^{\lazy}(\bar t',\bar s) C_f^{\lazy}(\bar t,\bar s) \de \bar s + \int_0^{\bar t} R_f^{\lazy}(\bar t,\bar s) C_w^{\lazy}(\bar t',\bar s) \de \bar s \,,                        \\
        C_f^{\lazy}(\bar t,\bar t')                  & = C_w^{\lazy}(\bar t,\bar t') + \sigma^2 - \int_0^{\bar t'} R_f^{\lazy}(\bar t',\bar s) (C_w^{\lazy}(\bar t,\bar s) + \sigma^2) \de \bar s - \int_0^{\bar t} R_w^{\lazy}(\bar t,\bar s) C_f^{\lazy}(\bar t',\bar s) \de \bar s  \,, \\
        \diffp{}{\bar t} R_w^{\lazy}(\bar t,\bar t') & = - R_w^{\lazy}(\bar t,\bar t') + \int_{\bar t'}^{\bar t} R_f^{\lazy}(\bar t,\bar s) R_w^{\lazy}(\bar s,\bar t') \de \bar s  \,,                                                                                                    \\
        R_f^{\lazy}(\bar t,\bar t')                  & = R_w^{\lazy}(\bar t,\bar t') - \int_{\bar t'}^{\bar t} R_w^{\lazy}(\bar t,\bar s) R_f^{\lazy}(\bar s,\bar t') \de \bar s \,,
    \end{align} \label{eq:dmft_lazy}
\end{subequations}
with boundary conditions $C_w^\lazy(\bar t,0) = C_w^\lazy(0,\bar t) = 0$ and $R_w^\lazy(\bar t,\bar t)=1/\delta$ for $\bar t \geq 0$.

These equations are equivalent to the ones for (ridgeless) linear regression derived in \citet{fan2025dynamical} and \citet{bordelon2024dynamical}. Thus, in this dynamical regime, DLNs behave as linear models. \Cref{eq:dmft_lazy} can be solved explicitly as follows.
\begin{subequations}
    \begin{align}
        R_w^{\lazy}(\bar t,\bar t') & = \frac{1}{\delta} \int \napier^{-x (\bar t - \bar t')} \de \mu_\MP(x) \,,                                                                                                 \\
        R_f^{\lazy}(\bar t,\bar t') & = \frac{1}{\delta} \int x \napier^{-x(\bar t - \bar t')} \de \mu_\MP(x) \,,                                                                                                \\
        C_w^{\lazy}(\bar t,\bar t') & = \rho^2 \int \napier^{-x(\bar t+\bar t')} \de\mu_\MP(x) + \frac{\sigma^2}{\delta} \int \frac{1}{x} (1 - \napier^{-x\bar t})(1 - \napier^{-x\bar t'}) \de\mu_\MP(x) \,,    \\
        C_f^{\lazy}(\bar t,\bar t') & = \rho^2 \int x \napier^{-x(\bar t+\bar t')} \de\mu_\MP(x) + \frac{\sigma^2}{\delta} \int \napier^{-x(\bar t+\bar t')} \de\mu_\MP(x) + \frac{\delta-1}{\delta}\sigma^2 \,,
    \end{align}
\end{subequations}
where $\mu_\MP$ is the \emph{Marchenko--Pastur law}, the limiting eigenvalue spectrum of a random matrix $\delta^{-1} \bX^\transpose \bX$, whose density is given explicitly as follows.
\begin{align}
    \de \mu_\MP(x) & = \frac{\delta \sqrt{(\lambda_+ - x)(x - \lambda_-)}}{2\pi x} + (1-\delta)\dirac(x) \bone_{\delta < 1}, \quad \text{where} \; \lambda_{\pm} = \ab(1 \pm \frac{1}{\sqrt{\delta}})^2 \,, \label{eq:mp_law}
\end{align}
where $\dirac(x)$ is the Dirac delta function. Training and test errors are given by
\begin{align}
    E_\text{train}^\lazy(\bar t) = C_f^{\lazy}(\bar t,\bar t) \,, \quad E_\text{test}^\lazy(\bar t) = C_w^\lazy(\bar t,\bar t) + \sigma^2 \,. \label{eq:lazy_sol}
\end{align}
These solutions are checked against simulations in \cref{fig:timescale_large_lazy} and show good agreement.

\begin{figure}
    \begin{center}
        \includegraphics[width=\textwidth]{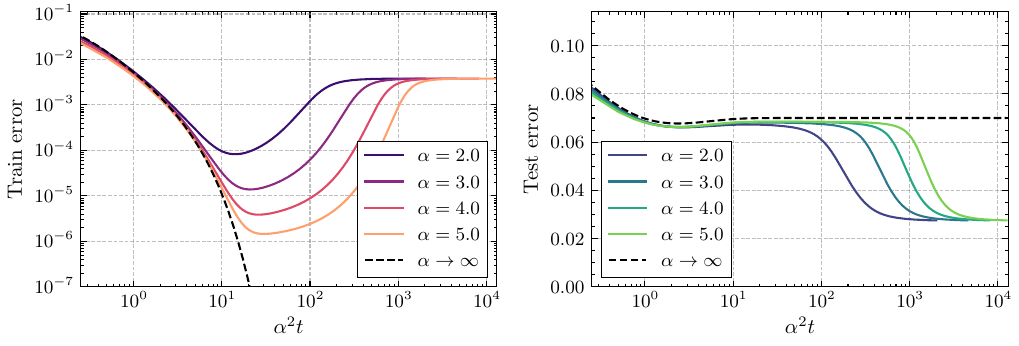}
        \caption{Training and test error dynamics for large $\alpha$, with time rescaled by $\alpha^2$. The initial descent of training and test errors collapses onto the limiting solution \eqref{eq:lazy_sol}}
        \label{fig:timescale_large_lazy}
    \end{center}
\end{figure}

\paragraph{Rich Phase: $t = 2\log(\alpha)/\lambda$.}
When $\lambda > 0$, the ansatz \eqref{eq:lazy_ansatz} breaks down when $w(t)$ and $\alpha^4 \napier^{-2\lambda t}$ are of the same order, which occurs at $t \approx t_c \coloneqq 2 \log \alpha / \lambda$.
Introducing a new time variable as $\bar t = t - t_c$, the parameter $w(\bar t)$ obeys the following new equation.
\begin{align}
    \diff{}{\bar t}w(\bar t) & = - \sqrt{w(\bar t)^2 + \napier^{-2\lambda \bar t}} g(\bar t) - \lambda w(\bar t) \,. \label{eq:dmft_rich}
\end{align}
A fixed point analysis in \cref{app:fixed_point} reveals that this equation converges to the $\ell_1$-regularized solution in time $O(1)$. These results are checked against simulations in \cref{fig:timescale_large_grok,fig:timescale_large_rich}.

\begin{figure}
    \begin{center}
        \includegraphics[width=\textwidth]{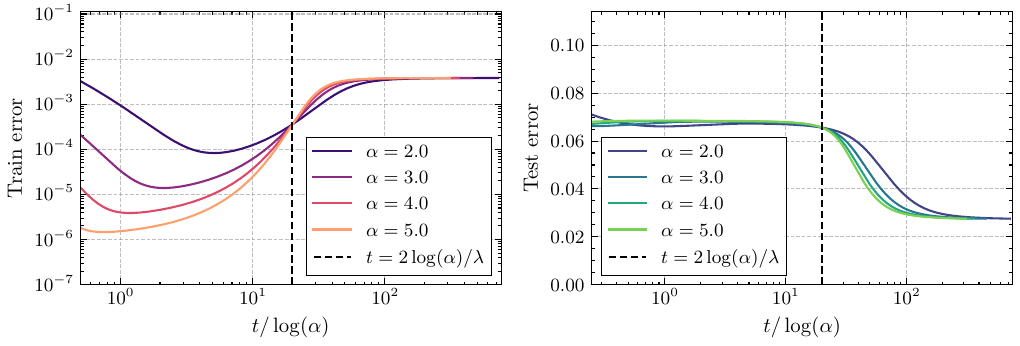}
        \caption{Training and test error dynamics for large $\alpha$, with time rescaled by $\log(\alpha)$. Transition times to the rich phase collapse to the same value of $t/\log(\alpha) = 2/\lambda$.}
        \label{fig:timescale_large_grok}
    \end{center}
\end{figure}

\begin{figure}
    \begin{center}
        \includegraphics[width=\textwidth]{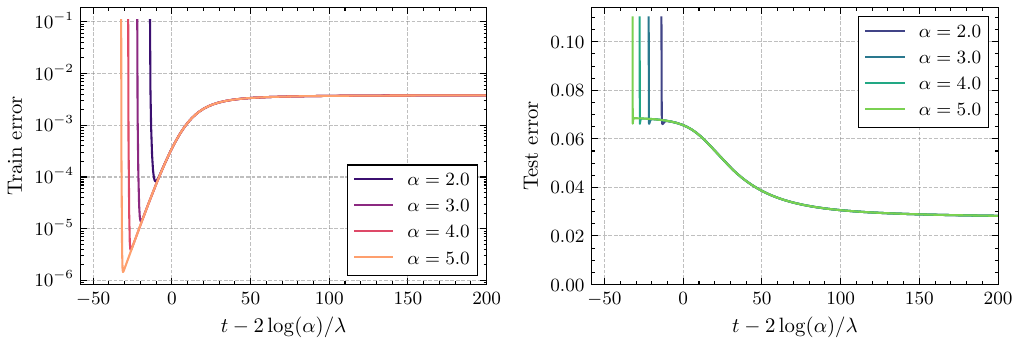}
        \caption{Training and test error dynamics for large $\alpha$, with time shifted by the transition time $2\log(\alpha)/\lambda$. These curves collapse, indicating that the dynamics after the transition proceed in time $O(1)$.}
        \label{fig:timescale_large_rich}
    \end{center}
\end{figure}

\subsection{Small Initialization: $\alpha \ll 1$}
\label{app:timescale_smallinit}

We show that the dynamics for small $\alpha$ (simulation shown in \cref{fig:timescale_small}) consists of two phases: \emph{search phase} and \emph{descent phase}.

\begin{figure}
    \begin{center}
        \includegraphics[width=\textwidth]{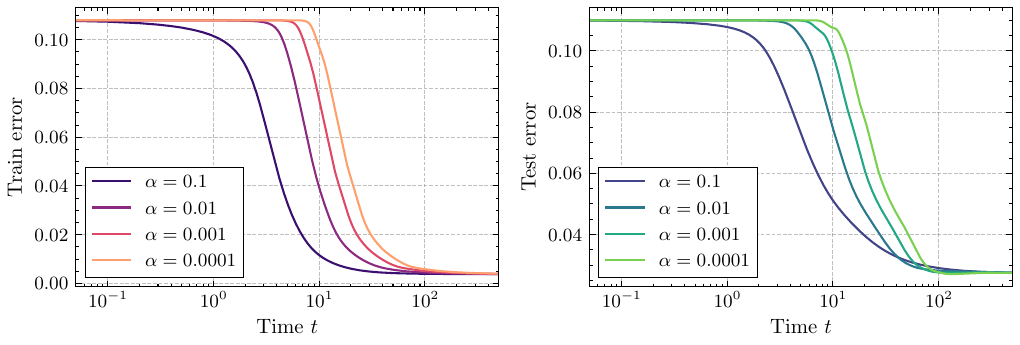}
        \caption{Training and test error dynamics for small $\alpha$ simulated with $d=200$.}
        \label{fig:timescale_small}
    \end{center}
\end{figure}

\paragraph{Search Phase: $t=O(1)$.}
Let $W(t) = w(t) / \alpha^2$. Assuming that $W(t) = O(1)$, the DMFT equation is approximated up to the leading order as follows.
\begin{alignat}{2}
    C_w(t,t') & = \rho^2 + O(\alpha^2) \,, & \quad C_f(t,t') & = \rho^2 + \sigma^2 + O(\alpha^2)  \,, \\
    R_w(t,t') & = O(\alpha^2) \,,          & \quad R_f(t,t') & = O(\alpha^2) \,,
\end{alignat}
and the dynamics of $W(t)$ is given by
\begin{gather}
    \diff{}{t}W(t) = (w^* - z(0)/\delta) \sqrt{W(t)^2 + \napier^{-2\lambda t}}  - \lambda W(t) + o(1) \,, \quad z(0) \sim \normal(0,\delta(\rho^2+\sigma^2)) \,.
\end{gather}
This equation can be solved explicitly as
\begin{align}
    W(t) = \frac{\sign(w^* - z(0)/\delta)}{2} (1 - \napier^{-2\abs{w^* - z(0)/\delta} t}) \napier^{(\abs{w^* - z(0)/\delta} - \lambda) t} \,.
\end{align}
Let $\Delta = \abs{w^* - z(0)/\delta} - \lambda$. For large $t$, a sample path $W(t)$ behaves as $\abs{W(t)} \approx (1/2) \napier^{\Delta t}$. When $\Delta < 0$, $W(t)$ converges exponentially to zero; when $\Delta > 0$, $\abs{W(t)}$ grows exponentially.

The noise term $z(0)/\delta$ captures a finite sample effect, which essentially acts as a noise that obscures the ground truth $w^*$. It vanishes as $\delta \to \infty$.

\paragraph{Descent Phase: $t=\Theta(\log(1/\alpha))$.}

\newcommand{\descent}{\mathrm{desc}}

As sample paths with $\Delta > 0$ grow, the assumption that $W(t) = O(1)$ breaks down. A transition to the second dynamical regime occurs when $w(t)=\alpha^2 W(t)$ becomes of $O(1)$, which happens at a timescale of $\Theta(\log(1/\alpha))$. This motivates the following rescaling of dynamical variables with $\bar t = t / \log(1/\alpha)$.
\begin{subequations}
    \begin{gather}
        C_w(\log(1/\alpha)\bar t,\log(1/\alpha)\bar t') = C_w^{\descent}(\bar t,\bar t')\,, \quad C_f(\log(1/\alpha) \bar t,\log(1/\alpha)\bar t') = C_f^{\descent}(\bar t,\bar t') \,, \\
        R_w(\log(1/\alpha)\bar t,\log(1/\alpha)\bar t') = \frac{1}{\log(1/\alpha)} R_w^{\descent}(\bar t,\bar t') \,, \quad R_f(\log(1/\alpha)\bar t,\log(1/\alpha)\bar t') = \frac{1}{\log(1/\alpha)} R_f^{\descent}(\bar t,\bar t') \,, \\
        w(\log(1/\alpha)\bar t) = w^{\descent}(\bar t) \,, \quad g(\log(1/\alpha)\bar t) = g^{\descent}(\bar t) \,.
    \end{gather}
\end{subequations}
The rescaled DMFT equation is
\begin{subequations}
    \begin{align}
        C_w^{\descent}(\bar t,\bar t')                              & = \E[(w^{\descent}(\bar t) - w^*) (w^{\descent}(\bar t') - w^*)] \,,                                                                                                                                                                                \\
        R_w^{\descent}(\bar t,\bar t')                              & = -\log(1/\alpha)\E\ab[\diffp{w^{\descent}(\bar t)}{z^{\descent}(\bar t')} ]  \,,                                                                                                                                                                   \\
        C_f^{\descent}(\bar t,\bar t')                              & = C_w^{\descent}(\bar t,\bar t') + \sigma^2 - \int_0^{\bar t'} R_f^{\descent}(\bar t',\bar s) (C_w^{\descent}(\bar t,\bar s) + \sigma^2)  \de \bar s - \int_0^{\bar t} R_w^{\descent}(\bar t,\bar s) C_f^{\descent}(\bar t',\bar s) \de \bar s  \,, \\
        R_f^{\descent}(\bar t,\bar t')                              & = R_w^{\descent}(\bar t,\bar t') - \int_{\bar t'}^{\bar t} R_w^{\descent}(\bar t,\bar s) R_f^{\descent}(\bar s,\bar t') \de \bar s \,,                                                                                                              \\
        g^{\descent}(\bar t)                                        & = \frac{z^{\descent}(\bar t)}{\delta} + w^{\descent}(\bar t) - w^* - \int_0^{\bar t} R_f^{\descent}(\bar t,\bar s) (w^{\descent}(\bar s) - w^*) \de \bar s \,, \quad z^{\descent} \sim \GP(0,\delta C_f^{\descent}) \,,                             \\
        \frac{1}{\log(1/\alpha)}\diff{}{\bar t}w^{\descent}(\bar t) & = - \sqrt{w^{\descent}(\bar t)^2 + \alpha^{4 + 2\lambda \bar t}} g^{\descent}(\bar t) - \lambda w^{\descent}(\bar t) \,.
    \end{align}
\end{subequations}

The time it takes for each path to become active (become of $\Theta(1)$) can be derived as follows. Let $W(\bar t) = w^{\descent}(\bar t) / \alpha^2$. Assuming that $1 \ll W(\bar t) \ll \alpha^{-2}$, the dynamics of $W(\bar t)$ is approximated up to the leading order as
\begin{align}
    \frac{1}{\log(1/\alpha)}\diff{}{\bar t}W(\bar t) & \approx - \abs{W(\bar t)} \ab(\frac{z^{\descent}(\bar t)}{\delta} - w^* + \int_0^{\bar t} R_f^{\descent}(\bar t,\bar s) w^* \de \bar s)  - \lambda W(\bar t) \,.
\end{align}
It can be solved as
\begin{align}
    W(\bar t) & \propto \exp\ab(\log(1/\alpha) \int_0^{\bar t} \abs*{\frac{z^{\descent}(\bar t')}{\delta} - w^* + \int_0^{\bar t'} R_f^{\descent}(\bar t',\bar s) w^* \de \bar s} \de \bar t' - \log(1/\alpha) \lambda \bar t) \,.
\end{align}
Thus, the time $\bar t_c$ at which $w(\bar t) = \alpha^2 W(\bar t)$ becomes of $\Theta(1)$ is given implicitly by
\begin{align}
    \int_0^{\bar t_c} \abs*{\frac{z^{\descent}(\bar t')}{\delta} - w^* + \int_0^{\bar t'} R_f^{\descent}(\bar t',\bar s) w^* \de \bar s} \de \bar t' - \lambda \bar t_c = 2 \,.
\end{align}
When $\delta \to \infty$, the left-hand side of the above equation reduces to $(\abs{w^*} - \lambda) \bar t_c$, and we thus have $t_c = \log(1/\alpha) \bar t_c = 2\log(1/\alpha) / (\abs{w^*} - \lambda)$ for the transition time, as derived in \cref{sec:timescale}.
As already discussed in the main text, the transition times $\bar t_c$ are different for each path, as opposed to the large initialization ($\alpha \gg 1$) case where the transition time $t_c = 2\log(\alpha)/\lambda$ is the same for all paths. We therefore observe \emph{incremental learning} with successive activation of paths.

After the transition, defining a new time variable $\bar t = t-\log(1/\alpha) \bar t_c$, the new dynamics is
\begin{align}
    \diff{}{\bar t}w(\bar t) & \approx - \abs{w(\bar t)} g(\bar t) - \lambda w(\bar t) \,,
\end{align}
which behaves similarly to \cref{eq:dmft_rich} for large $\bar t$.
$w(\bar t)$ converges in time $O(1)$ to the $\ell_1$ regularized solution for $\lambda > 0$ and minimum $\ell_1$ norm interpolator for $\lambda = 0$, as described in \cref{sec:longtime}.

The timescale is checked against numerical simulations in \cref{fig:timescale_small_transition}, showing that $\log(1/\alpha)$ is indeed the correct time scaling.

\begin{figure}
    \begin{center}
        \includegraphics[width=\textwidth]{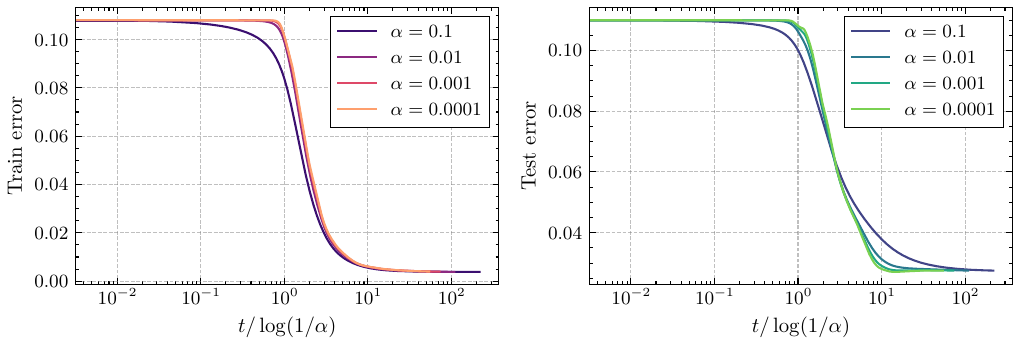}
        \caption{Training and test error dynamics for small $\alpha$, with time rescaled by $\log(1/\alpha)$. Learning curves collapse, indicating that the descent phase proceeds on the timescale $\Theta(\log(1/\alpha))$.}
        \label{fig:timescale_small_transition}
    \end{center}
\end{figure}

\section{DERIVATION OF THE LONG-TIME BEHAVIOR}
\label{app:longtime}

In this section, we derive \cref{res:fixedpoint,res:convergence_rates} by analyzing long-time behaviors of the DMFT equation \eqref{eq:dmft_dln}.

\subsection{Preliminary: Laplace Transform}

Throughout this section, we make extensive use of the \emph{Laplace transform}, which is a useful technique for analyzing linear differential equations. Given a function $f \colon \reals_{\geq 0} \to \reals$, its Laplace transform $\mathcal{L}[f] = \bar f$ is defined as
\begin{align}
    \bar f(p) \coloneqq \int_0^\infty f(t) \napier^{-p t} \de t \,,
\end{align}
for $p \in \complex$ with sufficiently large real part for the integral to be convergent.

We state several of its basic properties.
\begin{itemize}
    \item \emph{Linearity}. For $f \colon \reals_{\geq 0} \to \reals$ and $a,b \in \reals$, we have
          \begin{align}
              \mathcal{L}[af + b] = a\mathcal{L}[f] + b \,.
          \end{align}
    \item \emph{Laplace transforms of derivatives, integrals, and convolutions}. For $f,g \colon \reals_{\geq 0} \to \reals$, we have
          \begin{align}
              \mathcal{L}\ab[f'(t)](p)                      & = p \bar f(p) - f(0) \,,  \\
              \mathcal{L}\ab[\int_0^t f(s) \de s](p)        & = \frac{\bar f(p)}{p} \,, \\
              \mathcal{L}\ab[\int_0^t f(t-s) g(s) \de s](p) & = \bar f(p) \bar g(p) \,.
          \end{align}
    \item \emph{The final value theorem}. For $f \colon \reals_{\geq 0} \to \reals$, we have
          \begin{align}
              \lim_{t \to \infty} f(t) = \lim_{p \to 0} p \bar f(p) \,,
          \end{align}
          if all singularities of $\bar f$ lie on the left half-plane.
    \item \emph{Convergence rate}. For $f \colon \reals_{\geq 0} \to \reals$, let $p_c$ be the singularity of $\bar f$ with the largest real part. Then, as $t \to \infty$, $f(t) = \exp(- (\Real p_c) t + o(1))$.
\end{itemize}

\subsection{Fixed Point}
\label{app:fixed_point}

\subsubsection{Deriving the Fixed-Point Equations}

First, we derive equations for the fixed point of the DMFT equation \eqref{eq:dmft_dln}. The fixed point can be obtained by taking $t \to \infty$ and setting the time derivative $\difs{w(t)}{t}$ to zero. We make a \emph{time-translation invariance} (TTI) ansatz to handle integrals with response functions.

\paragraph{Case (i): $\lambda > 0$.} We assume that a long-time limit $t \to \infty$ exists with TTI response functions.
\begin{itemize}
    \item $w(t) \to w$ and $g(t) \to g$ (random, constant for each path),
    \item $C_w(t,t) \to C_w$ and $C_f(t,t) \to C_f$ (constants),
    \item $R_w(t,t') \approx R_w(t-t')$ and $R_f(t,t') \approx R_f(t-t')$ with $R_w(t),R_f(t) \to 0$ and $R_w(t),R_f(t)$ are both integrable.
\end{itemize}

Denote the integrated responses (\emph{susceptibilities}) as
\begin{align}
    \chi_w \coloneqq \int_0^\infty R_w(t) \de t \,, \quad \chi_f \coloneqq \int_0^\infty R_f(t) \de t \,.
\end{align}

Using TTI, the DMFT equation for $R_f$ becomes
\begin{align}
    R_f(t) = R_w(t) - \int_{0}^t R_w(t-s) R_f(s) \de s \,.
\end{align}
Taking the Laplace transform of both sides, we obtain
\begin{align}
    \bar R_f(p) = \bar R_w(p) - \bar R_w(p) \bar R_f(p) \,,
\end{align}
from which we get
\begin{align}
    \bar R_f(p) = \frac{\bar R_w(p)}{1 + \bar R_w(p)} \,. \label{eq:laplace_R}
\end{align}
Since $\bar R_w(0) = \chi_w$ and $\bar R_f(0) = \chi_f$, we get
\begin{align}
    \chi_f = \frac{\chi_w}{1 + \chi_w} \,. \label{eq:fix_chi_f}
\end{align}
Similar manipulations for $C_f$ yield
\begin{align}
    C_f = \frac{C_w + \sigma^2}{(1 + \chi_w)^2} \,.
\end{align}

Next, we derive pathwise fixed points of $w$ and $g$. As $t \to \infty$, the factor $\napier^{-2\lambda t}$ vanishes, and setting $\difs{w(t)}{t} = 0$ yields
\begin{align}
    0 = -\abs{w} g - \lambda w \,, \quad g = \frac{z}{\delta} + w - w^* - (w - w^*) \chi_f \,, \quad z \sim \normal(0,\delta C_f) \,.
\end{align}
From the first equation, we get $w = 0$ or $g = -\lambda \sign(w)$. In the case of $g = -\lambda \sign(w)$ we get from the second equation that
\begin{align}
    w + (1 + \chi_w)\lambda \sign(w) = w^* - \frac{1 + \chi_w}{\delta} z \,.
\end{align}
This equation has a solution if and only if $\abs{w} \geq (1 + \chi_w)\lambda$, otherwise we have $w = 0$. These solutions can be expressed using the soft thresholding function as
\begin{align}
    w = \ST\ab(w^* - \frac{1 + \chi_w}{\delta}z; (1 + \chi_w) \lambda) \,.
\end{align}

Finally, the response of $w$ to a \emph{constant} input $z$ gives the integrated response $\chi_w$.
\begin{align}
    \chi_w = -\E\ab[\diffp{w}{z}] = \frac{1 + \chi_w}{\delta} \partial_x \ST\ab(w^* - \frac{1 + \chi_w}{\delta}z; (1+\chi_w)\lambda) \,.
\end{align}

Collecting these results, we obtain the following system of equations for the fixed point.
\begin{equation}
    \boxed{\begin{gathered}
            C_w = \E[(w - w^*)^2] \,, \quad \chi_w = \frac{1+\chi_w}{\delta} \E\ab[\partial_x \ST\ab(w^* - \frac{1+\chi_w}{\delta}z; (1+\chi_w) \lambda)] \,, \quad C_f = \frac{C_w + \sigma^2}{(1 + \chi_w)^2} \,, \\
            w = \ST\ab(w^* - \frac{1 + \chi_w}{\delta} z; (1 + \chi_w) \lambda) \,, \quad z \sim \normal(0,\delta C_f) \,.
        \end{gathered}} \label{eq:fix_case1}
\end{equation}
This result is validated numerically as shown in \cref{fig:fixedpoint_case1}.

We note that the susceptibility $\chi_w$ is related to the \emph{train-test gap} because $(1 + \chi_w)^2$ is equal to the ratio between fixed points of training and test errors, $E_\text{train} = C_f$ and $E_\text{test} = C_w + \sigma^2$.

\begin{figure}
    \begin{center}
        \includegraphics[width=\textwidth]{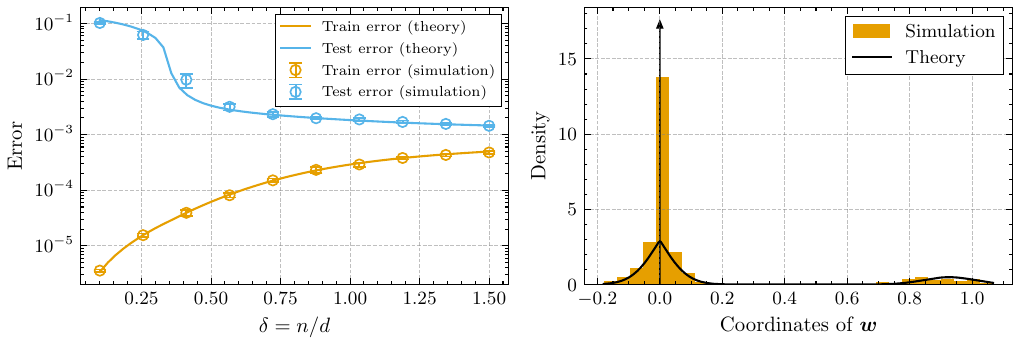}
        \caption{Fixed points for case (i) ($\lambda > 0$). (Left): Fixed point of train and test errors for different $\delta$. Simulations are run 10 times on independent data with $d=500$, and fixed points are obtained at $t=1000$. Error bars indicate one standard deviation. (Right): Distribution of coordinates of $\bw(\infty)$. The arrow indicates the delta spike at $w=0$ with its height scaled by the bin size.}
        \label{fig:fixedpoint_case1}
    \end{center}
\end{figure}

\paragraph{Case (ii): $\lambda = 0$, $\delta > 1$.} Calculation proceeds along the same lines as Case (i). We have the same fixed point equations for $\chi_f$ and $C_f$:
\begin{align}
    \chi_f = \frac{\chi_w}{1 + \chi_w} \,, \quad C_f = \frac{C_w + \sigma^2}{(1 + \chi_w)^2} \,. \label{eq:fix_fs}
\end{align}

Fixed point conditions for $w$ and $g$ are
\begin{align}
    0 = -\sqrt{w^2 + \alpha^4} g \,, \quad g = \frac{z}{\delta} + w - w^* - (w - w^*)\chi_f \,, \quad z \sim \normal(0,\delta C_f) \,.
\end{align}
From the first equation, we have $g = 0$. From the second equation, we get
\begin{align}
    w = w^* - \frac{1 + \chi_w}{\delta} z \,.
\end{align}

The susceptibility $\chi_w$ satisfies
\begin{align}
    \chi_w = -\E[\partial_z w] = \frac{1 + \chi_w}{\delta} \,, \quad \text{therefore} \; \chi_w = \frac{1}{\delta - 1} \,.
\end{align}

We can simplify the fixed point equation using the explicit form of $\chi_w$ to arrive at the following result.
\begin{equation}
    \boxed{\begin{gathered}
            C_w = \frac{\sigma^2}{\delta - 1} \,, \quad \chi_w = \frac{1}{\delta - 1} \,, \quad C_f = \frac{\delta - 1}{\delta} \sigma^2 \,, \\
            w = w^* - \frac{z}{\delta - 1} \,, \quad z \sim \normal(0,\delta C_f) \,.
        \end{gathered}} \label{eq:fix_case2}
\end{equation}
This result is validated numerically as shown in \cref{fig:fixedpoint_case2}.

If $\sigma^2 > 0$, the test error $C_w + \sigma^2$ diverges as $\delta \to 1$. This is the well-known \emph{double descent} peak \citep{belkin2019reconciling,hastie2022surprises}.

\begin{figure}
    \begin{center}
        \includegraphics[width=\textwidth]{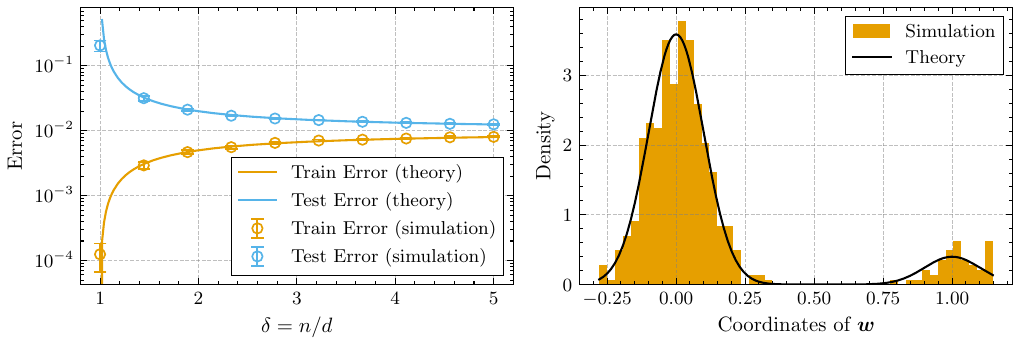}
        \caption{Fixed points for case (ii) ($\lambda =0$, $\delta > 1$). (Left): Fixed point of train and test errors for different $\delta$. Simulations are run 10 times on independent data with $d=500$, and fixed points are obtained at $t=1000$. Error bars indicate one standard deviation. (Right): Distribution of coordinates of $\bw(\infty)$.}
        \label{fig:fixedpoint_case2}
    \end{center}
\end{figure}

\paragraph{Case (iii): $\lambda = 0$, $\delta < 1$.} This case requires a more careful argument than the previous cases, since in this case the response function $R_w(\tau)$ does not vanish as $\tau \to \infty$. This corresponds to the fact that the minimum of the loss function is degenerate and that a perturbation to the system will permanently shift the solution.

Assuming that $R_w$ converges to a nonzero constant, its integral $\chi_w$ diverges to infinity. By \eqref{eq:fix_fs}, we have $\chi_f = 0$ and $C_f = 0$. To obtain fixed points for other variables, we need to know how fast they converge. Thus, we make the following ansatz
\begin{itemize}
    \item $w(t) \to w$ and $C_w(t,t) \to C_w$,
    \item $g(t) \to 0$ and integrable,
    \item $C_f(t,t') \to 0$ and integrable on $\reals_{\geq 0}^2$,
    \item $R_w(t,t') = R_w(t-t')$ and $R_w(t) \to R_w$,
    \item $R_f(t,t') = R_f(t-t')$ and $1 - \int_0^t R_f(s) \de s$ is integrable (with respect to $t$),
\end{itemize}
and define
\begin{align}
    \tilde \chi_f \coloneqq \int_0^\infty \ab(1 - \int_0^t R_f(s)\de s) \de t \,, \quad \tilde C_f \coloneqq \int_0^\infty \int_0^\infty C_f(t,t') \de t \de t' \,.
\end{align}

By \cref{eq:laplace_R}, we get
\begin{align}
    \frac{1 - \bar R_f(p)}{p} = \frac{1}{p(1 + \bar R_w(p))} \,.
\end{align}
Since $\lim_{p \to 0} (1 - \bar R_f(p)) / p = \tilde \chi_f$ and $\lim_{p \to 0} p \bar R_w(p) = R_w$, we have
\begin{align}
    \tilde \chi_f = \frac{1}{R_w} \,.
\end{align}
Similarly, for $C_f$, we get
\begin{align}
    \tilde C_f = \frac{C_w + \sigma^2}{R_w^2} \,.
\end{align}

Next, we derive the fixed point condition for $w$. The DMFT equation for $\difs{w(t)}{t}$ can be transformed as
\begin{align}
    \diff{}{t} \sinh^{-1} (w(t) / \alpha^2) & = -g(t) \,.
\end{align}
Integrating both sides, we have
\begin{align}
    \sinh^{-1} (w / \alpha^2) = -\int_0^\infty g(t) \de t \eqqcolon - \tilde g \,.
\end{align}

Next, we derive a fixed point condition for $g$. Integrating both sides of the DMFT equation for $g$, we get
\begin{align}
    \tilde g & = \frac{1}{\delta} \int_0^\infty z(t) \de t + (w - w^*) \int_0^\infty \ab(1 - \int_0^t R_f(t-s) \de s) \de t \notag \\
             & = \frac{1}{\delta} \int_0^\infty z(t) \de t + (w - w^*) \tilde \chi_f \,.
\end{align}
$\int_0^\infty z(t) \de t$ follows a Gaussian distribution with mean zero and variance
\begin{align}
    \E\ab[\ab(\int_0^\infty z(t) \de t)^2] = \int_0^\infty \int_0^\infty \E[z(t)z(t')] \de t \de t' = \int_0^\infty \int_0^\infty \delta C_f(t,t') \de t \de t' = \delta \tilde C_f \,.
\end{align}
Thus, we have
\begin{align}
    w + R_w \sinh^{-1}(w / \alpha^2) = w^* - \frac{R_w}{\delta} \tilde z \,, \quad \tilde z \sim \normal(0,\delta \tilde C_f) \,. \label{eq:fix_w_case3}
\end{align}

Finally, differentiating $w$ with respect to $\tilde z$ and taking the expectation gives the response $R_w$. Differentiating both sides of \eqref{eq:fix_w_case3} by $\tilde z$,
\begin{gather}
    \partial_{\tilde z} w + \frac{R_w \partial_{\tilde z} w}{\sqrt{w^2 + \alpha^4}} = -\frac{R_w}{\delta} \,. \notag \\
    R_w = -\E[\partial_{\tilde z} w] = \frac{R_w}{\delta} \E\ab[\frac{1}{1 + R_w / \sqrt{w^2 + \alpha^4}}] \,.
\end{gather}

Collecting these results, we have
\begin{equation}
    \boxed{\begin{gathered}
            C_w = \E[(w - w^*)^2] \,, \quad 1 = \frac{1}{\delta} \E\ab[\frac{1}{1 + R_w/\sqrt{w^2 + \alpha^4}}]  \,, \quad \tilde C_f = \frac{C_w + \sigma^2}{R_w^2} \,, \\
            w = f\ab(w^* - \frac{R_w}{\delta}\tilde z; \alpha^2,R_w) \,, \quad \tilde z \sim \normal(0,\delta \tilde C_f) \,. \\
        \end{gathered}} \label{eq:fix_case3}
\end{equation}
Here, $f(x;a,b)$ is the inverse (with respect to $x$) of a function
\begin{align}
    g(x;a,b) = x + b \sinh^{-1}(x/a) \,.
\end{align}
This result is validated numerically as shown in \cref{fig:fixedpoint_case3}.

\begin{figure}
    \begin{center}
        \includegraphics[width=\textwidth]{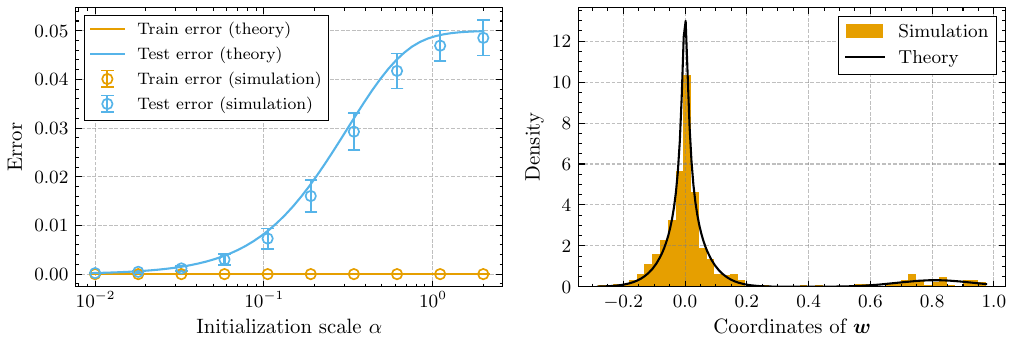}
        \caption{Fixed points for case (iii) ($\lambda =0$, $\delta < 1$). (Left): Fixed point of train and test errors for different $\alpha$. Simulations are run 10 times on independent data with $d=500$, and fixed points are obtained at $t=1000$. Error bars indicate one standard deviation. (Right): Distribution of coordinates of $\bw(\infty)$.}
        \label{fig:fixedpoint_case3}
    \end{center}
\end{figure}

\subsubsection{Analyzing the Minimization Problem}

We characterize the fixed point distributions \eqref{eq:fix_case1}, \eqref{eq:fix_case2}, \eqref{eq:fix_case3} as solutions of minimization problems. We consider the following minimization problem.
\begin{align}
    \hat \bw \coloneqq \argmin_{\bw \in \reals^d} \frac{1}{2n} \norm{\bX \bw - \by}_2^2 + \frac{\lambda}{d} \sum_{i=1}^d J(w_i) \,, \label{eq:min_problem}
\end{align}
for $\lambda > 0$ and a convex function $J \colon \reals \to \reals$. For $\lambda > 0$ and strictly convex $J$, the solution $\hat \bw$ is unique.

We proceed as follows.
\begin{enumerate}
    \item We characterize the empirical distribution of the entries of the minimizer $\hat \bw$ in the high-dimensional limit $n,d \to \infty$ using \emph{approximate message passing} (AMP). This gives a self-consistent equation for the limiting distribution.
    \item We show that, with a specific choice of the norm $J$, the self-consistent equation becomes equivalent to the fixed point equations for the DMFT equation.
\end{enumerate}

\paragraph{Characterizing the Empirical Distribution of the Minimizer via AMP.}
AMP is an iterative algorithm for solving high-dimensional statistical estimation tasks. A distinct feature of AMP is that its behavior can be rigorously tracked using a scalar recursion called \emph{state evolution} (SE). For a tutorial on AMP, see \citet{feng2022unifying}.

Let $\eta$ be the proximal operator defined as
\begin{align}
    \eta(u;t) = \prox_{t J}(u) \coloneqq \argmin_{x \in \reals} \ab\{ \frac{1}{2}(x-u)^2 + t J(x) \} \,.
\end{align}
We consider the following AMP iteration for $k=0,1,\dots$.
\begin{align}
    \hat \br^k = \by - \bX \hat\bw^k + b_k\hat \br^{k-1} \,, \quad \hat\bw^{k+1} = \eta\ab(\hat\bw^{k} + \frac{1}{\delta} \bX^\transpose \hat \br^k;t_{k+1}) \,, \label{eq:amp}
\end{align}
initialized with $\hat \bw^0 = \hat \br^{-1} = \bzero$, together with its state evolution
\begin{equation}
    \begin{gathered}
        \sigma_1^2 = \frac{\sigma^2 + \E[(W^*)^2]}{\delta} \,, \quad t_1 = \lambda (1 + b_0) \,, \quad b_k = \frac{1}{\delta} \E[\eta'(W^* + \sigma_k G_k; t_k)] \,, \\
        \sigma_{k+1}^2 = \frac{\sigma^2 + \E[(W^* - \eta(W^* + \sigma_k G_k; t_k))^2]}{\delta} \,, \quad t_{k+1} = \lambda + b_k t_k \,,
    \end{gathered}
\end{equation}
where $W^* \sim P_*$, $G_k \sim \normal(0,1)$ and $b_0 > 0$.

The \emph{master theorem} \citep[Theorem 4.2]{feng2022unifying} states that, under some regularity conditions, we have the following for any second-order pseudo-Lipschitz function $\psi\colon \reals \to \reals$, almost surely as $d \to \infty$.
\begin{align}
    \abs*{\frac{1}{d} \sum_{i=1}^d \psi(w^{k}_i,w^*_i) - \E[\psi(\eta(W^* + \sigma_k G),W^*)]} \to 0\,,
\end{align}
where $W^* \sim P_*$ and $G \sim \normal(0,1)$.
In other words, the joint empirical distribution of the entries of $(\bw^k,\bw^*)$ is asymptotically equivalent to the joint distribution of $(\eta(W^* + \sigma_k G),W^*)$.

Furthermore, it can be shown that the AMP iteration \eqref{eq:amp} converges to the minimizer $\hat \bw$ of the minimization problem \eqref{eq:min_problem} \citep[Theorem 1]{rangan2016fixed}. Thus, the empirical distribution of the entries of $\hat \bw$ can be characterized using the fixed point $(b_*,\sigma_*,t_*)$ of the SE recursion:
\begin{align}
    b_* = \frac{1}{\delta} \E[\eta'(W^* + \sigma_* G;t_*)] \,, \quad \sigma_*^2 = \frac{\sigma^2 + \E[(W^* - \eta(W^* + \sigma_* G; t_*))^2]}{\delta} \,, \quad t_* = \lambda + b_* t_* \,. \label{eq:fix_se}
\end{align}

Next, we map the SE fixed point \eqref{eq:fix_se} to each of the DMFT fixed points.

\paragraph{Case i: $\lambda > 0$.}
We show that by choosing $J(w) = \abs{w}$, the SE fixed point \eqref{eq:fix_se} becomes equivalent to the DMFT fixed point \eqref{eq:fix_case1}. When $J(w) = \abs{w}$, the proximal operator $\eta$ is the soft thresholding function $\eta(x;t) = \ST(x;t)$ and the SE fixed point corresponds to the DMFT fixed point with the following mapping.
\begin{align}
    b_* \to \frac{\chi_w}{1 + \chi_w} \,, \quad \sigma_*^2 \to \frac{(1+\chi_w)^2}{\delta^2} \cdot \delta C_f = \frac{C_w + \sigma^2}{\delta} \,, \quad t_* = \frac{\lambda}{1 - b_*} \to (1 + \chi_w) \lambda \,. \label{eq:map_se_dmft}
\end{align}
Thus, the fixed point of the gradient flow for DLNs is asymptotically equivalent to the $\ell_1$ regularized solution. This is natural since $\ell_2$ regularization on $\bu$ and $\bv$ are equivalent to $\ell_1$ regularization on $\bw = (\bu^2-\bv^2)/2$.

\paragraph{Case ii: $\lambda = 0$, $\delta > 1$.}
In this case, the penalty term vanishes since $\lambda = 0$. Then the proximal operator is the identity function $\eta(x;t) = x$ and the SE fixed point \eqref{eq:fix_se} is solved by
\begin{align}
    b_* = \frac{1}{\delta} \,, \quad \sigma_*^2 = \frac{\sigma^2}{\delta-1} \,, \quad t_* = \frac{\delta}{\delta - 1} \lambda \,.
\end{align}
Again, with the same mapping as \eqref{eq:map_se_dmft}, we recover the DMFT fixed point \eqref{eq:fix_case2}.

\paragraph{Case iii: $\lambda = 0$, $\delta < 1$.}
We take $J(w) = w \sinh^{-1}(w / \alpha^2) - \sqrt{w^2 + \alpha^4} + \alpha^2$ and send $\lambda \to 0$. This corresponds to the following constrained minimization problem.
\begin{align}
    \min_{\bw \in \reals^d} \frac{1}{d} \sum_{i=1}^d J(w_i) \quad \text{subject to} \quad \bX \bw = \by \,.
\end{align}
We assume that the $\lambda \to 0$ limit of the SE fixed point characterizes the solution of the above constrained minimization problem (this amounts to assuming that $\lambda \to 0$ and $d \to \infty$ limits commute).

The proximal operator is
\begin{align}
    \eta(u;t_*) = \argmin_{x \in \reals} \ab\{\frac{1}{2}(x-u)^2 + t_* J(x) \} \,,
\end{align}
and $x_* \coloneqq \eta(u;t_*)$ satisfies
\begin{align}
    0 = x_* - u + t_* J'(x_*) = x_* - u + t_* \sinh^{-1}(x_*/\alpha^2) \,.
\end{align}
Then $\eta'(u;t_*) = \partial_u x_*$ satisfies, by the implicit function theorem,
\begin{align}
    0 & = \partial_u x_* - 1 + \frac{t_*}{\sqrt{x_*^2 + \alpha^4}} \partial_u x_* \,.
\end{align}
Thus, the equation for $b_*$ is
\begin{align}
    1 - \frac{\lambda}{t_*} & = \frac{1}{\delta} \E\ab[\frac{1}{1 + t_* / \sqrt{\eta(W^* + \sigma_* G; t_*)^2 + \alpha^4}}] \,.
\end{align}
Taking $\lambda \to 0$, we recover the DMFT fixed point \eqref{eq:fix_case3} with the following mapping.
\begin{align}
    \sigma_*^2 \to \frac{R_w^2}{\delta^2} \cdot \delta \tilde C_f = \frac{C_w + \sigma^2}{\delta} \,, \quad t_* \to R_w \,.
\end{align}

\subsection{Convergence Rate}
\label{app:convergence_rate}

\subsubsection{Regularized Case $\lambda > 0$}
We linearize the dynamics around the fixed point. Let $w(t) = w + \Delta w(t)$ and $g(t) = g + \Delta g(t)$ where $w$ and $g$ are the fixed points \eqref{eq:fix_case1}. We make the following assumptions and approximations.
\begin{itemize}
    \item $\Delta w(t)$ and $\Delta g(t)$ are small, and terms of second or higher order can be ignored.
    \item $w(t)$ converge slower than $\napier^{-\lambda t}$ and hence the $\alpha^2 \napier^{-2 \lambda t}$ term can be ignored compared to $w(t)^2$.
    \item Response functions are TTI.
\end{itemize}

For paths with $w = 0$, we have
\begin{align}
    \diff{}{t} \Delta w(t) & \approx -\abs{\Delta w(t)} (g + \Delta g(t)) - \lambda \Delta w(t) \approx -(\lambda + \sign(\Delta w(t))g) \Delta w(t) = -(\lambda - \abs{g})\Delta w(t) \,,
\end{align}
where in the last equality, we used $\sign(\Delta w(t)) = -\sign(g)$. From the condition $w = 0$, we have
\begin{align}
    \abs{g} = \abs*{\frac{z}{\delta} - \frac{w^*}{1 + \chi_w}} \leq \lambda \,,
\end{align}
and thus $\lambda - \abs{g} \geq 0$. Therefore, paths with $w = 0$ converge to zero as $\abs{w(t)} \sim \napier^{-(\lambda-\abs{g}) t}$. This rate is consistent with the assumption that $w(t)$ converges slower than $\napier^{-\lambda t}$. This rate further implies that, the closer the observation $w^* - (1 + \chi_w) z / \delta$ is to the threshold $(1 + \chi_w)\lambda$, the slower the convergence. Since there are paths with $\lambda - \abs{g}$ arbitrarily small (because of the continuous nature of the noise $z$), the convergence of macroscopic observables (such as training and test errors) is subexponential.

For paths with $w \neq 0$, we have
\begin{align}
    \diff{}{t} \Delta w(t) & \approx -\abs{w + \Delta w(t)} (g + \Delta g(t)) - \lambda (w + \Delta w(t)) \notag                                                \\
                           & \approx -(\sign(w) g + \lambda) (w + \Delta w(t)) - \abs{w} \Delta g(t) \notag                                                     \\
                           & = -\abs{w} \Delta g(t) \notag                                                                                                      \\
                           & = -\abs{w} \ab(\frac{\Delta z(t)}{\delta} + \Delta w(t) - \int_0^t R_f(t-s) \Delta w(s) \de s) \,, \label{eq:dmft_dwdt_linearized}
\end{align}
where in the third line we used $g = -\lambda \sign(w)$. Taking the Laplace transform, we get
\begin{align}
    p \Delta \bar w(p) - w(0) = -\abs{w} \ab(\frac{\Delta \bar z(p)}{\delta} + \Delta \bar w(p) - \bar R_f(p) \Delta \bar w(p)) \,,
\end{align}
and we get
\begin{align}
    \Delta \bar w(p) = \frac{w(0) -\abs{w} \Delta \bar z(p)/\delta}{p + \abs{w} (1 - \bar R_f(p))} \,.
\end{align}
The long-time behavior of $\Delta w(t)$ is controlled by the singularity $p_c$ of $\Delta \bar w(p)$ with the largest real part. It is the point at which $p_c + \abs{w} (1 - \bar R_f(p_c)) = 0$. For small $\abs{w}$, we have $p \approx 0$, and we can approximate $\bar R_f(p) \approx \bar R_f(0) = \chi_f$. Thus, the asymptotic behavior is approximately $\abs{\Delta w(t)} \sim \exp(-\abs{w}(1 - \chi_f) t) = \exp(-\frac{\abs{w}}{1 + \chi_w} t)$. Again, when the magnitude of the observation $w^* - (1+\chi_w)z / \delta$ is closer to the threshold $\lambda$, $\abs{w}$ is small and thus the convergence is slow. There are paths with arbitrarily small $\abs{w}$ and hence the convergence of macroscopic observables is subexponential.

\subsubsection{Unregularized Case $\lambda = 0$}

The linearized dynamics around the fixed point are
\begin{align}
    \diff{}{t} \Delta w(t) & \approx -\sqrt{w^2 + \alpha^4} \Delta g(t) = -\sqrt{w^2 + \alpha^4} \ab(\frac{\Delta z(t)}{\delta} + \Delta w(t) - \int_0^t R_f(t-s) \Delta w(s) \de s) \,.
\end{align}
Taking the Laplace transform, we find that the singularity of $\Delta \bar w(p)$ satisfies $p_c + \sqrt{w^2 + \alpha^4} (1 - \bar R_f(p_c)) = 0$, implying slower convergence for smaller $w$. However, unlike the case with $\lambda > 0$, we can still have $p_c \neq 0$ even for $w = 0$, which implies exponential convergence. We can explicitly determine the rate, as we discuss below.

\paragraph{Convergence Rates of Response Functions.}
We derive the convergence rates of response functions $R_w$ and $R_f$. By TTI, we have
\begin{align}
    \diff{}{\tau} \hat R_w(\tau) & = \sqrt{w^2 + \alpha^4} \ab(-\hat R_w(\tau) + \int_{0}^\tau R_f(\tau-\sigma) \hat R_w(\sigma) \de \sigma) \,, \\
    R_f(\tau)                    & = -\int_{0}^\tau R_w(\tau-\sigma) R_f(\sigma) \de \sigma + R_w(\tau) \,,
\end{align}
where $\hat R_w(t,t') \coloneqq -\difsp{w(t)}{z(t')}$ (without expectations) and assumed TTI for $\hat R_w$ as well.
Taking the Laplace transform, we obtain the following equations.
\begin{align}
    \bar R_w(p) = \frac{1+\bar R_w(p)}{\delta} \E\ab[\frac{\sqrt{w^2 + \alpha^4}}{p(1+\bar R_w(p)) + \sqrt{w^2 + \alpha^4}}] \,, \quad \bar R_f(p) = \frac{\bar R_w(p)}{1 + \bar R_w(p)} \,. \label{eq:barRw}
\end{align}

First, we consider the convergence rate of $R_w$. To this end, we compute the rightmost singularity $p_c$ of $\bar R'_w(p) \coloneqq p \bar R_w(p)$. The factor $p$ is introduced to eliminate the pole at $p=0$, which exists when $\delta < 1$ due to the nonzero fixed point of $R_w(t)$. Note that this factor does not alter the location of other singularities, and the convergence rate is determined from the singularity with the largest negative real part.

By \cref{eq:barRw}, $\bar R'_w(p)$ satisfies
\begin{align}
    \bar R'_w(p) = \frac{p+\bar R'_w(p)}{\delta} \E\ab[\frac{\sqrt{w^2 + \alpha^4}}{p+\bar R'_w(p) + \sqrt{w^2 + \alpha^4}}] \,.
\end{align}
The rightmost singularity $p_c$ can be found by defining
\begin{align}
    h(R',p) & \coloneqq R' - \frac{p + R'}{\delta} \E\ab[\frac{\sqrt{w^2 + \alpha^4}}{p + R' + \sqrt{w^2 + \alpha^4}}],
\end{align}
and solving the following system of equations:
\begin{align}
    h(R',p_c)               & = R' - \frac{p_c + R'}{\delta} \E\ab[\frac{\sqrt{w^2 + \alpha^4}}{p_c+R' + \sqrt{w^2 + \alpha^4}}] = 0 \,, \\
    \partial_{R'} h(R',p_c) & = 1 - \frac{1}{\delta} \E\ab[\ab(\frac{\sqrt{w^2 + \alpha^4}}{p_c+R' + \sqrt{w^2 + \alpha^4}})^2] = 0 \,.
\end{align}
Let $u=R'+p_c$. By the second equation, $u$ satisfies the following equation:
\begin{align}
    1 - \frac{1}{\delta} \E\ab[\ab(\frac{\sqrt{w^2 + \alpha^4}}{u + \sqrt{w^2 + \alpha^4}})^2] & = 0 \,. \label{eq:u}
\end{align}
Let $u_*$ be the solution of the above equation. By solving the equation $h(R',p_c)=0$ for $R'$, we have
\begin{align}
    R' = \frac{u_* A}{\delta} \,, \quad A \coloneqq \E\ab[\frac{\sqrt{w^2 + \alpha^4}}{u_* + \sqrt{w^2 + \alpha^4}}] \,.
\end{align}
We thus have
\begin{align}
    p_c = u_* - R' = \frac{\delta - A}{\delta} u_* \,.
\end{align}
It follows that the convergence rate is $R_w(t) = \exp(-\gamma t + o(1))$ where $\gamma = -p_c$.

Next, we consider the convergence rate of $R_f$. By \cref{eq:barRw}, assuming that $1 + \bar R_w(p)$ is never zero for $p$ with $\Real p \geq \Real p_c$, the rightmost singularity of $\bar R_f$ is simply that of $\bar R_w$, and $R_f$ has the same convergence rate as $R_w$.

In summary, the convergence rates $\gamma$ of $R_w$ and $R_f$ can be obtained by solving the following system.
\begin{equation}
    \boxed{
        \E\ab[\ab(\frac{\sqrt{w^2 + \alpha^4}}{u + \sqrt{w^2 + \alpha^4}})^2] = \delta \,, \quad A = \E\ab[\frac{\sqrt{w^2 + \alpha^4}}{u + \sqrt{w^2 + \alpha^4}}] \,, \quad \gamma = \frac{A - \delta}{\delta} u \,.} \label{eq:convergence_rate_eqn}
\end{equation}

\paragraph{Convergence Rates of Correlation Functions.}
Next, we derive the rates for correlation functions $C_w$ and $C_f$. Since these are bivariate functions, we use the bivariate Laplace transform, defined for $f \colon \reals_{\geq 0}^2 \to \reals$ and $p,q \in \complex$ with sufficiently large real parts as
\begin{align}
    \bar f(p,q) = \int_0^\infty \int_0^\infty f(t,s) \napier^{-pt - qs} \de t \de s \,.
\end{align}
Taking the Laplace transform of the equations for $C_w$ and $C_f$, we obtain
\begin{align}
    \bar C_w(p,q) = \E[(\bar w(p) - w^*/p) (\bar w(q) - w^*/q)] \,, \quad \bar C_f(p,q) = \frac{\bar C_w(p,q) + \sigma^2 / (pq)}{(1 + \bar R_w(p))(1 + \bar R_w(q))} \,.
\end{align}
By the second equation, $pq \bar C_f(p,q)$ has singularities at $p = -\gamma$ and $q = -\gamma$. This implies that $C_f(t,s)$ behaves as $\abs{C_f(t,s) - C_f(\infty,\infty)} = \exp(-\gamma (t+s) + o(1))$ and thus $\abs{L(t)-L(\infty)} = \abs{C_f(t,t) - C_f(\infty,\infty)} = \exp(-2\gamma t + o(1))$. This result is validated against numerical simulations in \cref{fig:convergencerate_details}.

\begin{figure*}[t]
    \begin{center}
        \begin{subfigure}{0.49\textwidth}
            \includegraphics[width=\textwidth]{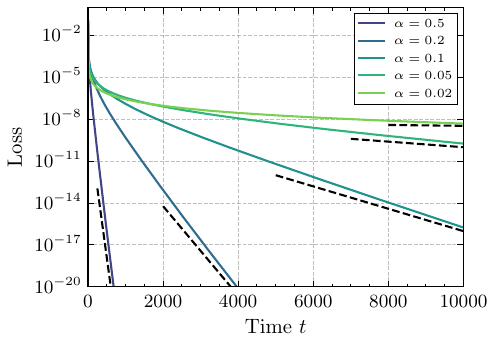}
            \caption{Convergence of the loss for $\delta = 2$.}
        \end{subfigure}
        \begin{subfigure}{0.49\textwidth}
            \includegraphics[width=\textwidth]{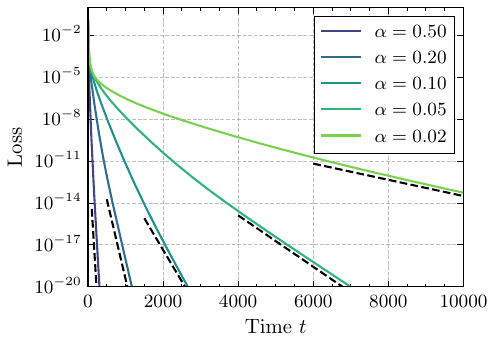}
            \caption{Convergence of the loss for $\delta = 0.5$.}
        \end{subfigure}
        \caption{Convergence of the loss for $\delta = 2 > 1$ and $\delta = 0.5 < 1$. As the initialization scale $\alpha$ decreases, the convergence rate $\gamma$ becomes slower. Simulations are run for $d=500$ and show good agreement with the theoretical rate.}
        \label{fig:convergencerate_details}
    \end{center}
\end{figure*}

\paragraph{Nonnegativity and Monotonicity of the Convergence Rate.}

We prove basic properties of the convergence rate $\gamma$.
\begin{proposition}
    Let $\gamma$ be the solution of \cref{eq:convergence_rate_eqn}. When $\delta \neq 1$, we have $\gamma > 0$ and $\difs{\gamma}{\alpha} > 0$.
\end{proposition}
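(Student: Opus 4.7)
The plan is to first obtain a cleaner algebraic form of $\gamma$ from which positivity is manifest, and then to deduce the monotonicity in $\alpha$ by implicit differentiation of the coupled defining system. Write $X\coloneqq\sqrt{w^2+\alpha^4}$ and $Y\coloneqq X/(u+X)$, so that \eqref{eq:convergence_rate_eqn} reads $\E[Y^2]=\delta$ and $A=\E[Y]$. Using the first relation to eliminate $\delta$, I would compute
\begin{align*}
A-\delta \;=\; \E[Y-Y^2] \;=\; \E[Y(1-Y)] \;=\; u\,\E\ab[\frac{X}{(u+X)^2}],
\end{align*}
which yields the identity $\gamma \;=\; u(A-\delta)/\delta \;=\; u^2\,\E\ab[X/(u+X)^2]/\delta$. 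Since $X>0$ almost surely, the expectation is strictly positive, so $\gamma\geq 0$ with equality iff $u=0$. But $u=0$ forces $Y\equiv 1$ and hence $\delta=\E[Y^2]=1$, contradicting $\delta\neq 1$. Therefore $\gamma>0$.

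For monotonicity I would apply the implicit function theorem to $\Phi(u,\alpha)\coloneqq \E[X(\alpha)^2/(u+X(\alpha))^2]=\delta$, keeping in mind that $X(\alpha)=\sqrt{w(\alpha)^2+\alpha^4}$ depends on $\alpha$ both explicitly and through the fixed point $w(\alpha)$. A short calculation gives $\partial_u\Phi = -2\,\E[X^2/(u+X)^3]$ and $\partial_\alpha\Phi = 2u\,\E[XX'/(u+X)^3]$ with $X'\coloneqq dX/d\alpha$, so that
\begin{align*}
\frac{du}{d\alpha} \;=\; u\,\frac{\E[XX'/(u+X)^3]}{\E[X^2/(u+X)^3]}.
\end{align*}
Differentiating the identity $\gamma=u^2\,\E[X/(u+X)^2]/\delta$ along $u=u(\alpha)$ and substituting this expression reduces $d\gamma/d\alpha$ to a weighted expectation in $X$ and $X'$; the aim is to show this expression is strictly positive.

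The hard part will be controlling the sign of $X'$ in the presence of feedback from the fixed-point equations. In case (ii) ($\delta>1$) the fixed point $w=w^*-z/(\delta-1)$ is independent of $\alpha$, so $X'=2\alpha^3/X>0$ pointwise and the computation closes easily. In case (iii) ($\delta<1$) the fixed point is defined implicitly by $w + R_w\sinh^{-1}(w/\alpha^2)=y$, with $y$, $R_w$, and $\tilde C_f$ themselves determined self-consistently by \eqref{eq:fix_case3}, so $X'$ involves derivatives of the entire coupled system. I would attack this via the reparametrization $\tilde w\coloneqq w/\alpha^2$, $\tilde u\coloneqq u/\alpha^2$, under which $X/\alpha^2=\sqrt{\tilde w^2+1}$ cleanly separates the explicit $\alpha$-scaling from the (self-consistent) distribution of $\tilde w$. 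A potentially cleaner alternative is a monotone-coupling argument: constructing a coupling between the fixed points at $\alpha$ and $\alpha+d\alpha$ under which $X$ dominates stochastically, whence monotonicity of $\gamma$ follows from the compact formula $\gamma=u^2\,\E[X/(u+X)^2]/\delta$ once one tracks the accompanying response of $u$.
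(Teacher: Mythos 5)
Your positivity argument is exactly the paper's: from $\E[Y^2]=\delta$ one gets $\gamma = u(A-\delta)/\delta = u^2\,\E[X/(u+X)^2]/\delta$, and $u\neq 0$ whenever $\delta\neq 1$ (since $u=0$ forces $Y\equiv 1$ and $\delta=1$). That part is complete and correct.

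The monotonicity part, however, is not a proof but a plan that stops precisely where the work is. You set up the implicit function theorem on $\E[X^2/(u+X)^2]=\delta$, write $d\gamma/d\alpha$ as ``a weighted expectation in $X$ and $X'$'' whose positivity is ``the aim,'' and then list two candidate strategies (rescaling by $\alpha^2$, monotone coupling) without carrying either one out. The paper closes the computation by taking $X'=\partial_\alpha X = 2\alpha^3/X$, i.e.\ differentiating only the \emph{explicit} $\alpha$-dependence of $X=\sqrt{w^2+\alpha^4}$ and treating the law of $w$ at the fixed point as frozen. With that convention, $\partial_u B=-2\E[X^2/(u+X)^3]$ and $\partial_\alpha B=4\alpha^3 u\,\E[(u+X)^{-3}]$ give $du/d\alpha = 2\alpha^3 u\,\E[(u+X)^{-3}]/\E[X^2(u+X)^{-3}]$, and substituting into $d\gamma/d\alpha=\frac{1}{\delta}\bigl((2uC+u^2\partial_u C)\,du/d\alpha+u^2\partial_\alpha C\bigr)$ with $C=\E[X/(u+X)^2]$ makes the terms proportional to $\E[(u+X)^{-3}]$ cancel exactly, leaving $d\gamma/d\alpha=\frac{2\alpha^3u^2}{\delta}\E[1/(X(u+X)^2)]>0$. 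That exact cancellation is the substance of the lemma and is absent from your write-up. Your concern about the feedback of $\alpha$ into the fixed-point distribution of $w$ in case (iii) is a fair observation — the paper's proof silently ignores that dependence — but flagging the difficulty is not the same as resolving it, and neither of your proposed alternatives is developed far enough to close the gap.
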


\begin{proof}
    Let $x(\alpha) = \sqrt{w^2 + \alpha^4}$, $y(u,\alpha) = x / (u + x)$ and define $A, B$ as follows.
    \begin{align}
        A(u,\alpha) = \E[y] \,, \quad B(u,\alpha) = \E[y^2] \,.
    \end{align}
    Since $u=u(\alpha)$ satisfies $B(u,\alpha)=\delta$, we have
    \begin{align}
        \delta \gamma = u(A - \delta) = u(A - B) = u\E[y(1-y)] = u^2\E\ab[\frac{x}{(u+x)^2}]
    \end{align}
    and thus
    \begin{align}
        \gamma = \frac{u^2}{\delta} C \,, \quad C(u,\alpha) \coloneqq \E\ab[\frac{x}{(u+x)^2}] \,.
    \end{align}
    Since $x > 0$ and $u \neq 0$ when $\delta \neq 1$, it follows that $\gamma > 0$.

    Using $\partial_\alpha x = 2\alpha^3 / x$, we have
    \begin{align}
        \partial_u B = -2\E\ab[\frac{x^3}{(u+x)^3}] \,, \quad \partial_\alpha B = 4\alpha^3 u \E\ab[\frac{1}{(u+x)^3}] \,.
    \end{align}
    By the implicit function theorem, we get
    \begin{align}
        \diff{u}{\alpha} = -\frac{\partial_\alpha B}{\partial_u B} = \frac{2 \alpha^3 u \E[(u+x)^{-3}]}{\E[x^2(u+x)^{-3}]} \,.
    \end{align}

    For $C$, we have
    \begin{align}
        \partial_u C = -2\E\ab[\frac{x}{(u+x)^3}] \,, \quad \partial_\alpha C = 2\alpha^3 \E\ab[\frac{1}{x(u+x)^2} - \frac{2}{(u+x)^3}] \,.
    \end{align}

    Thus, we have
    \begin{align}
        \diff{\gamma}{\alpha} & = \frac{1}{\delta} \ab(2u \diff{u}{\alpha} C + u^2 \ab(\partial_u C \diff{u}{\alpha} + \partial_\alpha C)) \notag                                                                   \\
                              & = \frac{1}{\delta} \ab(\diff{u}{\alpha} (2uC + u^2\partial_u C) + u^2\partial_\alpha C) \notag                                                                                      \\
                              & = \frac{1}{\delta} \ab(\frac{2\alpha^3 u\E[(u+x)^{-3}]}{\E[x^2(u+x)^{-3}]} \cdot 2u\E\ab[\frac{x^2}{(u+x)^3}] + 2\alpha^3 u^2 \E\ab[\frac{1}{x(u+x)^2} - \frac{2}{(u+x)^3}]) \notag \\
                              & = \frac{2 \alpha^3 u^2}{\delta} \E\ab[\frac{1}{x(u+x)^2}] \notag                                                                                                                    \\
                              & > 0 \,.
    \end{align}
    In the last line, we used that $u \neq 0$ for $\delta \neq 1$.

\end{proof}

These properties are illustrated in \cref{fig:convergencerate_theory}.

\begin{figure}[t]
    \begin{center}
        \includegraphics[width=0.5\textwidth]{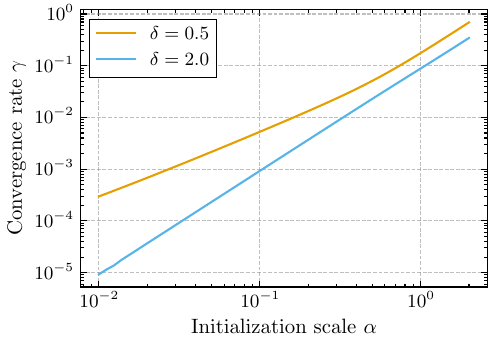}
        \caption{Theoretical convergence rates $\gamma$ for $\delta = 2 > 1$ and $\delta = 0.5 < 1$. $\gamma$ is monotonically increasing with respect to the initialization scale $\alpha$.}
        \label{fig:convergencerate_theory}
    \end{center}
\end{figure}

\paragraph{Limiting Behaviors of the Convergence Rate.}
As $\alpha \to \infty$, the equation for $u$ becomes, up to the leading order,
\begin{align}
    1 - \frac{1}{\delta} \frac{\alpha^4}{(u + \alpha^2)^2} & \approx 0 \,,
\end{align}
which leads to the solution $u_* \approx \alpha^2(\delta^{-1/2} - 1)$, $A \approx \delta^{1/2}$ and $\gamma = \alpha^2(1 - \delta^{-1/2})^2$. Notice that $(1 - \delta^{-1/2})^2$ is the lower end of the support of the Marchenko--Pastur law \eqref{eq:mp_law}, which is the asymptotic minimum eigenvalue of the sample covariance matrix $\delta^{-1} \bX^\transpose \bX$. This is consistent with the fact that the dynamics as $\alpha \to \infty$ is approximately linear (as described in \cref{sec:timescale}), and that the convergence rate of a linear dynamics is governed by the minimum eigenvalue of the coefficients, in this case the sample covariance matrix.

As $\alpha \to 0$, we have $\gamma \to 0$, which implies subexponential decay.
\section{DETAILS OF THE RIGOROUS THEORY}
\label{app:rigorous}

In this section, we develop a rigorous theory for truncated DLNs. We introduce a general class of flows that includes gradient flows on truncated DLNs, characterize its high-dimensional limit using DMFT, and finally specialize to truncated DLNs.

\subsection{General Setup}

\paragraph{General Flow.}
Let $\bX\in\reals^{n\times d}$ and $\bz\in\reals^d$.
Let $k \in \naturals$ and $\ell\colon\reals^k \times \reals \times \reals_{\geq 0} \to \reals^k$, $w\colon \reals^k \times \reals_{\geq 0} \to\reals^k$, $p\colon\reals^k\times \reals^k \times \reals_{\geq 0}\to\reals^k$ be Lipschitz functions.

Consider the following flow for $\btheta\in\reals^{d\times k}$, denoted as $\mathfrak{F}\coloneqq \mathfrak{F}(\btheta(0),\bz,\delta,\ell,w,p)$.
\begin{align}
    \diff{\btheta(t)}{t} = p_t(\bg(t),\btheta(t)) \,, \quad \bg(t) = \frac{1}{\delta} \bX^\transpose \ell_t(\bff(t);\bz) \,, \quad \bff(t) = \bX w_t(\btheta(t)) \,. \label{eq:flow_general}
\end{align}
Here, the functions $\ell_t,p_t,w_t$ are applied row-wise.

This flow is a generalization of the one defined in \citet{celentano2021highdimensional}. Our generalized flow allows row-wise reparameterization of the parameter $\btheta$ through the function $w_t$ and more general post-processing of the gradient $\bg$ through the function $p_t$.

\paragraph{DMFT Equation.}
Given random variables $\theta(0) \in \reals^k$ and $z \in \reals$, we consider the following DMFT equation $\mathfrak{S} \coloneqq \mathfrak{S}(\theta(0),z,\delta,\ell,w,p)$ corresponding to the flow $\mathfrak{F}$, for unknown deterministic functions $\Gamma \colon \reals_{\geq 0} \to \reals$,  $R_w,R_\ell,C_w,C_\ell \colon \reals_{\geq 0}^2 \to \reals$ and stochastic processes $\theta,f,g \colon \reals_{\geq 0} \to \reals$.
\begin{subequations} \label{eq:dmft_general}
    \begin{align}
        \diff{}{t}\theta(t) & = p_t(g(t),\theta(t)) \,,   \label{eq:gf_theta}                                                                                                           \\
        g(t)                & = \frac{u_g(t)}{\delta} + \Gamma(t) w_t(\theta(t)) + \int_0^t R_\ell(t,s) w_s(\theta(s)) \de s \,, &  & u_g \sim \GP(0,\delta C_\ell) \,, \label{eq:gf_g} \\
        f(t)                & = u_f(t) + \int_0^t R_w(t,s)\ell_s(f(s);z) \de s \,,                                               &  & u_f \sim \GP(0,C_w) \,, \label{eq:gf_f}           \\
        R_w(t,s)            & = \E\ab[\diffp{w_t(\theta(t))}{u_g(s)}] \,,                                                        &  & 0 \leq s \leq t\,,                                \\
        R_\ell(t,s)         & = \E\ab[\diffp{\ell_t(f(t);z)}{u_f(s)}] \,,                                                        &  & 0 \leq s \leq t\,,                                \\
        \Gamma(t)           & = \E\ab[\nabla_f\ell_t(f(t);z)] \,,                                                                                                                       \\
        C_w(t,s)            & = \E[w_t(\theta(t)) w_s(\theta(s))^\transpose] \,,                                                                                                        \\
        C_\ell(t,s)         & = \E[\ell_t(f(t),z)\ell_s(f(s),z)^\transpose] \,.
    \end{align}
\end{subequations}
We set $R_w(t,s) = R_\ell(t,s) = 0$ for $t < s$.
The quantities $\partial w_t(\theta(t))/\partial u_g(s)$ and $\partial \ell_t(f(t);z)/\partial u_f(s)$ are stochastic processes defined as follows.
\begin{subequations} \label{eq:dmft_general_aux}
    \begin{align}
        \diffp{w_t(\theta(t))}{u_g(s)}      & = \nabla_\theta w_t(\theta(t)) \diffp{\theta(t)}{u_g(s)} \,,                                                          \label{eq:gf_dw}     \\
        \diff{}{t}\diffp{\theta(t)}{u_g(s)} & = \nabla_g p_t(g(t),\theta(t)) \diffp{g(t)}{u_g(s)} + \nabla_\theta p_t(g(t),\theta(t)) \diffp{\theta(t)}{u_g(s)} \,, \label{eq:gf_dtheta} \\
        \diffp{g(t)}{u_g(s)}                & = \Gamma(t) \diffp{w_t(\theta(t))}{u_g(s)} + \int_s^t R_\ell(t,s') \diffp{w_{s'}(\theta(s'))}{u_g(s)} \de s' \,, \label{eq:gf_dg}          \\
        \diffp{\ell_t(f(t);z)}{u_f(s)}      & = \nabla_f \ell_t(f(t);z) \diffp{f(t)}{u_f(s)} \,, \label{eq:gf_dl}                                                                        \\
        \diffp{f(t)}{u_f(s)}                & = R_w(t,s) \nabla_f \ell_s(f(s);z) + \int_s^t R_w(t,s') \diffp{\ell_{s'}(f(s');z)}{u_f(s)} \de s' \,, \label{eq:gf_df}
    \end{align}
\end{subequations}
with the initial condition given by
\begin{align}
    \diffp{\theta(t)}{u_g(t)} = \frac{1}{\delta} \nabla_g p_t(g(t),\theta(t)) \,.
\end{align}

\subsection{Truncated DLNs}

\paragraph{Setup.}
Let $L \geq 2$ be an integer. We consider $L$-layer truncated diagonal linear networks defined as follows.
\begin{align}
    f_{L,M}(\bx;\bu,\bv) = \bw^\transpose \bx, \quad \bw = \frac{1}{L}(\eta_M(\bu^L)-\eta_M(\bv^L)), \quad \bu,\bv \in \reals^d \,,
\end{align}
where $\eta_M \colon \reals \to \reals$ is a truncation operator satisfying $\eta_M(x) = x$ for $\abs{x} \leq M$ and $\eta_M(x) = 0$ for $\abs{x} \geq M+1$, for $M > 0$. Such a function can be explicitly constructed by using the smooth step function
\begin{align}
    \eta(t) = \displaystyle \begin{dcases} 0 & (t \leq 0) \,, \\ \frac{\napier^{-1/t}}{\napier^{-1/t}+\napier^{-1/(1-t)}} & (0 < t < 1)\,, \\ 1 & (t \geq 1)\,,\end{dcases}
\end{align}
and setting $\eta_M(x) = (1 - \eta(\abs{x} - M)) x$.

We consider a regression task with truncated DLNs. Let $\bX \in \reals^{n \times d}$, $\bw^* \in \reals^d$, and $\bxi \in \reals^n$. We generate labels $\by \in \reals^n$ as $\by = \bX \bw^* + \bxi$. We consider the following loss:
\begin{align}
    L(\bu,\bv) = \frac{1}{2 n} \sum_{\mu=1}^n (y_\mu - f_{L,M}(\bx_\mu;\bu,\bv))^2 + \frac{\lambda}{2 d}(\norm{\bu}_2^2 + \norm{\bv}_2^2) \,,
\end{align}
where $\lambda \geq 0$ is a regularization parameter. The gradient flow for the loss $L$ is
\begin{subequations}
    \begin{align}
        \diff{}{t} \bu(t) & = -\frac{d}{2} \nabla_{\bu} L(\bu(t),\bv(t)) \notag                                                                                   \\
                          & = -\frac{1}{2} \ab(\lambda \bu(t) + \bu(t)^{L-1} \odot \frac{1}{\delta} \bX^\transpose (\bX\bw(t) - \by) \odot \eta'_M(\bu(t)^L)) \,, \\
        \diff{}{t} \bv(t) & = -\frac{d}{2} \nabla_{\bv} L(\bu(t),\bv(t)) \notag                                                                                   \\
                          & = -\frac{1}{2} \ab(\lambda \bv(t) - \bv(t)^{L-1} \odot \frac{1}{\delta} \bX^\transpose (\bX\bw(t) - \by) \odot \eta'_M(\bv(t)^L)) \,,
    \end{align} \label{eq:gf_truncated_dln}
\end{subequations}
for given initial values $\bu(0)$ and $\bv(0)$. If the entries of $\bu(t)$ and $\bv(t)$ stay inside $[-M,M]$, the truncation can be ignored.

\paragraph{DMFT Equation.}
Given random variables $u(0),v(0),w^*,\xi \in \reals$, we consider the following DMFT equation corresponding to the gradient flow \eqref{eq:gf_truncated_dln}.
\begin{subequations}
    \begin{align}
        \diff{}{t} u(t) & = p_u(g(t),u(t),v(t)) \,,                                                                                           \\
        \diff{}{t} v(t) & = p_v(g(t),u(t),v(t)) \,,                                                                                           \\
        w(t)            & = \frac{1}{L} (\eta_M(u(t)^L) - \eta_M(v(t)^L)) \,,                                                                 \\
        g(t)            & = \frac{z_g(t)}{\delta} + w(t) - w^* - \int_0^t R_f(t,s) (w(s) - w^*) \de s \,, &  & z_g \sim \GP(0,\delta C_f) \,, \\
        f(t)            & = z_f(t) - \int_0^t R_w(t,s) (f(s)-\xi) \de s \,,                               &  & z_f \sim \GP\ab(0,C_w) \,,     \\
        R_w(t,s)        & = -\E\ab[\diffp{w(t)}{z_g(s)}] \,,                                                                                  \\
        R_f(t,s)        & = -\E\ab[\diffp{f(t)}{z_f(s)}] \,,                                                                                  \\
        C_w(t,s)        & = \E[(w(t) - w^*)(w(s) - w^*)] \,,                                                                                  \\
        C_f(t,s)        & = \E[(f(t)-\xi)(f(s)-\xi)] \,,
    \end{align} \label{eq:dmft_truncated_dln}
\end{subequations}
where
\begin{align}
    p_u(g,u,v) \coloneqq -\frac{1}{2} (\lambda u + u^{L-1} g \eta'_M(u^L)) \,, \quad p_v(g,u,v) \coloneqq -\frac{1}{2} (- \lambda v + v^{L-1} g \eta'_M(v^L)) \,,
\end{align}
and with auxiliary processes
\begin{subequations}
    \begin{align}
        \diffp{w(t)}{z_g(s)}            & = u(t)^{L-1} \eta'_M(u(t)^L) \diffp{u(t)}{z_g(s)} - v(t)^{L-1} \eta'_M(v(t))^{L-1} \diffp{v(t)}{z_g(s)} \,,                                                             \\
        \diff{}{t} \diffp{u(t)}{z_g(s)} & = \partial_g p_u(g(t),u(t),v(t)) \diffp{g(t)}{z_g(s)} + \partial_u p_u(g(t),u(t),v(t)) \diffp{u(t)}{z_g(s)} + \partial_v p_u(g(t),u(t),v(t))) \diffp{v(t)}{z_g(s)}  \,, \\
        \diff{}{t} \diffp{u(t)}{z_g(s)} & = \partial_g p_v(g(t),u(t),v(t)) \diffp{g(t)}{z_g(s)} + \partial_u p_v(g(t),u(t),v(t)) \diffp{u(t)}{z_g(s)} + \partial_v p_v(g(t),u(t),v(t))) \diffp{v(t)}{z_g(s)}  \,, \\
        \diffp{g(t)}{z_g(s)}            & = \diffp{w(t)}{z_g(s)} - \int_0^t R_f(t,s') \diffp{w(s')}{z_g(s)} \de s' \,,                                                                                            \\
        \diffp{f(t)}{z_f(s)}            & = -\int_0^t R_w(t,s') \diffp{f(s')}{z_f(s)} \de s' + R_w(t,s) \,,
    \end{align}
\end{subequations}
with initial conditions given by
\begin{align}
    \diffp{u(t)}{z_g(t)} = \frac{1}{\delta} \partial_g p_u(g(t),u(t),v(t)) \,, \quad \diffp{v(t)}{z_g(t)} = \frac{1}{\delta} \partial_g p_v(g(t),u(t),v(t)) \,.
\end{align}

If we ignore the truncation $\eta_M$, this equation for $L=2$ matches the equation \eqref{eq:dmft_dln_heuristic} obtained heuristically. After further simplification along the lines of \cref{app:simplify_dmft}, we obtain the DMFT equation \eqref{eq:dmft_dln} presented in the main text.

\subsection{Statements of the Results}

\begin{assumption} \label{ass:main}
    \noindent
    \begin{itemize}
        \item The entries $\bX=(x_{ij})_{i\in[n],j\in[d]}$ are independent and satisfy $\E x_{ij}=0,\E x_{ij}^2=1/d,\norm{x_{ij}}_{\psi_2}\leq C/\sqrt{d}$, where $\norm{\cdot}_{\psi_2}$ is the sub-Gaussian norm.
        \item $n,d\to\infty,\;n/d\to\delta\in(0,\infty).$
        \item $\bz\in\reals^d,\btheta(0)\in\reals^{d\times k}$ is independent of $\bX$, and the empirical distributions $\hat\mu_{\theta(0)}\coloneqq d^{-1}\sum_{i=1}^d \dirac_{\theta_i(0)},\hat\mu_{z}\coloneqq n^{-1}\sum_{i=1}^n \dirac_{z_i}$ converge to $\sfP(\theta(0)),\sfP(z)$, respectively, in $p$-Wasserstein distance for all $p \geq 1$, almost surely as $n,d \to \infty$.
    \end{itemize}
\end{assumption}

\begin{assumption} \label{ass:gf}
    The functions $\ell_t(f;z),w_t(\theta),p_t(g;\theta)$ and their Jacobians $\De\ell,\De w,\De p$ are Lipschitz continuous in $t \in \reals_{\geq 0}$ and $\theta,f,g\in\reals^k$, i.e., there exists a universal constant $M > 0$ such that for all $t_1,t_2\in[0,T]$ and all $\theta_1,\theta_2,f_1,f_2,g_1,g_2\in\reals^k$,
    \begin{align}
        \norm{\ell_{t_1}(f_1;z) - \ell_{t_2}(f_2;z)}_2           & \leq M(\norm{f_1-f_2}_2 + \abs{t_1-t_2})\,,                              \\
        \norm{w_{t_1}(\theta_1) - w_{t_2}(\theta_2)}_2           & \leq M(\norm{\theta_1-\theta_2}_2 + \abs{t_1-t_2})\,,                    \\
        \norm{p_{t_1}(g_1;\theta_1) - p_{t_2}(g_2;\theta_2)}_2   & \leq M(\norm{g_1-g_2}_2 + \norm{\theta_1-\theta_2}_2 + \abs{t_1-t_2})\,, \\
        \norm{\De\ell_{t_1}(f_1;z) - \De\ell_{t_2}(f_2;z)}_2         & \leq M(\norm{f_1-f_2}_2 + \abs{t_1-t_2})\,,                              \\
        \norm{\De w_{t_1}(\theta_1) - \De w_{t_2}(\theta_2)}_2         & \leq M(\norm{\theta_1-\theta_2}_2 + \abs{t_1-t_2})\,,                    \\
        \norm{\De p_{t_1}(g_1;\theta_1) - \De p_{t_2}(g_2;\theta_2)}_2 & \leq M(\norm{g_1-g_2}_2 + \norm{\theta_1-\theta_2}_2 + \abs{t_1-t_2})\,.
    \end{align}
\end{assumption}

The following theorem establishes the existence and uniqueness of the solution of the DMFT equation $\mathfrak{S}$. We give a proof in \cref{app:proof_dmft_sol}.

\begin{theorem} \label{thm:dmft_sol}
    Under \cref{ass:gf}, for any $T>0$ there exists a tuple $(\theta,g,f,R_w,R_\ell,\Gamma,C_w,C_\ell)$ that solves the DMFT system $\mathfrak{S}$. The solution is unique among all such tuples with $(C_w,R_w)$ bounded in all compact sets in $\reals_{\geq 0}^2$.
    Further, there exist functions $\Phi_{R_w},\Phi_{R_\ell},\Phi_{C_w},\Phi_{C_\ell}\colon \reals_{\geq 0} \to \reals_{\geq 0}$ that satisfy
    \begin{align}
        \norm{R_w(t,s)} \leq \Phi_{R_w}(t-s)\,,  \; \norm{R_\ell(t,s)} \leq \Phi_{R_\ell}(t-s)\,, \; \norm{C_w(t,t)} \leq \Phi_{C_w}(t)\,, \; \norm{C_\ell(t,t)} \leq \Phi_{C_\ell}(t)\,,
    \end{align}
    and $\norm{\Gamma(t)} \leq M$, for all $t,s \geq 0$.
    Further, there exists $\lambda>0$ such that
    \begin{align}
        \lim_{t\to\infty} \napier^{-\lambda t} \max\{\Phi_{R_w}(t),\Phi_{R_\ell}(t),\Phi_{C_w}(t),\Phi_{C_\ell}(t)\} & = 0 \,.
    \end{align}
    Finally, the stochastic processes $(\theta(t),g(t),f(t))_{t \in [0,T]}$ have continuous sample paths.
\end{theorem}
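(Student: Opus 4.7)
The plan is to establish existence and uniqueness via a Picard--Banach contraction argument applied to the tuple of deterministic kernels $K = (R_w, R_\ell, \Gamma, C_w, C_\ell)$, treating the processes $(\theta, g, f)$ and their derivative fields as auxiliary objects reconstructed from $K$. Given a candidate $K$ on $[0,T]^2$, the Gaussian drivers $u_g \sim \GP(0, \delta C_\ell)$ and $u_f \sim \GP(0, C_w)$ are well-defined, and the forward system~\eqref{eq:gf_theta}--\eqref{eq:gf_f} splits into (a) a coupled ODE--Volterra system in $(\theta, g)$ driven by $u_g$ with kernel $R_\ell$ and factor $\Gamma$, and (b) a scalar Volterra equation in $f$ driven by $u_f$ with kernel $R_w$. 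Under Assumption~\ref{ass:gf}, a pathwise Picard iteration yields unique continuous solutions for each realization of the drivers, and the same reasoning applied to the linear system~\eqref{eq:dmft_general_aux} produces the derivative fields. Reading off the new $(R_w, R_\ell, \Gamma, C_w, C_\ell)$ from the resulting processes through the consistency equations defines the self-map $\mathcal{T}$ whose fixed points are exactly the DMFT solutions.

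Next I would prove that $\mathcal{T}$ contracts on a short interval $[0, t_0]$. Given two candidates $K_1, K_2$, a common Gaussian coupling of the drivers controls $\E \norm{u_g^{(1)} - u_g^{(2)}}_{L^2[0,t_0]}^2$ and $\E \norm{u_f^{(1)} - u_f^{(2)}}_{L^2[0,t_0]}^2$ by $\norm{K_1 - K_2}_\infty$. Combined with Gr\"onwall estimates on the Lipschitz flows (using the bounds on $p_t, w_t, \ell_t$ and their Jacobians from Assumption~\ref{ass:gf}), this yields $\norm{\mathcal{T}(K_1) - \mathcal{T}(K_2)}_\infty \leq L(t_0) \norm{K_1 - K_2}_\infty$ with $L(t_0) \to 0$ as $t_0 \to 0$. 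Because the Lipschitz constants in Assumption~\ref{ass:gf} are state-independent, the same $t_0$ can be reused, so the solution is extended to $[0, T]$ by concatenation; uniqueness among tuples with locally bounded $(C_w, R_w)$ follows on each subinterval.

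For the envelopes I would derive coupled Gr\"onwall-type inequalities such as
\[
\Phi_{C_w}(t) \leq M + M \int_0^t \bigl( \Phi_{C_w}(s) + \Phi_{R_w}(s)^2 \Phi_{C_\ell}(s) \bigr) \de s ,
\]
and analogous bounds for $\Phi_{R_w}, \Phi_{R_\ell}, \Phi_{C_\ell}$, using the Lipschitz bounds in Assumption~\ref{ass:gf} together with Cauchy--Schwarz on the covariance and response representations. A joint Gr\"onwall argument then gives at most exponential growth, producing the required $\lambda > 0$. The bound $\norm{\Gamma(t)} \leq M$ is immediate from $\norm{\nabla_f \ell_t} \leq M$, and continuity of the sample paths of $(\theta, g, f)$ is a byproduct of the pathwise Picard construction together with continuity of the Gaussian drivers.

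The main obstacle will be establishing the Lipschitz-in-$K$ dependence of the response-function outputs: $R_w(t,s) = \E[\partial w_t(\theta(t))/\partial u_g(s)]$ is a Malliavin-type derivative with respect to a Gaussian driver whose own covariance $\delta C_\ell$ varies with $K$. Bounding its sensitivity requires simultaneously coupling the drivers and their derivative processes, which is where the Lipschitz continuity of the Jacobians $D\ell, Dw, Dp$ in Assumption~\ref{ass:gf} plays a crucial role. Extending the framework of \citet{celentano2021high} to accommodate the post-processing $p_t$ and the row-wise reparametrization $w_t$ that are new in our formulation is the essentially new ingredient of the argument.
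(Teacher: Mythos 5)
Your proposal follows essentially the same route as the paper: a Banach fixed-point argument on the deterministic kernels $(C_\ell,R_\ell,\Gamma,C_w,C_\ell)$ extending \citet{celentano2021high}, with the processes reconstructed pathwise from the kernels, the exponential envelopes obtained from Gr\"onwall/Volterra-type inequalities (the paper fixes these envelopes upfront via an auxiliary integro-differential system and builds them into the metric spaces $\cS,\bar\cS$ rather than deriving them a posteriori), and the genuinely new work correctly identified as controlling the response functions under the extra maps $w_t$ and $p_t$ — which is exactly the content of the paper's Lemma~\ref{lem:mapping}. The approaches match in all essentials.
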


The following theorem characterizes the empirical distribution of the flow variable $\btheta(t)$ in $\mathfrak{F}$ as the solution of the DMFT system $\mathfrak{S}$. We give a proof in \cref{app:proof_dmft_flow}.

\begin{theorem} \label{thm:dmft_flow}
    Under \cref{ass:main,ass:gf}, for $T > 0$, let $(\theta(t),g(t))_{t=0}^T$ be the unique stochastic processes that solve the DMFT equation $\mathfrak{S}$ in \cref{thm:dmft_sol}. Then, we have
    \begin{align}
        \frac{1}{d}\sum_{i=1}^d \dirac_{(\theta_i(t))_{t=0}^T} \xrightarrow{W_2} \sfP((\theta(t))_{t=0}^T)\,, \quad \frac{1}{n}\sum_{i=1}^n \dirac_{z_i,(f_i(t))_{t=0}^T} \xrightarrow{W_2} \sfP(z,(f(t))_{t=0}^T) \,,
    \end{align}
    almost surely as $n,d \to \infty$. Here, $\sfP$ denotes the law of the given random variables.
\end{theorem}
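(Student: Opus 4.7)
The plan is to transfer the convergence statement from the continuous-time flow $\mathfrak{F}$ to a discrete AMP-style iteration, apply state evolution to characterize the empirical distribution at each discrete step, and then pass to the continuous-time limit by refining the step size. This follows the strategy of \citet{celentano2021high}, but has to be adapted to the more general flow, which features a row-wise reparametrization $w_t$ of $\btheta$ and a post-processed update $p_t$ rather than a plain gradient step.

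First, I would introduce an Euler discretization with step size $\eta>0$: set $\btheta^{k+1} = \btheta^k + \eta\,p_{t_k}(\bg^k,\btheta^k)$, $\bff^k = \bX w_{t_k}(\btheta^k)$, $\bg^k = \delta^{-1}\bX^\transpose \ell_{t_k}(\bff^k;\bz)$ with $t_k=k\eta$, and rewrite it as a long-memory AMP iteration by absorbing the Onsager correction assembled from $(\partial_\theta w,\partial_f\ell,\partial_g p)$ evaluated along the trajectory. Under Assumption~\ref{ass:main}, universality of AMP state evolution for sub-Gaussian designs (used similarly in \citet{celentano2021high,fan2025dynamical}) then yields $W_2$ convergence of the joint empirical distribution of $(\btheta^k)_{k\leq K}$ and of $(\bz,(\bff^k)_{k\leq K})$ to the solution of the corresponding discrete-time DMFT recursion.

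Second, I would pass to the continuous-time limit. Using the global Lipschitz bounds of Assumption~\ref{ass:gf} and a Gronwall argument, the Euler iterates approximate the true flow uniformly on $[0,T]$ with error $O(\eta)$ for each realization of $\bX$. On the DMFT side, an analogous argument in the $W_2$ metric shows that the discrete state-evolution recursion converges to the continuous system $\mathfrak{S}$ as $\eta\to 0$; here Theorem~\ref{thm:dmft_sol} is essential, since its a priori bounds on $C_w,R_w,C_\ell,R_\ell,\Gamma$ give uniform control across discretization scales and rule out pathological subsequential limits. Combining these two convergences via a diagonal argument (first $d\to\infty$ at fixed $\eta$, then $\eta\to 0$) yields the desired convergence of $d^{-1}\sum_i \delta_{(\theta_i(t))_{t=0}^T}$ and $n^{-1}\sum_i \delta_{z_i,(f_i(t))_{t=0}^T}$ to the laws prescribed by $\mathfrak{S}$.

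The main obstacle is the interplay of these two limits, compounded by the nonlinearities $w_t$ and $p_t$. AMP state evolution is typically proved with constants that may degrade in the iteration count $K = T/\eta$, so one has to track quantitative error bounds carefully to ensure that the discretization error and the $d\to\infty$ error can be made to vanish in a compatible order. A closely related technical point is that the Onsager term now involves the Jacobian of the reparametrization $w_t(\theta(t))$ along the whole path, so before invoking universality one must establish propagated-in-time estimates on $\norm{\btheta(t)}$ and on the relevant Jacobians; these can be obtained by combining Assumption~\ref{ass:gf} with the exponential growth controls supplied by Theorem~\ref{thm:dmft_sol}.
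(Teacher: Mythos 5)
Your proposal follows essentially the same route as the paper's proof: the paper also discretizes the flow with step size $\eta$, maps the discretized iteration to a long-memory AMP whose Onsager terms involve the Jacobians of $w_t$, $p_t$, $\ell_t$, invokes a universality result for sub-Gaussian designs (specifically \citet{wang2024universality}) to identify the empirical distribution with the discretized DMFT recursion $\mathfrak{S}^\eta$, and then closes the argument with the same order of limits ($n,d\to\infty$ at fixed $\eta$, then $\eta\to 0$) using the a priori bounds from Theorem~\ref{thm:dmft_sol}. The decomposition into your two stages matches the paper's Lemmas~\ref{lem:flow_disc}--\ref{lem:dmft_disc} almost exactly.
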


As a corollary of \cref{thm:dmft_flow}, we obtain a DMFT characterization of the gradient flow for truncated DLNs. We give a proof in \cref{app:proof_dmft_dln}.

\begin{corollary}\label{cor:dmft_dln}
    Under \cref{ass:main} (with $\bz=\bxi$ and $\btheta(0)=(\bu(0),\bv(0),\bw^*)$), for any $T > 0$, there exists a unique solution to the DMFT equation \eqref{eq:dmft_truncated_dln} in the sense of \cref{thm:dmft_sol}, and we have
    \begin{align}
        \frac{1}{d}\sum_{i=1}^d \dirac_{(u_i(t),v_i(t))_{t=0}^T} \xrightarrow{W_2} \sfP((u(t),v(t))_{t=0}^T)\,, \quad \frac{1}{n}\sum_{i=1}^n \dirac_{\xi_i,(f_i(t))_{t=0}^T} \xrightarrow{W_2} \sfP(\xi,(f(t))_{t=0}^T) \,,
    \end{align}
    almost surely as $n,d \to \infty$.
\end{corollary}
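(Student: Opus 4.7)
The plan is to cast the gradient flow \eqref{eq:gf_truncated_dln} for truncated DLNs as a specific instance of the general flow $\mathfrak{F}$ introduced in Section~\ref{app:rigorous}, so that Theorems~\ref{thm:dmft_sol} and \ref{thm:dmft_flow} can be invoked directly. Concretely, I would take $k = 3$, set $\bz = \bxi$, and embed the dynamical variables and ground truth into one parameter vector $\btheta(t) = (\bu(t),\bv(t),\bw^*) \in \reals^{d \times 3}$, carrying $\bw^*$ as an inert third coordinate. The row-wise functions are then chosen as $w_t(u,v,w^*) = ((\eta_M(u^L) - \eta_M(v^L))/L - w^*,\,0,\,0)$, $\ell_t((f_1,f_2,f_3);\xi) = (f_1 - \xi,\,0,\,0)$, and $p_t((g_1,g_2,g_3),(u,v,w^*)) = (p_u(g_1,u,v), p_v(g_1,u,v), 0)$ with $p_u,p_v$ defined as below \eqref{eq:dmft_truncated_dln}. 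With these choices, the first coordinate of $\bff(t) = \bX w_t(\btheta(t))$ is the residual $\bX(\bw(t)-\bw^*)$, the first coordinate of $\bg(t) = \bX^\transpose \ell_t(\bff(t);\bz)/\delta$ is the loss gradient $\bX^\transpose(\bX\bw(t)-\by)/\delta$, and $p_t$ reproduces \eqref{eq:gf_truncated_dln}. The third coordinate of $\theta(t)$ remains $w^*$ throughout, so the limiting distribution of $(u_i(t),v_i(t),w^*_i)$ follows from that of $\theta_i(t)$.

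Next I would verify Assumptions~\ref{ass:main} and \ref{ass:gf} for these choices. Assumption~\ref{ass:main} is exactly what the corollary hypothesizes. The map $\ell_t$ is affine and its Jacobian is constant, so the Lipschitz conditions are trivial. For $w_t$, the key observation is that $u \mapsto \eta_M(u^L)$ is smooth and has derivative supported on the compact set $\{|u| \leq (M+1)^{1/L}\}$, which makes $w_t$ and $D w_t$ uniformly bounded and globally Lipschitz. For $p_t$, the linear part $-\lambda u/2$ is clearly $C^{1,1}$, and the nonlinear factor $H(u) \coloneqq u^{L-1}\eta'_M(u^L)$ is compactly supported, smooth, and bounded with bounded derivatives. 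The main obstacle is the bilinear contribution $-g H(u)/2$: the identity $g H(u) - g'H(u') = (g-g')H(u) + g'(H(u)-H(u'))$ produces a cross term proportional to $|g'|$, which rules out naive global joint Lipschitzness in $(g,\theta)$. I expect this to be the delicate technical point of the proof, and I would handle it either by exploiting that the general theorem only requires control along orbits of the flow (so the a priori bound $\|\bg(t)\|$ provided by boundedness of $\ell_t(\bff(t);\bxi)$ in concert with sub-Gaussianity of $\bX$ suffices), or by introducing an auxiliary smooth truncation of $g$ on a window that is shown to be inactive in the regime where gradient flow trajectories live; the truncated and untruncated flows are then coupled to agree with high probability.

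Granted these verifications, Theorem~\ref{thm:dmft_sol} gives existence and uniqueness of $(\theta,g,f,R_w,R_\ell,\Gamma,C_w,C_\ell)$ solving $\mathfrak{S}$ on $[0,T]$, with the exponential bounds on the response and correlation kernels. Theorem~\ref{thm:dmft_flow} then yields
\begin{align*}
\frac{1}{d}\sum_{i=1}^d \delta_{(\theta_i(t))_{t=0}^T} \xrightarrow{W_2} \sP((\theta(t))_{t=0}^T),\quad \frac{1}{n}\sum_{i=1}^n \delta_{z_i,(f_i(t))_{t=0}^T} \xrightarrow{W_2} \sP(z,(f(t))_{t=0}^T),
\end{align*}
almost surely. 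Projecting $\theta$ onto its first two coordinates and using $\bz=\bxi$ gives precisely the stated Wasserstein-2 convergence of the empirical distributions of $(u_i(t),v_i(t))$ and $(\xi_i,f_i(t))$.

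It remains to check that the specialized DMFT equation coincides with \eqref{eq:dmft_truncated_dln}. This is a direct substitution: with $\ell_t(f;\xi) = f - \xi$ one has $\nabla_f\ell_t = 1$ so $\Gamma(t) \equiv 1$, and the $g$-equation \eqref{eq:gf_g} becomes $g(t) = z_g(t)/\delta + (w(t)-w^*) - \int_0^t R_\ell(t,s)(w(s)-w^*) \de s$ after absorbing the $-w^*$ shift into the centered kernel $C_w(t,s) = \E[(w(t)-w^*)(w(s)-w^*)]$ and identifying $R_\ell$ with $R_f$. Similarly the $f$-equation \eqref{eq:gf_f} becomes $f(t) = z_f(t) - \int_0^t R_w(t,s)(f(s)-\xi)\de s$ because $\ell_s(f;\xi) = f-\xi$. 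The response formulas $R_w(t,s) = \E[\partial w(t)/\partial u_g(s)]$ match those in \eqref{eq:dmft_truncated_dln} once the chain rule in \eqref{eq:gf_dw}--\eqref{eq:gf_dtheta} is unpacked through $w_t = (\eta_M(u^L)-\eta_M(v^L))/L - w^*$ and the dynamics of $(u,v)$. This identification, together with the application of Theorems~\ref{thm:dmft_sol} and \ref{thm:dmft_flow}, completes the proof.
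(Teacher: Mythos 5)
Your proposal follows essentially the same route as the paper's proof: embed the truncated-DLN gradient flow into the general flow $\mathfrak{F}$ with $k=3$ and $\btheta=(\bu,\bv,\bw^*)$, verify Assumptions~\ref{ass:main} and~\ref{ass:gf}, invoke Theorems~\ref{thm:dmft_sol} and~\ref{thm:dmft_flow}, and match the specialized system to \eqref{eq:dmft_truncated_dln}. Two remarks on the differences. First, you place the $-w^*$ centering inside $w_t$, whereas the paper keeps $w_t(u,v,w^*)=(\frac{1}{L}(\eta_M(u^L)-\eta_M(v^L)),w^*,0)$ and performs the subtraction inside $\ell_t(f,f^*,\_;\xi)=(f-f^*-\xi,0,0)$; the paper's choice forces it to carry $2\times 2$ blocks for $C_w,R_\ell,\Gamma$ (including the overlap $m(t)=\E[w(t)w^*]$) and to do a final ``resetting $f-f^*\mapsto f$'' step, all of which your choice avoids. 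Both reductions are valid; yours is slightly cleaner. Second, your concern about the bilinear term in $p_t$ is well taken: writing $H(u)\coloneqq u^{L-1}\eta'_M(u^L)$, the cross term $g'(H(u)-H(u'))$ indeed prevents the global joint Lipschitz bound required verbatim by Assumption~\ref{ass:gf}, and the paper's proof dismisses this with ``it is easy to check'' without engaging the point. Your two proposed remedies (an a priori bound on $g(t)$ along trajectories, or a smooth truncation of $p_t$ in $g$ shown to be inactive) are the natural ones, but they remain sketches in your write-up; to fully close the argument you would need to execute one of them, e.g.\ truncate $p_t$ in $g$ at a level $K$, apply the theorems to the truncated flow, and then show via the sub-Gaussianity of $\bX$ and the bounds $\Phi_{C_\ell}$ from Theorem~\ref{thm:dmft_sol} that the truncation is never active for either the finite-$d$ flow or the effective process when $K$ is large. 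On this point your attempt is, if anything, more careful than the paper's own proof, which does not acknowledge the issue.
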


\subsection{Proof of Theorem~\ref{thm:dmft_sol}}
\label{app:proof_dmft_sol}

We follow the approach of the proof of \citet[Theorem 1]{celentano2021highdimensional}. The proof proceeds as follows.
\begin{enumerate}[label=\Roman*.]
    \item Define the functions $\Phi_{R_w},\Phi_{R_\ell},\Phi_{C_w},\Phi_{C_\ell}$.
    \item Define metric spaces $\calS$ and $\bar\calS$ for the triple $(C_\ell,R_\ell,\Gamma)$ and the pair $(C_w,R_w)$, respectively, and show that the stochastic processes $\theta,f,g$ are determined uniquely in these spaces.
    \item Define a mapping $\calT\colon\calS\to\calS$. We show that the solution $(C_\ell,R_\ell,\Gamma)$ of the DMFT system is a fixed point of $\calT$. We then show that the map $\calT$ is a contraction. By Banach's fixed point theorem, it follows that $\calT$ has a unique fixed point, establishing the existence and uniqueness of the solution.
\end{enumerate}

\paragraph{I. Definition of the Functions $\Phi$.}
Since the following quantities are bounded by assumptions, we take $M > 0$ large enough such that
\begin{align}
    \max\ab\{1 + \E\norm{\theta(0)}_2^2, \; \sup_{t \geq 0} \E \norm{\ell_t(0;z)}_2^2, \; \sup_{t \geq 0}\norm{w_t(0)}_2, \; \sup_{t \geq 0}\norm{p_t(0,0)}_2 \} & \leq M\,.
\end{align}

Consider the following system of integro-differential equations.
\begin{align}
    \diff{}{t}\Phi_{R_\theta}(t)         & = 2M^3 \Phi_{R_\theta}(t) + M^2 \int_0^t \Phi_{R_\ell}(t-s') \Phi_{R_\theta}(s'-s) \de s' \,,                                                                                                                                    \label{eq:phi_1} \\
    \Phi_{R_\ell}(t)                     & = M \cdot \ab\{ \Phi_{R_\theta}(t) + \int_0^t \Phi_{R_\theta}(t-s)\Phi_{R_\ell}(s) \de s \} \,,                                                                                                                                                   \\
    \diff{}{t} \sqrt{\Phi_{C_\theta}(t)} & = \sqrt{3} \cdot \sqrt{8M^6\Phi_{C_\theta}(t) + \frac{M^2k}{\delta} \Phi_{C_\ell}(t) + 2M^4\int_0^t (t-s+1)^2\Phi_{R_\ell}(t-s)^2 \Phi_{C_\theta}(t) \de s} \,, \label{eq:phi_3}                                                                  \\
    \Phi_{C_\ell}(t)                     & = 3 \cdot \ab\{M + kM^2 \Phi_{C_\theta}(t) + M^2 \int_0^t (t-s+1)^2 \Phi_{R_\theta}(t-s)^2 \Phi_{C_\ell}(s) \de s \} \,.
\end{align}
By \citet[Lemma 5.1]{celentano2021highdimensional}, there exists a nondecreasing solution to the above system, given $\Phi_{R_\theta}(0) > 0, \Phi_{C_\theta}(0) > 0$. In addition, there exists $\lambda> 0$ such that the following holds.
\begin{align}
    \lim_{t\to\infty} \napier^{-\lambda t} \max\{\Phi_{R_\theta}(t),\Phi_{R_\ell}(t),\Phi_{C_\theta}(t),\Phi_{C_\ell}(t)\} & = 0 \,.
\end{align}
Using this solution, we define
\begin{align}
    \Phi_{R_w}(t) \coloneqq M\Phi_{R_\theta}(t) \,, \quad \Phi_{C_w}(t) \coloneqq 2M^2\Phi_{C_\theta}(t) \,.
\end{align}

\paragraph{II. Definition of the Spaces $\calS,\bar\calS$.}
We define the following function spaces.

\begin{definition}[Function triplet spaces $\calS$ and $\calS_{\mathrm{cont}}$] \label{def:space-S}
    We define the space $\calS \coloneqq \calS(\Phi_{R_w},\Phi_{R_\ell},\Phi_{C_w},\Phi_{C_\ell},M_{\calS},T)$ of triples $(C_\ell,R_\ell,\Gamma)$ that satisfy the following.
    \begin{itemize}
        \item $C_\ell(t,s)$ is a covariance kernel and satisfies $\norm{C_\ell(t,t)} \leq \Phi_{C_\ell}(t)$ for $t \in [0,T]$ and
              \begin{align}
                  C_\ell(0,0) = \E[\ell_0(f(0);z) \ell_0(f(0);z)^\transpose] \,, \quad f(0) \sim \normal(0, \E[\theta(0)\theta(0)^\transpose]) \,.
              \end{align}
              Further, $C_\ell(t,s)$ is continuous for $s,t \in [0,T] \setminus P$ where $P$ is a finite set, and for any $s \leq t$ such that $C_\ell$ is continuous in $[s,t]^2$,
              \begin{align}
                  \norm{C_\ell(t,t) - 2 C_\ell(t,s) + C_\ell(s,s)} \leq M_\calS (t-s)^2 \,.
              \end{align}
        \item $R_\ell(t,s)$ is measurable, $R_\ell(t,s) = 0$ for $t \leq s$, and $\norm{R_\ell(t,s)} \leq \Phi_{R_\ell}(t-s)$ for $0 \leq s \leq t \leq T$.
        \item $\Gamma(t)$ is measurable, $\norm{\Gamma(t)} \leq M$ for $t \in [0,T]$, and
              \begin{align}
                  \Gamma(0) = \E[\nabla_f \ell_0(f(0);z)] \,, \quad f(0) \sim \normal(0, \E[\theta(0)\theta(0)^\transpose]) \,.
              \end{align}
    \end{itemize}
    We define the space $\calS_{\mathrm{cont}} \subset \calS$ of all $(C_\ell,R_\ell,\Gamma)$ such that $C_\ell$ is continuous (i.e., $P= \emptyset$) and for all $s, s' \leq t$,
    \begin{align}
        \norm{C_\ell(t,s) - C_\ell(t,s')} & \leq \sqrt{\Phi_{C_\ell}(T) M_\calS } \cdot \abs{s-s'} \,.
    \end{align}
\end{definition}

\begin{definition}[Function pair spaces $\bar \calS$ and $\bar \calS_{\mathrm{cont}}$]
    We define the space $\bar \calS \coloneqq \bar \calS(\Phi_{R_w},\Phi_{R_\ell},\Phi_{C_w},\Phi_{C_\ell},M_{\bar \calS},T)$ of pairs $(C_w,R_w)$ that satisfy the following.
    \begin{itemize}
        \item $C_w(t,s)$ is a covariance kernel and satisfies $\norm{C_w(t,t)} \leq \Phi_{C_w}(t)$ for $t \in [0,T]$ and
              \begin{align}
                  C_w(0,0) = \E[w_0(\theta(0)) w_0(\theta(0))^\transpose] \,.
              \end{align}
              Further, $C_w(t,s)$ is continuous for all $s, t \in [0,T] \setminus P$ where $P$ is a finite set, and for any $s \leq t$ such that $C_w(t,s)$ is continuous in
              $[s,t]^2$,
              \begin{align}
                  \norm{C_w(t,t) - 2C_w(t,s) + C_w(s,s)} \leq M_{\bar S} (t-s)^2 \,.
              \end{align}
        \item $R_w(t,s)$ is measurable, $R_w(t,s) = 0$ for $t < s$, and $\norm{R_w(t,s)} \leq \Phi_{R_w}(t-s)$ for $0 \leq s \leq t \leq T$.
    \end{itemize}
    We define the space $\bar \calS_{\mathrm{cont}} \subset \bar \calS$ of all $(C_w,R_w)$ such that $C_w$ is continuous (i.e., $P = \emptyset$) and for all $s, s' \leq t$,
    \begin{align}
        \norm{C_w(t,s) - C_w(t,s')} & \leq  \sqrt{\Phi_{C_w}(T) M_{\bar S} } \cdot \abs{s-s'} \,.
    \end{align}
\end{definition}

The constants $M_\calS,M_{\bar\calS}$ are chosen in the proof of \cref{lem:mapping}.

For given $(C_\ell,R_\ell,\Gamma)\in\calS$ and $(C_w,R_w)\in\bar\calS$, stochastic processes $\theta(t),f(t),\difsp{w_t(\theta(t))}{u_g(s)},\difsp{\ell_t(f(t);z)}{u_f(s)}$ are uniquely determined by \cref{eq:dmft_general,eq:dmft_general_aux} \citep[Lemma 5.4]{celentano2021highdimensional}.

\paragraph{III. Definition of the Map $\calT$.}
We define $\calT=\calT_{\bar\calS\to\calS} \circ \calT_{\calS\to\bar \calS}$, where $\calT_{\calS\to\bar \calS}\colon\calS\to\bar \calS,\calT_{\bar \calS\to\calS}\colon\bar \calS \to \calS$ are defined in the following.

We define $\calT_{\calS\to\bar \calS}\colon(C_\ell,R_\ell,\Gamma) \mapsto (\bar C_w,\bar R_w)$ as follows.
For a given $(C_\ell,R_\ell,\Gamma) \in \bar \calS$, take the unique stochastic processes $\theta(t),\difsp{w_t(\theta(t))}{u_g(s)}$ satisfying \cref{eq:gf_theta,eq:gf_g,eq:gf_dw,eq:gf_dtheta,eq:gf_dg}, and define
\begin{align}
    \bar C_w(t,s) = \E[w_t(\theta(t)) w_s(\theta(s))^\transpose] \,, \quad \bar R_w(t,s) = \E\ab[\diffp{w_t(\theta(t))}{u_g(s)}] \,.
\end{align}
Similarly, we define $\calT_{\bar \calS\to \calS}\colon(\bar C_w,\bar R_w) \mapsto (\bar C_\ell,\bar R_\ell,\bar \Gamma)$ as follows.
For a given $(\bar C_w,\bar R_w) \in \bar \calS$, take the unique stochastic processes $f(t),\difsp{\ell_t(f(t);z)}{u_f(s)}$ satisfying the equations \cref{eq:gf_f,eq:gf_dl,eq:gf_df}, and define
\begin{align}
    \bar C_\ell(t,s) = \E[\ell_t(f(t);z)\ell_s(f(s);z)^\transpose] \,, \quad \bar R_\ell(t,s) = \E\ab[\diffp{\ell_t(f(t);z)}{u_f(s)}] \,,  \quad \bar \Gamma(t) = \E\ab[\nabla_f\ell_t(f(t);z)] \,.
\end{align}

In the following lemma, we show that these mappings indeed map into $\bar \calS$ and $\calS$, respectively.

\begin{lemma} \label{lem:mapping}
    In addition to the assumptions for \cref{thm:dmft_sol}, assume $\Phi_{C_\theta}(0) > M$ and $\Phi_{R_\theta}(0) > M/\delta$. Then, $\calT_{\calS\to\bar \calS}$ maps $\calS$ to $\bar\calS_\mathrm{cont}\subset \bar \calS$, and $\calT_{\bar \calS \to \calS}$ maps $\bar\calS$ to $\calS_\mathrm{cont}\subset \calS$.
\end{lemma}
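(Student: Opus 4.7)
The plan is to derive, via Grönwall-type arguments, a priori bounds on the stochastic processes $\theta(t)$, $f(t)$ and on their response sensitivities $\partial\theta(t)/\partial u_g(s)$, $\partial f(t)/\partial u_f(s)$, expressed in terms of the dominating functions $\Phi_{R_\theta}, \Phi_{C_\theta}$ (and their $f$-side analogues implicit in $\Phi_{R_\ell}, \Phi_{C_\ell}$). These bounds are then pushed through the Lipschitz continuity of $w_t$ and $\ell_t$ to furnish the required controls on $(\bar C_w, \bar R_w)$ and $(\bar C_\ell, \bar R_\ell, \bar\Gamma)$. The identifications $\Phi_{R_w} = M\Phi_{R_\theta}$ and $\Phi_{C_w} = 2M^2\Phi_{C_\theta}$ are chosen precisely so that the integro-differential inequalities one obtains close into the defining system \eqref{eq:phi_1}--\eqref{eq:phi_3}.

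For $\cT_{\cS \to \bar\cS}$, I would proceed as follows. First, differentiate $\tfrac{d}{dt}\norm{\theta(t)}_2^2$ using \eqref{eq:gf_theta}; Cauchy--Schwarz plus the Lipschitz hypothesis on $p_t$ reduces the growth of $\sqrt{\E\norm{\theta(t)}_2^2}$ to a functional involving $\sqrt{\E\norm{g(t)}_2^2}$. Second, control $\E\norm{g(t)}_2^2$ through \eqref{eq:gf_g}: the Gaussian contribution $u_g/\delta$ has variance at most $(k/\delta)\Phi_{C_\ell}(t)$, the driving term $\Gamma(t) w_t(\theta(t))$ contributes at most $M^2\E\norm{w_t(\theta(t))}_2^2 \leq 2M^4 \Phi_{C_\theta}(t)$ (using $\norm{\Gamma}\leq M$ and Lipschitz), and the memory integral contributes at most $M^2\int_0^t (t-s+1)^2 \Phi_{R_\ell}(t-s)^2\,\Phi_{C_\theta}(t)\,ds$ after Cauchy--Schwarz. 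Combining these in the square-rooted form exactly reproduces the right-hand side of \eqref{eq:phi_3}, so Grönwall yields $\E\norm{\theta(t)}_2^2 \leq \Phi_{C_\theta}(t)$, and hence $\norm{\bar C_w(t,t)} \leq 2M^2 \Phi_{C_\theta}(t) = \Phi_{C_w}(t)$ via Lipschitzness of $w_t$. The initial condition $\Phi_{C_\theta}(0) > M$ is needed to absorb the constant offsets $\norm{w_t(0)}, \norm{p_t(0,0)}$ from the non-homogeneous Lipschitz inequality into the dominating function at $t=0$.

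An analogous argument handles the response. Differentiate \eqref{eq:gf_dtheta} in $t$ and bound $\norm{\partial\theta(t)/\partial u_g(s)}$ using $\norm{\nabla_g p_t}, \norm{\nabla_\theta p_t} \leq M$, while controlling $\partial g(t)/\partial u_g(s)$ via \eqref{eq:gf_dg} with $\norm{\Gamma}\leq M$, $R_\ell$ dominated by $\Phi_{R_\ell}$, and Lipschitzness of $w_t$ linking $\partial w/\partial u_g$ to $\partial\theta/\partial u_g$. This closes into an inequality of the form \eqref{eq:phi_1}, and Grönwall gives $\norm{\partial\theta(t)/\partial u_g(s)} \leq \Phi_{R_\theta}(t-s)$; the initial bound $\Phi_{R_\theta}(0) > M/\delta$ absorbs the instantaneous jump $\delta^{-1}\nabla_g p_t$. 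Causality is immediate, so $\bar R_w(t,s) = 0$ for $t < s$, and $\norm{\bar R_w(t,s)} \leq M\Phi_{R_\theta}(t-s) = \Phi_{R_w}(t-s)$. For membership in $\bar\cS_{\mathrm{cont}}$, the modulus of continuity of $\bar C_w$ follows from controlling $\E\norm{\theta(t)-\theta(t')}_2^2$ by integrating \eqref{eq:gf_theta}, which yields a linear bound in $\abs{t-t'}$ (hence quadratic $(t-t')^2$ bound for $\bar C_w(t,t) - 2\bar C_w(t,s) + \bar C_w(s,s)$) with constant $M_{\bar\cS}$ depending only on $M$, $T$, and the $\Phi$ functions.

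The mapping $\cT_{\bar\cS \to \cS}$ is handled by the same template, but is technically easier: the process $f(t)$ is defined by the Volterra integral equation \eqref{eq:gf_f} rather than a differential equation, so Grönwall applies directly to $\sqrt{\E\norm{f(t)}_2^2}$ using $C_w$ for the Gaussian variance and $R_w$ for the memory kernel; the sensitivity $\partial f(t)/\partial u_f(s)$ from \eqref{eq:gf_df} yields $\bar R_\ell$ via Lipschitzness of $\ell_t$, and $\norm{\bar\Gamma(t)} \leq M$ is automatic from $\norm{\nabla_f \ell_t} \leq M$. The main technical obstacle throughout is the bookkeeping of constants: one must choose $M_\cS, M_{\bar\cS}$ large enough, and verify the offset terms $\norm{w_t(0)}, \norm{p_t(0,0)}, \norm{\ell_t(0;z)}$ are all absorbed by the initial-value hypotheses on the $\Phi$ functions, so that the resulting inequalities exactly match \eqref{eq:phi_1}--\eqref{eq:phi_3} rather than producing a weaker, non-self-consistent estimate.
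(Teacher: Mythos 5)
Your proposal is correct and follows essentially the same route as the paper: a priori comparison/Grönwall bounds on $\sqrt{\E\norm{\theta(t)}_2^2}$ and $\E\norm{\partial\theta(t)/\partial u_g(s)}$ engineered (via Cauchy--Schwarz with the $(t-s+1)^2$ weight and the Lipschitz hypotheses) to close into the defining system \eqref{eq:phi_1}--\eqref{eq:phi_3}, with the hypotheses $\Phi_{C_\theta}(0)>M$ and $\Phi_{R_\theta}(0)>M/\delta$ absorbing the constant offsets and the initial jump $\delta^{-1}\nabla_g p_t$, and the $(t-s)^2$ second-difference bound obtained from $\E\norm{\theta(t)-\theta(s)}_2^2\lesssim (t-s)^2$. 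The only cosmetic difference is that the paper dispatches $\cT_{\bar\cS\to\cS}$ by observing it coincides (up to rescaling) with the map already treated in \citet{celentano2021high}, rather than re-running the Volterra argument as you sketch.
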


Once this lemma is established, it remains to show that $\calT$ is a contraction. The rest of the proof is essentially identical to \citet[Section 5.4]{celentano2021highdimensional} and is omitted.

\begin{proof}[Proof of \cref{lem:mapping}]

    The proof proceeds along the same lines as the proof of \citet[Lemma 5.5]{celentano2021highdimensional}. We need to modify the norm evaluations to account for additional processing with functions $w_t$ and $p_t$.

    Note that the map $\calT_{\bar \calS \to \calS}$ defined above is identical to the one defined in \citet[Section 5.4]{celentano2021highdimensional}, up to rescaling. Thus, it follows immediately that $\calT_{\bar \calS \to \calS}$ maps $\bar \calS$ into $\calS_\mathrm{cont}$

    It remains to show that $\calT_{\calS \to \bar \calS}$ maps $\calS$ into $\bar\calS_\mathrm{cont}$.
    By the Lipschitz continuity of $w$ and $p$, we have
    \begin{align}
        \norm{w_t(\theta(t))}      & \leq \norm{w_t(0)}_2 + M\norm{\theta(t)}_2 \leq M(1+\norm{\theta(t)}_2) \,,                                       \\
        \norm{p_t(g(t),\theta(t))} & \leq \norm{p_t(0,0)}_2 + M(\norm{g(t)}_2 + \norm{\theta(t)}_2) \leq M(1 + \norm{g(t)}_2 + \norm{\theta(t)}_2) \,.
    \end{align}

    We first show that $\norm{\bar C_w(t,t)} \leq \Phi_{C_w}(t)$.
    By $\mathfrak{S}$, we have
    \begin{align}
        \diff{}{t} \norm{\theta(t)}_2 & \leq \norm{p_t(g(t),\theta(t))} \leq M(1 + \norm{g(t)}_2 + \norm{\theta(t)}_2)\,,                                                                   \\
        \norm{g(t)}_2                 & \leq \frac{1}{\delta} \norm{u_g(t)}_2 + \norm{\Gamma(t)} \norm{w_t(\theta(t))}_2 + \int_0^t \norm{R_\ell(t,s)} \norm{w_s(\theta(s))}_2 \de s \notag \\
                                      & \leq \frac{1}{\delta} \norm{u_g(t)}_2 + M^2(1 + \norm{\theta(t)}_2) + M\int_0^t \Phi_{R_\ell}(t-s) (1+\norm{\theta(s)}_2) \de s \,.
    \end{align}
    Combining, we get
    \begin{align}
        \diff{}{t} \norm{\theta(t)}_2 & \leq 2M^3(1 + \norm{\theta(t)}_2) + \frac{M}{\delta} \norm{u_g(t)}_2 + M^2\int_0^t \Phi_{R_\ell}(t-s) (1+\norm{\theta(s)}_2) \de s \,.
    \end{align}
    Furthermore, we have
    \begin{align}
         & \diff{}{t} \sqrt{1 + \E\norm{\theta(t)}_2^2} = \frac{\E\ab[\norm{\theta(t)}_2 \diff{}{t}\norm{\theta(t)}_2]}{\sqrt{1 + \E\norm{\theta(t)}_2^2}} \stackrel{\text{(i)}}{\leq} \sqrt{\E\ab[\ab(\diff{}{t}\norm{\theta(t)}_2)^2]} \notag \\
         & \leq \sqrt{\E\ab[\ab(2M^3(1+\norm{\theta(t)}_2) + \frac{M}{\delta} \norm{u_g(t)}_2 + M^2\int_0^t \Phi_{R_\ell}(t-s) (1+\norm{\theta(s)}_2) \de s)^2]}                                                                         \notag \\
         & \stackrel{\text{(ii)}}{\leq} \sqrt{\E\ab[\ab(4M^6 (1 + \norm{\theta(t)}_2)^2 + \frac{M^2}{\delta^2} \norm{u_g(t)}_2^2 + M^4\int_0^t (t-s+1)^2\Phi_{R_\ell}(t-s)^2 (1+\norm{\theta(s)}_2)^2 \de s)]} \notag                           \\
         & \qquad \cdot \sqrt{\ab(1 + 1 + \int_0^t (t-s+1)^{-2} \de s)}                                                                                                                                                       \notag            \\
         & \stackrel{\text{(iii)}}{\leq} \sqrt{3} \cdot \sqrt{8M^6 (1 + \E\norm{\theta(t)}_2^2) + \frac{M^2k}{\delta} \Phi_{C_\ell}(t) + 2M^4\int_0^t (t-s+1)^2\Phi_{R_\ell}(t-s)^2 (1+\E\norm{\theta(s)}_2^2) \de s} \,,
    \end{align}
    where in (i) and (ii) we used the Cauchy-Schwarz inequality, and in (iii) we used the following inequality.
    \begin{align}
        \E\norm{u_g(t)}_2^2 = \tr(\E[u_g(t)u_g(t)^\transpose]) \leq k \norm{\E[u_g(t)u_g(t)^\transpose]} = k\delta \norm{C_\ell(t,t)} \leq k\delta \Phi_{C_\ell}(t) \,.
    \end{align}
    By $\Phi_{C_\theta}(0) > M > 1 + \E\norm{\theta(0)}_2^2$ and \cref{eq:phi_3}, $1 + \E\norm{\theta(t)}_2^2 < \Phi_{C_\theta}(t)$ holds for $t \geq 0$. Thus, we have
    \begin{multline}
        \norm{\bar C_w(t,t)} = \norm{\E[w_t(\theta(t))w_t(\theta(t))^\transpose]} \leq \E\norm{w_t(\theta(t))}_2^2 \leq 2M^2(1 + \E\norm{\theta(t)}_2^2) < 2M^2\Phi_{C_\theta}(t) = \Phi_{C_w}(t) \,.
    \end{multline}

    Next, we show that $\norm{\bar R_w(t,s)} \leq \Phi_{R_w}(t-s)$.
    By $\mathfrak{S}$, we have
    \begin{align}
        \diff{}{t} \norm*{\diffp{\theta(t)}{u_g(s)}} & \leq \norm{\nabla_g p_t(g(t),\theta(t))} \norm*{\diffp{g(t)}{u_g(s)}} + \norm{\nabla_\theta p_t(g(t),\theta(t))} \norm*{\diffp{\theta(t)}{u_g(s)}} \notag                                                          \\
                                                     & \leq M \norm*{\diffp{\theta(t)}{u_g(s)}} + M \norm*{\diffp{g(t)}{u_g(s)}} \,,                                                                                                                                      \\
        \norm*{\diffp{g(t)}{u_g(s)}}                 & \leq \norm{\Gamma(t)} \norm{\nabla_\theta w_t(\theta(t))} \norm*{\diffp{\theta(t)}{u_g(s)}} + \int_0^t \norm{R_\ell(t,u)} \norm{\nabla_\theta w_{s'}(\theta(s'))} \norm*{\diffp{\theta(s')}{u_g(s)}} \de s' \notag \\
                                                     & \leq M^2 \norm*{\diffp{\theta(t)}{u_g(s)}} + M\int_0^t \Phi_{R_\ell}(t-s') \norm*{\diffp{\theta(s')}{u_g(s)}} \de s' \,.
    \end{align}
    Thus, we get
    \begin{align}
        \diff{}{t} \E \norm*{\diffp{\theta(t)}{u_g(s)}} & \leq 2M^3 \E \norm*{\diffp{\theta(t)}{u_g(s)}} + M^2 \int_0^t \Phi_{R_\ell}(t-s') \E \norm*{\diffp{\theta(s')}{u_g(s)}} \de s' \,.
    \end{align}
    By $\norm*{\diffp{\theta(t)}{u_g(t)}} \leq M/\delta < \Phi_{R_\theta}(0)$ and \cref{eq:phi_1}, $\E \norm*{\diffp{\theta(t)}{u_g(s)}} < \Phi_{R_\theta}(t-s)$ holds for $t \geq s$. Thus, we have
    \begin{multline}
        \norm{\bar R_w(t,s)} = \norm*{\E\ab[\diffp{w_t(\theta(t))}{u_g(s)}]} \leq \E\ab[\norm*{\diffp{w_t(\theta(t))}{u_g(s)}}] \leq \E\ab[\norm{\nabla_\theta w_t(\theta(t))}\norm*{\diffp{\theta(t)}{u_g(s)}}] \\
        \leq M \E \norm*{\diffp{\theta(t)}{u_g(s)}} < M \Phi_{R_\theta}(t-s) = \Phi_{R_w}(t-s) \,.
    \end{multline}

    Next, we show that $\norm{\bar C_w(t,t)-2 \bar C_w(t,s)+\bar C_w(s,s)} \leq M_{\bar \calS}(t-s)^2$. We have
    \begin{align}
        \norm{\bar C_w(t,t)-2\bar C_w(t,s)+\bar C_w(s,s)} \leq \E\norm{w_t(\theta(t)) - w_s(\theta(s))}_2^2 \leq 2M((t-s)^2 + \E\norm{\theta(t) - \theta(s)}_2^2) \,,
    \end{align}
    and
    \begin{align}
        \E\norm{\theta(t) - \theta(s)}_2^2 & = \E\norm*{\int_s^t \diff{}{t'} \theta(t') \de t'}_2^2 \leq (t-s)^2 \sup_{0\leq t \leq T} \E\norm*{\diff{}{t} \theta(t)}_2^2   \notag                                                   \\
                                           & \leq (t-s)^2 \sup_{0\leq t \leq T} 3 \ab(8M^6\Phi_{C_\theta}(t) + \frac{M^2 k}{\delta} \Phi_{C_\ell}(t) + 2M^4 \int_0^t (t-s+1)^2 \Phi_{R_\ell}(t-s)^2 \Phi_{C_\theta}(s) \de s) \notag \\
                                           & \leq (t-s)^2 \cdot 3 \ab(8M^6\Phi_{C_\theta}(T) + \frac{M^2 k}{\delta} \Phi_{C_\ell}(T) + 2M^4 T (T+1)^2 \Phi_{R_\ell}(T)^2 \Phi_{C_\theta}(T))                               \notag    \\
                                           & \eqqcolon A (t-s)^2 \,.
    \end{align}
    Setting $M_{\bar \calS} \coloneqq 2M(1+A)$, we obtain $\norm{\bar C_w(t,t)-2\bar C_w(t,s)+\bar C_w(s,s)} \leq M_{\bar \calS}(t-s)^2$.

    Finally, by the Cauchy-Schwarz inequality, we have
    \begin{align}
        \norm{\bar C_w(t,s)-\bar C_w(t,s')} \leq \sqrt{\E\norm{w_t(\theta(t))}_2^2 \cdot \E\norm{w_s(\theta(s))-w_{s'}(\theta(s'))}_2^2} \leq \sqrt{\Phi_{C_w(T)} M_{\bar \calS}} \cdot  \abs{s-s'} \,.
    \end{align}
    Thus, we have $(\bar C_w,\bar R_w) \in \bar\calS_\mathrm{cont}$.

\end{proof}

\subsection{Proof of Theorem~\ref{thm:dmft_flow}}
\label{app:proof_dmft_flow}

Again, we follow the approach of the proof of \citet[Theorem 2]{celentano2021highdimensional}. The proof proceeds as follows.
\begin{enumerate}[label=\Roman*.]
    \item Discretize the flow $\mathfrak{F}$ with step size $\eta > 0$.
    \item Map the discretized flow $\mathfrak{F}^\eta$ to an approximate message passing (AMP) iteration. Show that the state evolution of the AMP iteration is equivalent to the discretized version of the DMFT equation, $\mathfrak{S}^\eta$.
    \item Show that as $\eta \to 0$, the solution of the discretized DMFT equation $\mathfrak{S}^\eta$ converges to the unique solution of the DMFT equation $\mathfrak{S}$.
\end{enumerate}

First, we define the discretized flow $\mathfrak{F}^\eta$ as follows. For $i=0,1,\dots$, set $t_i \coloneqq i\eta$ and define
\begin{align}
    \frac{\btheta^\eta(t_{i+1}) - \btheta^\eta(t_i)}{\eta} = p_{t_i}(\bg^\eta(t_i),\btheta^\eta(t_i)) \,, \quad \bg^\eta(t_i) = \frac{1}{\delta} \bX^\transpose \ell_{t_i}(\bff^\eta(t_i);\bz) \,, \quad \bff^\eta(t_i) = \bX w_{t_i}(\btheta^\eta(t_i)) \,,
\end{align}
with $\btheta^\eta(0) = \btheta(0)$. We extend this flow to continuous time by piecewise linear interpolation. We denote by

Next, we define the discretized DMFT equation $\mathfrak{S}^\eta$ as follows.
\begin{subequations}
    \begin{align}
        \frac{\theta^\eta(t_{i+1}) - \theta^\eta(t_i)}{\eta} & = p_{t_i}(g^\eta(t_i),\theta^\eta(t_i)) \,,                                                                                                                                                         \\
        g^\eta(t_i)                                          & = \frac{u_g^\eta(t_i)}{\delta} + \Gamma^\eta(t_i) w_{t_i}(\theta^\eta(t_i)) + \eta\sum_{j=0}^{i-1} R_\ell^\eta(t_i,t_j) w_{t_j}(\theta^\eta(t_j))\,, &  & u_g \sim \normal(0,\delta C_\ell^\eta)\,, \\
        f^\eta(t_i)                                          & = u_f^\eta(t_i) + \eta \sum_{j=0}^{i-1} R_w^\eta(t_i,t_j)\ell_{t_j}(f^\eta(t_j);z) \,,                                                               &  & u_f \sim \normal(0,C_w^\eta) \,,          \\
        R_w^\eta(t_i,t_j)                                    & = \eta^{-1} \E\ab[\diffp{w_{t_i}(\theta^\eta(t_i))}{u_g^\eta(t_j)}] \,,                                                                              &  & 0 \leq j < i \,,                          \\
        R_\ell^\eta(t_i,t_j)                                 & = \eta^{-1} \E\ab[\diffp{\ell_{t_i}(f^\eta(t_i);z)}{u_f^\eta(t_j)}] \,,                                                                              &  & 0 \leq j < i \,,                          \\
        \Gamma^\eta(t_i)                                     & = \E\ab[\nabla_f\ell_{t_i}(f^\eta(t_i);z)] \,,                                                                                                                                                      \\
        C_w^\eta(t_i,t_j)                                    & = \E[w_{t_i}(\theta^\eta(t_i)) w_{t_j}(\theta^\eta(t_j))^\transpose] \,,                                                                                                                            \\
        C_\ell^\eta(t_i,t_j)                                 & = \E[\ell_{t_i}(f^\eta(t_i);z)\ell_{t_j}(f^\eta(t_j);z)^\transpose] \,.
    \end{align}
\end{subequations}
We set $R_w^\eta(t_i,t_j) = R_\ell^\eta(t_i,t_j) = 0$ for $i \leq j$. The quantities $\partial w_{t_i}(\theta^\eta(t_i))/\partial u_g^\eta(t_j)$ and $\partial \ell_{t_i}(f^\eta(t_i);z)/\partial u_f^\eta(t_j)$ are stochastic processes defined as follows.
\begin{subequations}
    \begin{align}
        \diffp{w_{t_i}(\theta^\eta(t_i))}{u_g^\eta(t_j)}                                                         & = \nabla_\theta w_{t_i}(\theta^\eta(t_i)) \diffp{\theta^\eta(t_i)}{u_g^\eta(t_j)} \,,                                                                                                 \\
        \frac{1}{\eta}\ab(\diffp{\theta^\eta(t_{i+1})}{u_g^\eta(t_j)} - \diffp{\theta^\eta(t_i)}{u_g^\eta(t_j)}) & = \nabla_g p_{t_i}(g^\eta(t_i),\theta^\eta(t_i)) \diffp{g^\eta(t_i)}{u_g^\eta(t_j)} + \nabla_\theta p_{t_i}(g^\eta(t_i),\theta^\eta(t_i)) \diffp{\theta^\eta(t_i)}{u_g^\eta(t_j)} \,, \\
        \diffp{g^\eta(t_i)}{u_g^\eta(t_j)}                                                                       & = \Gamma^\eta(t_i) \diffp{w_{t_i}(\theta^\eta(t_i))}{u_g^\eta(t_j)} + \eta\sum_{j'=j+1}^{i-1} R_\ell^\eta(t_i,t_{j'}) \diffp{w_{t_{j'}}(\theta^\eta(t_{j'}))}{u_g^\eta(t_j)} \,,      \\
        \diffp{\ell_{t_i}(f^\eta(t_i);z)}{u_f^\eta(t_j)}                                                         & = \nabla_f \ell_{t_i}(f^\eta(t_i);z) \diffp{f^\eta(t_i)}{u_f^\eta(t_j)} \,,                                                                                                           \\
        \diffp{f^\eta(t_i)}{u_f^\eta(t_j)}                                                                       & = \eta R_w^\eta(t_i,t_j) \nabla_f \ell_j(f_j;z) + \eta \sum_{j'=j+1}^{i-1} R_w^\eta(t_i,t_{j'}) \diffp{\ell_{t_{j'}}(f^\eta(t_{j'});z)}{u_f^\eta(t_j)} \,,
    \end{align}
\end{subequations}
with the initial value
\begin{align}
    \diffp{\theta^\eta(t_{i+1})}{u_g^\eta(t_i)} = \frac{\eta}{\delta} \nabla_g p_{t_i}(g^\eta(t_i),\theta^\eta(t_i)) \,.
\end{align}
Similarly to $\mathfrak{F}^\eta$, we extend this to continuous time by piecewise linear interpolation.

The three parts of the proof correspond to the following three lemmas.

\begin{lemma} \label{lem:flow_disc}
    Under the assumptions of \cref{thm:dmft_flow}, for any $\tau_1,\dots,\tau_m \in [0,T]$, we have, almost surely,
    \begin{align}
        \lim_{\eta \to 0} \limsup_{n,d\to\infty} W_2\ab(\frac{1}{d} \sum_{i=1}^d \dirac_{\theta_i(\tau_1),\dots,\theta_i(\tau_m)},\frac{1}{d} \sum_{i=1}^d \dirac_{\theta^\eta_i(\tau_1),\dots,\theta^\eta_i(\tau_m)}) & = 0 \,, \\
        \lim_{\eta \to 0} \limsup_{n,d\to\infty} W_2\ab(\frac{1}{n} \sum_{i=1}^n \dirac_{f_i(\tau_1),\dots,f_i(\tau_m),z_i},\frac{1}{n} \sum_{i=1}^n \dirac_{f^\eta_i(\tau_1),\dots,f^\eta_i(\tau_m),z_i})             & = 0 \,.
    \end{align}
\end{lemma}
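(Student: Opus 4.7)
The plan is to establish a dimension-free bound on the Euler discretization error in Frobenius norm, and then convert it into Wasserstein-2 convergence of empirical distributions via the synchronous coupling that matches indices. Define the discrepancies $\Delta\btheta(t) \coloneqq \btheta(t) - \btheta^\eta(t)$, $\Delta\bff(t) \coloneqq \bff(t) - \bff^\eta(t)$, $\Delta\bg(t) \coloneqq \bg(t) - \bg^\eta(t)$, with $\btheta^\eta, \bff^\eta, \bg^\eta$ extended to continuous time by the piecewise linear interpolation already used in the statement. Standard sub-Gaussian concentration (Bai--Yin type, valid under Assumption~\ref{ass:main}) gives $\norm{\bX}_{\mathrm{op}} \leq K(\delta)$ almost surely as $n,d\to\infty$, so we may restrict to this probability-one event. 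Under Assumption~\ref{ass:gf}, the row-wise Lipschitz maps $w_t, \ell_t, p_t$ are Lipschitz in the Frobenius norm with constant $M$, while the linear maps $\btheta \mapsto \bX\, w_t(\btheta)$ and $\bm{h} \mapsto \delta^{-1}\bX^\transpose \bm{h}$ are Lipschitz with constants $MK(\delta)$ and $K(\delta)/\delta$. Composing these, the vector field $(\btheta,t) \mapsto p_t\bigl(\tfrac{1}{\delta}\bX^\transpose \ell_t(\bX w_t(\btheta);\bz), \btheta\bigr)$ is Lipschitz in $\btheta$ with a constant $L = L(M, K(\delta), \delta)$ that does not depend on $n, d$.

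Next comes a standard Euler error analysis. For $t \in [t_i, t_{i+1}]$, writing
\begin{align*}
\diff{}{t}\Delta\btheta(t) = \bigl(p_t(\bg(t),\btheta(t)) - p_{t_i}(\bg(t_i),\btheta(t_i))\bigr) + \bigl(p_{t_i}(\bg(t_i),\btheta(t_i)) - p_{t_i}(\bg^\eta(t_i),\btheta^\eta(t_i))\bigr),
\end{align*}
the first (local-error) term is bounded in Frobenius norm by $M\bigl(\eta + \norm{\bg(t)-\bg(t_i)}_F + \norm{\btheta(t)-\btheta(t_i)}_F\bigr)$, which is $O(\eta\sqrt{d})$ once I show uniform a priori bounds on $\sup_{[0,T]}\norm{\dot\btheta(t)}_F/\sqrt{d}$ and $\sup_{[0,T]}\norm{\dot\bg(t)}_F/\sqrt{d}$ via a Gronwall estimate on the empirical flow, mirroring the proof of Lemma~\ref{lem:mapping} with $K(\delta)$ playing the role of the DMFT response bound. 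The second term is bounded by $M\norm{\Delta\bg(t_i)}_F + M\norm{\Delta\btheta(t_i)}_F \leq L\norm{\Delta\btheta(t_i)}_F$ via the Lipschitz chain above. Applying Gronwall's inequality with $\Delta\btheta(0)=0$ then yields
\begin{align*}
\sup_{t \in [0,T]} \frac{\norm{\Delta\btheta(t)}_F^2}{d} \leq C(T, M, K(\delta), \delta)\,\eta^2,
\end{align*}
and the same argument applied to $\bff = \bX w_t(\btheta)$ gives $\sup_{t \in [0,T]} \norm{\Delta\bff(t)}_F^2/n \leq C'\eta^2$.

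Finally, using the synchronous coupling that pairs indices $i$ across the two empirical measures,
\begin{align*}
W_2^2\!\left(\frac{1}{d}\sum_{i=1}^d \delta_{(\theta_i(\tau_k))_{k=1}^m},\; \frac{1}{d}\sum_{i=1}^d \delta_{(\theta_i^\eta(\tau_k))_{k=1}^m}\right) \leq \sum_{k=1}^m \frac{\norm{\Delta\btheta(\tau_k)}_F^2}{d} \leq mC\eta^2,
\end{align*}
so $\limsup_{n,d\to\infty}$ of the left-hand side is at most $mC\eta^2$, which tends to $0$ as $\eta \to 0$; the claim for $\bff$ follows analogously using $\Delta\bff$ and the corresponding $1/n$ normalization. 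The main obstacle is keeping every constant genuinely dimension-free: this requires both (i) the almost-sure bound $\norm{\bX}_{\mathrm{op}} = O(1)$, which relies crucially on the $1/d$ scaling of $\E x_{ij}^2$ so that Frobenius-norm Lipschitz constants involving $\bX$ do not grow with $d$, and (ii) uniform a priori bounds on $\norm{\dot\btheta(t)}_F/\sqrt{d}$ and $\norm{\dot\bg(t)}_F/\sqrt{d}$ along the continuous trajectory to control the Euler local error. Both are handled by a Gronwall-style argument analogous to the one in the proof of Lemma~\ref{lem:mapping}, but applied directly to the empirical flow rather than to the DMFT effective process.
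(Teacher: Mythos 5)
Your argument is correct and is essentially the same discretization-error analysis that the paper omits by deferring to \citet[Lemma 6.1]{celentano2021high}: an almost-sure operator-norm bound on $\bX$ makes the empirical vector field Lipschitz with a dimension-free constant, Gronwall controls the Euler error at rate $\norm{\Delta\btheta(t)}_F = O(\sqrt{d}\,\eta)$, and the synchronous (index-matching) coupling converts this into the stated $W_2$ bound. The only nit is that the row-wise time-Lipschitz term in your local-error display should read $M\sqrt{d}\,\eta$ rather than $M\eta$ in Frobenius norm, which you implicitly correct when you conclude the local error is $O(\eta\sqrt{d})$.
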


\begin{lemma} \label{lem:dmft_flow_disc}
    Under the assumptions of \cref{thm:dmft_flow}, let $(\theta^\eta(t),f^\eta(t))_{t=0}^T$ be the unique solution of the discretized DMFT equation $\mathfrak{S}^\eta$. For any $\tau_1,\dots,\tau_m \in [0,T]$, we have, almost surely,
    \begin{align}
        \limsup_{n,d\to\infty} W_2\ab(\frac{1}{d} \sum_{i=1}^d \dirac_{\theta^\eta_i(\tau_1),\dots,\theta^\eta_i(\tau_m)},\sfP(\theta^\eta(\tau_1),\dots,\theta^\eta(\tau_m))) & = 0 \,, \\
        \limsup_{n,d\to\infty} W_2\ab(\frac{1}{n} \sum_{i=1}^n \dirac_{f^\eta_i(\tau_1),\dots,f^\eta_i(\tau_m),z_i},\sfP(f^\eta(\tau_1),\dots,f^\eta(\tau_m),z))               & = 0 \,.
    \end{align}
\end{lemma}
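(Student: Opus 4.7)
The plan is to realize the discretized flow $\mathfrak{F}^\eta$ as a long-memory approximate message passing (AMP) iteration whose state-evolution recursion coincides exactly with $\mathfrak{S}^\eta$, and then to invoke the AMP state-evolution theorem in the sub-Gaussian universality regime to conclude joint $W_2$ convergence of the empirical distributions at finitely many times. This is precisely the strategy the paper announces at the start of Appendix~\ref{app:proof_dmft_flow}: match the flow to AMP, match the state evolution to $\mathfrak{S}^\eta$.

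Concretely, I would set up an AMP iteration on the two sequences $\{w_{t_i}(\btheta^\eta(t_i))\}_{i\geq 0}\subset\reals^{d\times k}$ and $\{\ell_{t_i}(\bff^\eta(t_i);\bz)\}_{i\geq 0}\subset\reals^{n\times k}$, connected by alternating applications of $\bX$ and $\delta^{-1}\bX^\transpose$, with the parameter $\btheta^\eta$ carried forward by the deterministic update $p_{t_i}$. At each step I subtract the standard long-memory Onsager corrections: on the $n$-side, $\eta\sum_{j<i}R_w^\eta(t_i,t_j)\ell_{t_j}(\bff^\eta(t_j);\bz)$, and on the $d$-side, $\Gamma^\eta(t_i)w_{t_i}(\btheta^\eta(t_i))+\eta\sum_{j<i}R_\ell^\eta(t_i,t_j)w_{t_j}(\btheta^\eta(t_j))$. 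These are engineered so that, in the AMP state evolution, the Gaussian components reproduce the noise processes $u_f^\eta,u_g^\eta$ in $\mathfrak{S}^\eta$, while the Onsager terms reproduce the deterministic parts of $\bff^\eta,\bg^\eta$; the correlation and response functions that come out of the state evolution are, by construction, exactly $C_w^\eta,C_\ell^\eta,R_w^\eta,R_\ell^\eta,\Gamma^\eta$. A long-memory AMP state-evolution theorem applied to this iteration then gives joint $W_2$ convergence of the empirical laws at the grid times $\tau\in\{t_i\}$; the extension to arbitrary $\tau_1,\dots,\tau_m\in[0,T]$ is immediate because both $\mathfrak{F}^\eta$ and $\mathfrak{S}^\eta$ are piecewise-linear in $t$ and $p_{t}$ is Lipschitz by Assumption~\ref{ass:gf}.

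The main obstacle is securing an AMP state-evolution theorem sufficiently general to cover this setting: long-memory, vector-valued ($k$-component) iterates, joint Lipschitz nonlinearities $\ell_t,w_t,p_t$, and i.i.d.\ sub-Gaussian (not necessarily Gaussian) design matrices as required by Assumption~\ref{ass:main}. For the first three features I would adapt the long-memory AMP machinery of \citet{celentano2021high}, which is already engineered for smooth multi-dimensional iterations; the additional row-wise post-processing through $w_t$ and $p_t$ (the generalization relative to their setup) enters only through the Lipschitz Jacobians $\nabla_\theta w_t$ and $\nabla_{(g,\theta)}p_t$, both controlled under Assumption~\ref{ass:gf}, and through the $a~priori$ bounds established in Lemma~\ref{lem:mapping}. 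For universality beyond Gaussian $\bX$ I would invoke an existing universality result for AMP with i.i.d.\ sub-Gaussian entries. A secondary bookkeeping task is identifying the AMP derivative iterates with the stochastic derivatives $\partial w_{t_i}(\theta^\eta(t_i))/\partial u_g^\eta(t_j)$ and $\partial \ell_{t_i}(f^\eta(t_i);z)/\partial u_f^\eta(t_j)$ that appear in $\mathfrak{S}^\eta$; this is just the chain rule once the AMP-to-DMFT correspondence is fixed, but it must be checked to ensure that the Onsager coefficients used in Step~2 genuinely equal the $R_w^\eta,R_\ell^\eta,\Gamma^\eta$ produced by $\mathfrak{S}^\eta$, closing the self-consistency loop.
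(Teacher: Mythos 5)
Your proposal matches the paper's proof: the paper realizes the discretized flow as a long-memory AMP iteration with $G_i = w_{t_i}(\btheta^\eta(t_i))$ and $F_i = \ell_{t_i}(\bff^\eta(t_i);\bz)$, invokes the sub-Gaussian universality result of \citet[Theorem 2.21]{wang2024universality} to get almost-sure $W_2$ convergence of the empirical laws to the state-evolution Gaussians, and then verifies by induction that the state-evolution recursion coincides with $\mathfrak{S}^\eta$, including the identification of the Onsager coefficients $\zeta_{i,j},\xi_{i,j}$ with $\eta R_w^\eta$, $\eta R_\ell^\eta$, and $\Gamma^\eta$. The only detail you leave unnamed is the specific universality theorem, but the structure and all key steps are the same.
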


\begin{lemma} \label{lem:dmft_disc}
    Under the assumptions of \cref{thm:dmft_flow}, let $(\theta^\eta(t),f^\eta(t))_{t=0}^T$ and $(\theta(t),f(t))_{t=0}^T$ be the unique solution of $\mathfrak{S}^\eta$ and $\mathfrak{S}$, respectively. For any $\tau_1,\dots,\tau_m \in [0,T]$, we have,
    \begin{align}
        \lim_{\eta \to 0} W_2\ab(\sfP(\theta^\eta(\tau_1),\dots,\theta^\eta(\tau_m)),\sfP(\theta(\tau_1),\dots,\theta(\tau_m))) & = 0 \,, \\
        \lim_{\eta \to 0} W_2\ab(\sfP(f^\eta(\tau_1),\dots,f^\eta(\tau_m),z),\sfP(f(\tau_1),\dots,f(\tau_m),z))                 & = 0 \,.
    \end{align}
\end{lemma}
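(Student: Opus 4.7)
The plan is to treat $\mathfrak{S}^\eta$ as a consistent Euler-type discretization of the fixed-point equation $\mathfrak{S}$ and to transfer its convergence at the level of the kernels $(C_\ell,R_\ell,\Gamma,C_w,R_w)$ into $W_2$ convergence of the driven processes via a common-noise coupling. First, I would embed the discrete kernels into continuous time by piecewise constant interpolation. Paralleling the bookkeeping of Lemma~\ref{lem:mapping} in discrete form, one verifies that the interpolations obey the same quantitative bounds $\Phi_{R_w},\Phi_{R_\ell},\Phi_{C_w},\Phi_{C_\ell}$ from Theorem~\ref{thm:dmft_sol}, with constants independent of $\eta$. This uniform control in $\cS$ and $\bar\cS$ is what makes the subsequent limit manipulations meaningful.

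Second, I would couple the two systems by driving them with the same underlying Gaussian fields $u_g$ and $u_f$, adjusted so that their covariances match $C_\ell,C_w$ for the continuous system and $C_\ell^\eta,C_w^\eta$ for the discrete one (such an adjustment costs only an extra error controlled by the discrepancy of the covariances). Writing $\Delta_\theta(t)=\theta^\eta(t)-\theta(t)$ and $\Delta_f(t)=f^\eta(t)-f(t)$, the Lipschitz continuity of $p,w,\ell$ from Assumption~\ref{ass:gf} together with a stepwise Gronwall bound yields an estimate of the form $\E\norm{\Delta_\theta(t)}_2^2+\E\norm{\Delta_f(t)}_2^2 \leq C\bigl(\eta + \int_0^t D^\eta(s)\,\de s\bigr)$, where $D^\eta(s)$ aggregates the kernel discrepancies $\norm{C_\ell^\eta-C_\ell}_\infty + \norm{R_\ell^\eta-R_\ell}_{L^1} + \norm{\Gamma^\eta-\Gamma}_\infty + \norm{C_w^\eta-C_w}_\infty + \norm{R_w^\eta-R_w}_{L^1}$ up to time $s$. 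The $O(\eta)$ slack absorbs the Euler local truncation error and the error from replacing time integrals by Riemann sums on the grid.

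Third, since the kernels are themselves expectations of Lipschitz functionals of the processes and of the sensitivity processes $\partial\theta^\eta/\partial u_g^\eta$, etc., the kernel differences are reciprocally bounded by the process and sensitivity differences. Writing the analogous Gronwall bound for the sensitivity equations, which are linear with coefficients uniformly bounded by $M$ thanks to Assumption~\ref{ass:gf}, closes this self-consistent loop and mirrors exactly the contraction step for the map $\cT$ in the proof of Theorem~\ref{thm:dmft_sol}. One concludes that both kernel and process errors vanish as $\eta\to 0$ uniformly on $[0,T]$, with rate $O(\eta^{1/2})$. Since $L^2$ closeness of the processes at any finite set of times $\tau_1,\dots,\tau_m$ upgrades to $W_2$ closeness of their joint laws, the lemma follows.

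The main obstacle is the bookkeeping for the discrete variational derivatives that define $R_w^\eta$ and $R_\ell^\eta$: one must show that the finite-difference sensitivity processes obtained from $\mathfrak{S}^\eta$ converge to the continuous-time sensitivity processes of $\mathfrak{S}$. This is not automatic from convergence of $\theta^\eta,f^\eta$ themselves, but is a routine consequence of propagating the Lipschitz-in-noise estimates through the Euler iteration and invoking the uniform $\Phi$-bounds already established, so that the coupled Gronwall argument for processes and sensitivities can be closed simultaneously.
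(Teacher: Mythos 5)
Your proposal is correct and follows essentially the same route as the paper, which omits this proof by deferring to \citet[Lemma 6.3]{celentano2021high}: that argument is exactly the coupled comparison of the discretized and continuous self-consistent systems you describe, i.e.\ uniform $\Phi$-bounds independent of $\eta$, a common-noise coupling whose cost is controlled by the covariance discrepancies, and a Gronwall loop closed simultaneously for the processes, the kernels, and the sensitivity processes defining $R_w^\eta,R_\ell^\eta$. I see no gap.
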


Proofs of \cref{lem:flow_disc,lem:dmft_disc} are essentially identical to those of \citet[Lemmas 6.1 and 6.3]{celentano2021highdimensional} and thus omitted.

Here, we prove \cref{lem:dmft_flow_disc}. We prove a slightly stronger convergence result (almost sure 2-Wasserstein convergence) than \citet[Lemma 6.2]{celentano2021highdimensional} (weak convergence in probability) by utilizing a recent universality result \citep{wang2024universality}.

\begin{proof}[Proof of \cref{lem:dmft_flow_disc}]

    In the following, we omit the superscript $\eta$ for simplicity, as we only consider the discretized systems $\mathfrak{F}^\eta$ and $\mathfrak{S}^\eta$.

    \item
    \paragraph{Reduction to AMP.}
    We consider the following AMP iteration. For a sequence of Lipschitz functions $F_i\colon\reals^{k(i+1)+1}\to\reals^m$, $G_i\colon\reals^{k(i+1)}\to\reals^k\;(i=0,1,\dots)$, generate a sequence of matrices $\ba_{i+1}\in\reals^{d \times k},\bb_i \in\reals^{n\times k}\;(i\geq 0)$ as follows.
    \begin{align}
        \ba_{i+1} & = \bX^\transpose F_i(\bb_0,\dots,\bb_i;\bz) - \delta \sum_{j=0}^i G_j(\ba_1,\dots,\ba_j;\btheta(0)) \xi_{i,j}\,, \\
        \bb_i     & = \bX G_i(\ba_1,\dots,\ba_i;\btheta(0)) - \sum_{j=0}^{i-1} F_j(\bb_0,\dots,\bb_j;\bz) \zeta_{i,j}\,,
    \end{align}
    with the initial value $G_0(\btheta(0))=\btheta(0)$. Here, $F_i,G_i$ are applied row-wise. The matrices $\{\xi_{i,j}\}_{0\leq j\leq i},\{\zeta_{i,j}\}_{0\leq j\leq i-1} \subset\reals^{k\times k}$ are defined by
    \begin{align}
        \zeta_{i,j} & = \E\ab[\diffp{}{\bar u_g(j+1)} G_i(\bar u_g(1),\dots,\bar u_g(i);\theta(0))] \,, &  & 0 \leq j < i \,,    \\
        \xi_{i,j}   & = \E\ab[\diffp{}{\bar u_f(j)} F_i(\bar u_f(0),\dots,\bar u_f(i);z)] \,,           &  & 0 \leq j \leq i \,,
    \end{align}
    where $(\bar u_g(i+1),\bar u_f(i))_{i\geq 0}$ is a sequence of centered Gaussian vectors in $\reals^k$ with covariance
    \begin{align}
        \E[\bar u_f(i)\bar u_f(j)^\transpose]     & = \E[G_i(\bar u_g(1),\dots,\bar u_g(i);\theta(0))G_j(\bar u_g(1),\dots,\bar u_g(j);\theta(0))^\transpose]\,, &  & 0 \leq j \leq i\,, \\
        \E[\bar u_g(i+1)\bar u_g(j+1)^\transpose] & = \delta \E[F_i(\bar u_f(0),\dots,\bar u_f(i);z)F_j(\bar u_f(0),\dots,\bar u_f(j);z)^\transpose]\,,          &  & 0 \leq j \leq i\,.
    \end{align}

    This AMP iteration can be related to $\btheta(t_i)$ and $\bff(t_i)$ by taking $F_i$ and $G_i$ as follows.
    \begin{align}
        G_i(\ba_1,\dots,\ba_i;\btheta(0)) & = w_{t_i}(\btheta(t_i)) \,      \\
        F_i(\bb_0,\dots,\bb_i;\bz)        & = \ell_{t_i}(\bff(t_i);\bz) \,.
    \end{align}
    We can show by induction that the right-hand sides of the above equations are Lipschitz functions of $\ba_i,\btheta(0)$ and $\bb_i,\bz$, respectively, as below.
    \begin{align}
        \btheta(t_i) & = \btheta(t_{i-1}) + \eta p_{t_{i-1}}\ab(\frac{1}{\delta} \bX^\transpose \ell_{t_{i-1}}(\bff(t_{i-1});\bz),\btheta(t_{i-1}))                     \notag                        \\
                     & = \btheta(t_{i-1}) + \eta p_{t_{i-1}}\ab(\frac{1}{\delta} \bX^\transpose F_{i-1}(\bb_0,\dots,\bb_{i-1};\bz),\btheta(t_{i-1}))                     \notag                       \\
                     & = \btheta(t_{i-1}) + \eta p_{t_{i-1}}\ab(\frac{\ba_i}{\delta} + \sum_{j=0}^{i-1} G_j(\ba_1,\dots,\ba_j;\btheta(0)) \xi_{i-1,j},\btheta(t_{i-1}))\,,                            \\
        \bff(t_i)    & = \bX w_{t_i}(\btheta(t_i)) \notag                                                                                                                                             \\
                     & = \bX G_i(\ba_1,\dots,\ba_i;\btheta(0))                                                                                                                                 \notag \\
                     & = \bb_i + \sum_{j=0}^{i-1} F_j(\bb_0,\dots,\bb_j;\bz)\zeta_{i,j} \,.
    \end{align}

    By \citet[Theorem 2.21]{wang2024universality}, for all second-order pseudo-Lipschitz functions $\psi,\tilde\psi\colon\reals^{k(K+1)}\to\reals$, we have, almost surely,
    \begin{align}
        \lim_{n,d\to\infty} \frac{1}{d}\sum_{j=1}^d \psi((\ba_1)_j,\dots,(\ba_K)_j;\theta_j(0)) & = \E[\psi(\bar u_g(1),\dots,\bar u_g(K);\theta(0))]\,, \\
        \lim_{n,d\to\infty} \frac{1}{n}\sum_{j=1}^n \tilde\psi((\bb_0)_j,\dots,(\bb_K)_j;z_j)   & = \E[\tilde\psi(\bar u_f(0),\dots,\bar u_f(K);z)]\,.
    \end{align}
    Since $\btheta(i)$ is a Lipschitz function of $\ba_1,\dots,\ba_i,\btheta(0)$, we can take some Lipschitz function $h_\theta$ such that $\btheta(i) = h_\theta(\ba_1,\dots,\ba_i;\btheta(0))$. Then, we define $\bar\theta(i)\coloneqq h_\theta(\bar u_g(1),\dots,\bar u_g(i);\theta(0))$. Similarly, we define $\bar f(i) = h_f(\bar u_f(0),\dots,\bar u_f(i);z)$ where $h_f$ is such that $\bff(i) = h_f(\bb_0,\dots,\bb_i;\bz)$.
    Since a composition of Lipschitz functions is again Lipschitz, we have
    \begin{align}
        \lim_{n,d\to\infty} \frac{1}{d}\sum_{j=1}^d \psi(\theta_j(1),\dots,\theta_j(K);\theta(0)_j) & = \E[\psi(\bar \theta(1),\dots,\bar \theta(K);\theta(0))]\,, \\
        \lim_{n,d\to\infty} \frac{1}{n}\sum_{j=1}^n \tilde\psi(f_j(0),\dots,f_j(K);z_j)             & = \E[\tilde\psi(\bar f(0),\dots,\bar f(K);z)]\,.
    \end{align}
    It then follows that
    \begin{align}
        \lim_{n,d\to\infty} W_2\ab(\frac{1}{d}\sum_{j=1}^d \dirac_{\theta(0)_j,\dots,\theta_j(K)},\sfP(\bar \theta(0),\dots,\bar \theta(K))) & = 0 \,, \\
        \lim_{n,d\to\infty} W_2\ab(\frac{1}{n}\sum_{j=1}^n \dirac_{f_j(0),\dots,f_j(K),z_j},\sfP(\bar f(0),\dots,\bar f(K),z))               & = 0 \,.
    \end{align}

    \item
    \paragraph{Mapping the State Evolution to DMFT.}
    We map the state evolution variables $\bar \theta(i),\bar f(i)$ to the DMFT variables $\theta(t_i),f(t_i)$ for $\mathfrak{S}^\eta$.
    We show by induction on $r$ that the unique solution of $\mathfrak{S}^\eta$ is given as follows.
    \begin{align}
        (\theta(t_1),\dots,\theta(t_r)) & \eqdist (\bar \theta(1),\dots,\bar \theta(r)) \,,                                 \label{eq:dmft_se_theta}                                                       \\
        (f(t_0),\dots,f(t_r))           & \eqdist (\bar f(0),\dots,\bar f(r)) \,,                                 \label{eq:dmft_se_f}                                                                     \\
        R_w(t_i,t_j)                    & = \zeta_{i,j}/\eta \,,                                                                                     &  & 0 \leq j < i \leq r \,, \label{eq:dmft_se_Rw}    \\
        R_\ell(t_i,t_j)                 & = \xi_{i,j}/\eta\,,                                                                                        &  & 0 \leq j < i \leq r \,,    \label{eq:dmft_se_Rl} \\
        \Gamma(t_i)                     & = \xi_{i,i}\,,                                                                                             &  & 0 \leq i \leq r \,. \label{eq:dmft_se_Gamma}
    \end{align}

    For $r=0$, we have $\theta(0) \eqdist \bar \theta(0)$, and
    \begin{align}
        f(t_0)      & = u_f(t_0) \eqdist \bar u_f(0) = \bar f(0) \eqdist \normal(0,\E[\theta(0)\theta(0)^\transpose])\,,                                            \\
        \Gamma(t_0) & = \E[\nabla_f \ell_{t_0}(f(t_0);z)]  = \E[\nabla_f \ell_{t_0}(\bar f(0);z)] = \E\ab[\diffp{}{\bar u_f(0)} F_0(\bar u_f(0);z)] = \xi_{0,0} \,.
    \end{align}
    Next, we assume that the hypothesis holds for $r$, and show that equations \eqref{eq:dmft_se_theta}--\eqref{eq:dmft_se_Gamma} hold for $r+1$.

    First, we check \eqref{eq:dmft_se_theta} and \eqref{eq:dmft_se_Rw}.
    For $0 \leq j \leq i \leq r$, we have
    \begin{align}
        \E[\bar u_g(i+1)\bar u_g(j+1)^\transpose] & = \delta \E[F_i(\bar u_f(0),\dots,\bar u_f(i);z)F_j(\bar u_f(1),\dots,\bar u_f(j);z)^\transpose] \notag             \\
                                                  & = \delta \E[\ell_{t_i}(\bar f(i);z)\ell_{t_j}(\bar f(j);z)]                                              \notag     \\
                                                  & = \delta \E[\ell_{t_i}(f(t_i);z)\ell_{t_j}(f(t_j);z)]                                                        \notag \\
                                                  & = \delta C_\ell(t_i,t_j) \,,
    \end{align}
    and therefore $(u_g(t_0),\dots,u_g(t_r))\eqdist(\bar u_g(1),\dots,\bar u_g(r+1))$ holds.

    By $\mathfrak{S}^\eta$,
    \begin{align}
        \theta(t_{r+1}) & = \theta(r) + \eta p_{t_r}(g(t_r),\theta(t_r))\,,                                                                                                          \\
        g(t_r)          & = \frac{u_g(t_r)}{\delta} + \Gamma(t_r) w_{t_r}(\theta(t_r)) + \eta \sum_{j=0}^{r-1} R_\ell(t_r,t_j) w_{t_j}(\theta(t_j))                           \notag \\
                        & = \frac{u_g(t_r)}{\delta} + \xi_{r,r} w_{t_r}(\theta(t_r)) + \sum_{j=0}^{r-1} \xi_{r,j} w_{t_j}(\theta(t_j)) \notag                                        \\
                        & = \frac{u_g(t_r)}{\delta} + \sum_{j=0}^{r} \xi_{r,j} w_{t_j}(\theta(j)) \,.
    \end{align}
    By the state evolution,
    \begin{align}
        \bar \theta(r+1) & = \bar \theta(r) + \eta p_{t_r}(\bar g(r),\bar \theta(r))\,,                                                       \\
        \bar g(r)        & \coloneqq \frac{\bar u_g(r+1)}{\delta} + \sum_{j=0}^r G_j(\bar u_g(1),\dots,\bar u_g(j);\theta(0))\xi_{r,j} \notag \\
                         & = \frac{\bar u_g(r+1)}{\delta} + \sum_{j=0}^r \xi_{r,j} w_{t_j}(\bar \theta(j)) \,.
    \end{align}
    Comparing these two equations, we get $\theta(t_{r+1}) \eqdist \bar \theta(r+1)$.
    A similar argument shows that, for $0\leq j \leq r$, $\diffp{}{u_g(t_j)} w_{t_{r+1}}(\theta(t_{r+1})) \eqdist \diffp{}{\bar u_g(j+1)} w_{t_{r+1}}(\bar \theta(r+1))$.
    Thus, it follows that $R_w(t_{r+1},t_j)=\zeta_{r+1,j}/\eta$.

    Next, we check \eqref{eq:dmft_se_f}, \eqref{eq:dmft_se_Rl}, and \eqref{eq:dmft_se_Gamma}.
    For $0 \leq j \leq i \leq r+1$, we have
    \begin{align}
        \E[\bar u_f(i)\bar u_f(j)^\transpose] & = \E[G_i(\bar u_g(1),\dots,\bar u_g(i);\theta(0))G_j(\bar u_g(1),\dots,\bar u_g(j);\theta(0))^\transpose] \notag             \\
                                              & = \E[w_{t_i}(\bar \theta(i))w_{t_j}(\bar \theta(j))]                                                             \notag      \\
                                              & = \E[w_{t_i}(\theta(t_i))w_{t_j}(\theta(t_j))]                                                                        \notag \\
                                              & = C_w(t_i,t_j)
    \end{align}
    and therefore $(u_f(t_0),\dots,u_f(t_{r+1}))\eqdist (\bar u_f(0),\dots,\bar u_f(r+1))$ holds.

    By $\mathfrak{S}^\eta$,
    \begin{align}
        f(t_{r+1}) & = u_f(t_{r+1}) +\eta \sum_{j=0}^r R_w(t_{r+1},t_j)\ell_{t_j}(f(t_j);z) \notag \\
                   & = u_f(t_{r+1}) + \sum_{j=0}^r \zeta_{r+1,j}\ell_{t_j}(f(t_j);z)\,,
    \end{align}
    By the state evolution,
    \begin{align}
        \bar f(r+1) & = \bar u_f(r+1) + \sum_{j=0}^r F_j(\bar u_f(0),\dots,\bar u_f(j);z) \zeta_{r+1,j} \notag \\
                    & = \bar u_f(r+1) + \sum_{j=0}^r \zeta_{r+1,j} \ell_{t_j}(\bar f(j);z) \,.
    \end{align}
    Comparing these two equations, we get $f(t_{r+1}) \eqdist \bar f(r+1)$.
    A similar argument shows that, for $0\leq j \leq r+1$, $\diffp{}{u_f(t_j)} \ell_{t_{r+1}}(f(t_{r+1});z) \eqdist \diffp{}{\bar u_f(j)} \ell_{t_{r+1}}(\bar f(r+1);z)$.
    Thus, it follows that $R_\ell(t_{r+1},t_j)=\xi_{r+1,j}/\eta,\Gamma(t_{r+1})=\xi_{r+1,r+1}$.

\end{proof}

\subsection{Proof of Corollary~\ref{cor:dmft_dln}}
\label{app:proof_dmft_dln}

The gradient flow for regression using truncated DLNs can be mapped to our general flow $\mathfrak{F}$ by setting $k=3$ and taking $\btheta(t), w_t, p_t, \ell_t$ as follows.
\begin{align}
    \btheta(t)           & = (\bu(t),\bv(t),\bw^*) \,,                           \\
    w_t(u,v,w^*)         & = \ab(\frac{1}{L}(\eta_M(u^L)-\eta_M(v^L)),w^*,0) \,, \\
    p_t(g,\_,\_;u,v,w^*) & = \ab(p_u(g,u,v),p_v(g,u,v),0) \,,                    \\
    \ell_t(f,f^*,\_;\xi) & = (f-f^*-\xi,0,0)  \,,
\end{align}
where the underscore $\_$ indicates unused entries.

It is easy to check \cref{ass:main,ass:gf} for our specific choices of functions. Applying \cref{thm:dmft_sol,thm:dmft_flow}, we obtain the following DMFT equation.
\begin{align}
    \diff{}{t}\pmat{u(t)                                            \\ v(t) \\ w^*} & = \pmat{p_u(g(t),u(t),v(t)) \\ p_v(g(t),u(t),v(t)) \\ 0} \,, \\
    \pmat{g(t)                                                      \\ \_}        & = \frac{1}{\delta} \pmat{z_g(t)\\ z_{g*}(t)} + \Gamma(t) \pmat{w(t) \\ w^*} + \int_0^t R_\ell(t,s) \pmat{w(s) \\ w^*} \de s \,, && \pmat{z_g \\ z_{g*}} \sim \GP(0,\delta C_\ell)\,,             \\
    \pmat{f(t)                                                      \\ f_*} & = \pmat{z_f(t) \\ z_{f*}(t)} + \int_0^t R_w(t,s) \pmat{f(t)-f_*-\xi\\ 0} \de s \,,  && \pmat{z_f \\ z_{f*}} \sim \GP(0,C_w) \,, \\
    w(t) & \coloneqq \frac{1}{L}(\eta_M(u(t)^L)-\eta_M(v(t)^L)) \,,
\end{align}
where
\begin{gather}
    \begin{gathered}
        R_w(t,s) = \pmat{R_w(t,s) & 0 \\ 0 & 0} \,, \quad R_\ell(t,s) = \pmat{R_f(t,s) & R_{f*}(t,s) \\ 0 & 0} \,, \quad \Gamma(t) = \pmat{1 & -1 \\ 0 & 0}  \,, \\
        C_w(t,s) = \pmat{C_w(t,s) & m(t) \\ m(s) & \rho^2} \,, \; C_\ell(t,s) = \pmat{C_f(t,s) & 0 \\ 0 & 0} \,,
    \end{gathered} \\
    \begin{gathered}
        R_w(t,s) = \E\ab[\diffp{w(t)}{z_g(s)}] \,, \quad R_f(t,s) = \E\ab[\diffp{(f(t)-f^*-\xi)}{z_f(s)}] \,, \quad R_{f*}(t,s) = \E\ab[\diffp{(f(t)-f^*-\xi)}{z_{f*}(s)}] \,, \\
        C_w(t,s) = \E[w(t)w(s)] \,, \quad m(t) = \E[w(t) w^*] \,, \quad C_f(t,s) = \E[(f(t)-f^*-\xi)(f(s)-f^*-\xi)] \,,
    \end{gathered}
\end{gather}
where $\rho^2 \coloneqq \E[w^{*2}]$ and we omitted unused rows and columns.

We have $R_{f*}(t,s) = -R_f(t,s)$. Resetting $f(t) - f^*$, $z_f(t) - z_{f*}(t)$ as $f(t)$, $z_f(t)$, respectively, and unfolding the expressions, and flipping the signs of $R_w$ and $R_f$, we have the following.
\begin{subequations}
    \begin{align}
        \diff{}{t} u(t) & = p_u(g(t),u(t),v(t)) \,,                                                                                                                                   \\
        \diff{}{t} v(t) & = p_v(g(t),u(t),v(t)) \,,                                                                                                                                   \\
        g(t)            & = \frac{z_g(t)}{\delta} + w(t) - w^* - \int_0^t R_f(t,s) (w(s) - w^*) \de s \,, &  & z_g \sim \GP(0,\delta C_f) \,, \quad w^* \sim P_* \,,                  \\
        f(t)            & = z_f(t) - \int_0^t R_w(t,s) (f(s) - \xi) \de s \,,                             &  & z_f \sim \GP(0,C_w) \,, \quad \xi \sim P_\xi \,, \label{eq:f_rigorous} \\
        R_w(t,s)        & = -\E\ab[\diffp{w(t)}{z_g(s)}] \,,                                                                                                                          \\
        R_f(t,s)        & = -\E\ab[\diffp{f(t)}{z_f(s)}] \,,                                                                                                                          \\
        C_w(t,s)        & = \E[(w(t) - w^*) (w(s) - w^*)] \,,                                                                                                                         \\
        C_f(t,s)        & = \E[(f(t)-\xi)(f(s)-\xi)] \,.
    \end{align}
\end{subequations}

This concludes the proof.

\section{DETAILS OF NUMERICAL EXPERIMENTS}
\label{app:numerical}

All experiments were conducted using Python on a single Apple MacBook Pro with an 8-core CPU and 16 GB of RAM.

\subsection{Experiments with Non-Gaussian Data}

\begin{figure*}[t]
    \begin{center}
        \begin{subfigure}{0.49\textwidth}
            \includegraphics[width=\textwidth]{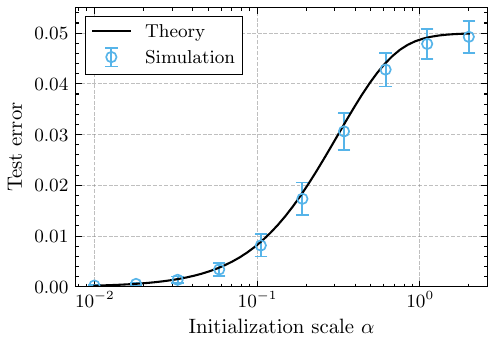}
            \caption{Fixed points.}
        \end{subfigure}
        \begin{subfigure}{0.49\textwidth}
            \includegraphics[width=\textwidth]{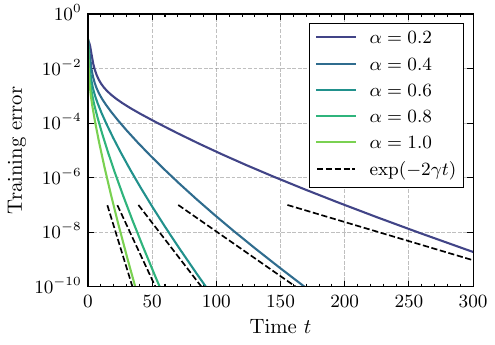}
            \caption{Convergence rates of the training error.}
        \end{subfigure}
        \caption{Fixed points and convergence rates of the loss on binary data with $\delta = 0.5$. Theoretical predictions are shown in black and agree well with the numerical simulation, thus highlighting the universality of our result with respect to the data distribution.}
        \label{fig:experiment_binary}
    \end{center}
\end{figure*}

To illustrate the universality of our theoretical result, we conducted experiments with binary features $x_{\mu i} \sim \Unif(\{\pm1/\sqrt{d}\})$. Other experimental setups are the same as the experiments on Gaussian data described in \cref{sec:numerical}. Fixed points and convergence rates are shown in \cref{fig:experiment_binary}, showing good agreement with the theoretical prediction.

\subsection{Experiments with Real-World Data}

\begin{figure}[t]
    \begin{center}
        \includegraphics[width=0.5\textwidth]{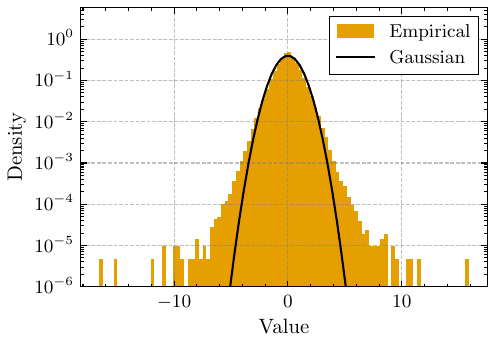}
        \caption{The empirical distribution of the whitened gene expression data. It has heavier tails than the standard Gaussian distribution shown in black.}
        \label{fig:empirical_gene}
    \end{center}
\end{figure}

We use a gene expression dataset \citep{fiorini2016gene} (a subset of \citet{ellrott2013tcga}) consisting of 801 instances and 20531 features. We whiten the entire dataset using principal component analysis and randomly sample subsets of samples and features to obtain $n = 100$ times  $d = 200$ data matrix. Columns are normalized to have mean zero and variance $1/d$. The empirical distribution of whitened features is shown in \cref{fig:empirical_gene}.

The results on the convergence rate quantitatively deviate from the theoretical prediction as illustrated in \cref{fig:convergencerate_real}. This is most likely due to a finite sample size effect. We have whitened the dataset using $n=801$ samples, instead of ideal $n=\infty$ samples. Since the matrix is whitened using finite samples, the spectrum of the empirical covariance matrix for $n=100$ samples concentrates closer to one, and the minimum eigenvalue of the empirical covariance matrix becomes larger. Since the convergence rate is closely related to the minimum eigenvalue of the sample covariance, as suggested by the $\alpha \to \infty$ analysis of the convergence rate in \cref{app:convergence_rate}, the finite sample size effect slightly accelerates the dynamics, resulting in the deviation from the theoretical prediction in \cref{fig:convergencerate_real}.

\end{document}